\newcommand{\dt}{\mathtt{t}}
\tikzstyle{arg}=[draw,circle,fill=gray!15,inner sep=1pt,minimum size=.5cm]
\tikzstyle{targ}=[inner sep=1pt,minimum size=.5cm]
\tikzstyle{newarg}=[draw,circle,inner sep=1pt,minimum size=.5cm]
\tikzstyle{dt}=[inner sep=1pt,minimum size=0.2cm,font=\small,outer sep=-.5pt]
\tikzstyle{attack}=[->,left,thick,>=stealth]
\tikzstyle{abaarg}=[rectangle, draw, minimum width=15pt, minimum height=15pt]
\tikzstyle{IArg2}=[draw,ellipse,inner sep=2pt,minimum size=.5cm]
\tikzstyle{IArg1}=[ellipse split,draw,inner sep =2]
\newcommand{\asms}{asms}
\newcommand{\theory}{\mathit{Th}}
\newtheorem{theorem}{Theorem}[section]
\newtheorem{lemma}[theorem]{Lemma}
\newtheorem{proposition}[theorem]{Proposition}
\newtheorem{corollary}[theorem]{Corollary}
\newtheorem{definition}[theorem]{Definition}
\newtheorem{example}[theorem]{Example}
\newtheorem{remark}[theorem]{Remark}
\newtheorem{construction}{Construction}
\newcommand{\stable}{{\mathit{stb}}}
\newcommand{\stb}{\stable}
\newcommand{\adm}{\mathit{ad}}
\newcommand{\prf}{\mathit{pr}}
\newcommand{\pref}{\mathit{pr}}
\newcommand{\grd}{\mathit{gr}}
\newcommand{\com}{\mathit{co}}
\newcommand{\comp}{\mathit{co}}
\newcommand{\cf}{\mathit{cf}}
\newcommand{\contrary}[1]{\overline{#1}}
\newcommand{\contraryempty}{\contrary{\phantom{a}}}
\newcommand{\cl}{\mathit{cl}}
\newcommand{\args}{\ensuremath{A}}
\newcommand{\attacker}{\ensuremath{\mathrm{Att}}}
\newcommand{\supporter}{\ensuremath{\mathrm{Sup}}}
\newcommand{\conc}{\mathit{conc}}
\newcommand{\CF}{\mathcal{F}}
\newcommand{\CG}{\mathcal{G}}
\newcommand{\HH}{\mathcal{H}}
\newcommand{\SigmaP}[1]{\ComplexityFont{\Sigma}_{#1}^{\P}}
\newcommand{\PiP}[1]{\ComplexityFont{\Pi}_{#1}^{\P}}
\newcommand{\Cred}{\textit{Cred}}
\newcommand{\Skept}{\textit{Skept}}
\newcommand{\Ver}{\textit{Ver}}
\newcommand{\F}{\mathit{F}}
\newcommand{\ie}{i.e., }
\newcommand{\FT}[1]{#1}
\title{Non-flat ABA is an Instance of Bipolar Argumentation}
\author{
	%Anonymous Submission
	Markus Ulbricht$^1$, 
	Nico Potyka$^2$, 
	Anna Rapberger$^3$, and  
	Francesca Toni$^{3}$\\[1ex]
	%\affiliations
	$^1$Leipzig University, Germany\\
	$^2$Cardiff University, School of Computer Science and Informatics\\
	$^3$Imperial College London, UK\\[1ex]
	%\emails
	$^1$mulbricht@informatik.uni-leipzig.de \\
	$^2$PotykaN@cardiff.ac.uk,\\
	$^3$\{f.toni,a.rapberger\}@imperial.ac.uk
}
\begin{document}

	\maketitle
	
	\begin{abstract}
		Assumption-based Argumentation (ABA) is 
		a well-known structured argumentation formalism, whereby arguments and attacks between them are drawn from rules, defeasible assumptions and their contraries. 
		A common restriction imposed on ABA frameworks (ABAFs) is that they are \emph{flat}, \ie each of the defeasible assumptions can only be assumed, but not derived. 
		While it is known that flat ABAFs can be translated into abstract argumentation frameworks (AFs) as proposed by Dung, no translation exists from general, possibly \emph{non-flat} ABAFs into any kind of abstract argumentation formalism. 
		In this paper, we close this gap and show that bipolar AFs (BAFs) can instantiate general ABAFs. 
		To this end we develop suitable, novel BAF semantics which borrow from the notion of \emph{deductive support}. 
		We investigate basic properties of our BAFs, including computational complexity, and prove the desired relation to ABAFs under several semantics. 
	\end{abstract}
	
	\section{Introduction}	
	Computational models of argumentation~\cite{arguHandbook} play a central role in non-monotonic reasoning and have a wide range of applications in various fields such as %legal and medical reasoning
	law and healthcare~\cite{AtkinsonBGHPRST17}. One key aspect of these models is the use of  structured argumentation formalisms~\cite{struct-arg:2014}, which outline formal argumentative workflows from building blocks. 
	Prominent approaches include assumption-based argumentation (ABA)~\cite{BondarenkoDKT97}, ASPIC$^+$~\cite{ModgilP13}, %defeasible logic programming
	DeLP~\cite{DBLP:journals/tplp/GarciaS04}, and deductive argumentation~\cite{DBLP:books/daglib/0030834}. The reasoning process within these formalisms typically involves creating argument structures and identifying conflicts among them in a systematic manner from rule-based knowledge bases. 
	The resulting arguments and conflicts are known as argumentation frameworks (AFs)~\cite{Dung95}. These frameworks are then evaluated using %argumentation
	semantics to resolve conflicts,  determine the acceptability of arguments and draw conclusions  based on the original knowledge bases. 
	
	In this paper, we focus on ABA, as a versatile %modeling approach 
	structured argumentation formalism that, despite its simplicity, is able to handle reasoning with certain types of preferences, strict and defeasible rules, and different types of attacks without the need for additional tools \cite{Toni14}. Additionally, ABA can be naturally expanded to include more advanced reasoning with preferences \cite{CyrasT16} and probabilities~\cite{DungT10Prob}, can support applications (e.g. in healthcare~\cite{CravenTCHW12,CyrasOKT21}, law~\cite{DungTH10-law} and robotics~\cite{FanLZLM16}), and can be suitably deployed in multi-agent settings to support dialogues~\cite{FanT14}%, amongst others
	. 
	
	An ABA framework (ABAF) amounts to %sets 
 \FT{a set} of \emph{rules} from some deductive system,  candidate \emph{assumptions} amongst the sentences in its language, and a \emph{contrary}  for each assumption: at an abstract level, arguments are deductions supported by rules and assumptions, and attacks are directed at assumptions in the support of arguments, by means of arguments for their contraries. Thus, assumptions constitute the defeasible part of an ABAF.    
	There are no restrictions in ABA, in general, as to where the assumptions may appear in the rules~\cite{CyrasFST2018}.  
	A common restriction adopted in the study and deployment of ABA, however, is that %ABA knowledge bases are assumed to be 
	ABAFs are \emph{flat}, \ie each set of assumptions is \emph{closed}~\cite{BondarenkoDKT97}.  Intuitively speaking, flatness means that assumptions cannot be inferred, only assumed to be true or not. 
	Flat ABA is well-studied and, because of its relative simplicity~\cite{DBLP:journals/ai/CyrasHT21}, it is equipped with a variety of computational mechanisms~\cite{Toni13,ABAPlus,ABA-DL22}.
	However, general, \emph{non-flat} ABAFs have not yet been studied as comprehensively (except for aspects of computational complexity, e.g. as in~\cite{DBLP:journals/ai/CyrasHT21})% and, thus far, %the aforementioned tools are not available in this setting. 
	% to the best of our knowledge they lack computational tools
	, despite their potential for a broader range of applications than restricted flat ABA. The following example gives a simple illustration.   
	%Non-flat ABA can be naturally used to represent debates, as in the following example (for space reasons, this is a toy version of a possibly real debate about climate change).  Then, when using the semantics of non-flat ABA to assess the dialectical acceptability of assumptions, it is important to provide a high-level, abstract view of the ABA frameworks as \emph{explanations} for the debates' outcomes. \todo{FT to link better to the example and say why this needs to be non-flat ...hypothetico-deductive reasoning?}
	
	\begin{example}
		\label{ex:motivating}
		Consider the following discussion about climate change. 
		It is an abstraction of an idealised but realistic debate: is climate change actually happening? We cannot prove it for sure, so it makes sense to see it as an assumption %``climate change'' 
		(%assumption 
		$cc$), but we can try and establish it by looking at its consequences:  if it is actually happening we may expect an increased %amounts
  \FT{amount} of rain (assumption $mr$), %and even observe it ($mr \leftarrow$), 
		but then may need to deal with arguments against the validity of this assumption: one may argue that there has always been more rain at times, and so it is standard (assumption $sr$ for ``standard rain'') and thus object against $mr$ (using rule $not\_mr \leftarrow sr$), which in turn can be defeated by looking at statistics ($s$). This yields an ABAF $D$ consisting of 
		%literals 
		atoms $\mathcal{L} =  \{cc,mr,sr,s,not\_cc,not\_mr,not\_sr\}$, 
		assumptions $\mathcal{A} =  \{cc,mr,sr\}$, 
		and  rules ($\mathcal{R})$: 
		\begin{align*}
			%\mathcal{R}&= 
			&mr \leftarrow cc, 
			&&not\_mr \leftarrow sr, 
			&&not\_sr \leftarrow s, 
			&&s \leftarrow , 
			%&&mr \leftarrow  
			%&&s \leftarrow 
			%\mathcal{A} &=  \{cc,mr,r\}\\
			%\overline{X}&=not\_X, \forall X \in \mathcal{A}
		\end{align*}
		Moreover, for each assumption $X$, the \emph{contrary} is $not\_X$. 
		This is a non-flat ABAF, as  the assumption $mr$ is derivable from %(and thus supported by) 
		the assumption $cc$. 
		Allowing for assumptions to be derived from rules can thus accommodate a form of hypothetico-deductive reasoning  in debates of this form.	
	\end{example}
%	When considering flat ABAF, 
%	several computational methods (notably dispute derivations~\cite{Toni13}) greatly benefit from equivalent interpretations of the semantics of flat ABAFs as either sets of assumptions or sets of arguments in Dung-style AFs, empowering the use of the abstract computational notion of  \emph{dispute trees}~\cite{dung2006dialectic}. This mapping of flat ABAFs onto AFs also paves the way to 
%	explainability~\cite{fan2015computing,SchulzT16}. 
%	So, when considering general, non-flat ABAFs instead, a natural question arises as to whether they can be understood more abstractly and equipped with some form of dispute trees for explainability purposes.  

A main booster for the development %for 
\FT{of} flat ABAFs was the close correspondence to abstract AFs \cite{CyrasFST2018}. 
This contributed to the theoretical understanding of flat ABA, but plays also an important role in further aspects like explainability  \cite{DBLP:conf/ijcai/Cyras0ABT21}, dynamic environments \cite{DBLP:journals/jair/RapbergerU23}, and solving reasoning tasks \cite{DBLP:conf/kr/LehtonenR0W23}.

    In this paper, we extend this line of research and establish a connection between non-flat ABA and an abstract argumentation formalism. 
    To this end, we require two ingredients. 
    The first crucial observation is that ABAFs can be translated into \emph{bipolar} AFs (BAFs) \cite{KP01,cayrol2005acceptability,AmgoudCLL08} under a novel semantics. 
    As opposed to Dung-style AFs, BAFs do not only
    consider \FT{an} \emph{attack} %relationships 
    \FT{relation} between arguments, representing
	%that can represent contradictions
    conflicts, but also
    \FT{a} \emph{support} %relationships
    \FT{relation}, that can represent 
    justifications. Various semantics for BAFs have been proposed in the literature (see \cite{DBLP:journals/ijar/CayrolL13,BAFsurvey} for overviews).
    Our BAF semantics, which capture non-flat ABAFs, borrow ideas from previous approaches, but are novel in their technical details. 
    The second observation is that the aforementioned approach does not work for all common ABA semantics. 
    We tackle this issue by slightly extending our BAFs, similarly in spirit %with
    to so-called claim-augmented AFs (CAFs)~\cite{DvorakRW20kr}
    which assign to each argument a corresponding claim. 
    In our work, we will extend BAFs with \emph{premises} storing under which conditions an argument can be inferred. 
    
    The main contributions of this paper are as follows. 
	\begin{itemize}
		\item We define BAF semantics: novel, albeit similar in spirit to %previous approaches from the literature
		an existing semantics interpreting support as deductive~\cite{boella2010support}. We also study basic properties.%  of our novel semantics. 
		\item We show that for complete-based semantics,
  %BAFs under the novel semantics capture non-flat ABAFs.
  non-flat ABAFs admit a translation to BAFs w.r.t.\ our   semantics. 
            \item We propose so-called premise-augmented BAFs
            and show that they capture all common ABA semantics. 
		%\item We propose dispute trees which are suitably tailored to our BAF semantics. We demonstrate how one can extract argumentative explanations from our dispute trees. 
		\item We %discuss %the 
		analyse the
		computational complexity %of reasoning with 
		of our BAFs.  
	\end{itemize}
	%All proofs can be found in the Supplementary Material. 
	\section{Background}
	\label{sec:background}
	
	%\noindent\textbf{Abstract Argumentation.}\,
	\paragraph{Abstract Argumentation.}
	%\subsubsection{Abstract Argumentation}
	An \FT{abstract} argumentation framework (AF) \cite{Dung95} is a directed graph $\F = (A,\attacker)$ where $A$ represents a set of arguments and $\attacker\subseteq A\times A$ models \textit{attacks} between them. 
	%In this paper we assume $A$ to be a finite set. 
	%For a given $\F = (B,S)$ we let $A(\F) = B$ and $R(F) = S$.
	%For $U\subseteq A$ we define the restriction of $\F$ to $U$ as usual, i.e.\ $F\!\!\downarrow_U = (A\cap U , R\cap ({U\times U}))$.
	For two arguments $x,y\in A$, if $(x,y)\in \attacker$ we say that $x$ \textit{attacks} $y$\FT{, $x$ \textit{is an attacker of} $y$,} as well as $x$ \textit{attacks} (%the
 \FT{any} set) $E$ given that $y\in E\subseteq A$. 
	We let $E^+_F = \{ x\in A \mid E\text{ attacks }x \}$. % for a set $E\subseteq A$. 
	
	A set $E\subseteq A$ is \emph{conflict-free} in $\F$ %(for short, $E\in\cf(\F)$) 
	iff for no $x,y\in E$, $(x,y)\in \attacker$. 
	%We say 
	$E$ \textit{defends} an argument $x$ if 
	$E$ attacks each attacker of $x$.
	%any attacker of $x$ is attacked by some argument of $E$.
	A conflict-free set $E$ is \emph{admissible} in $\F$ ($E\in \adm(\F)$) iff it defends all its elements.
	%(we also say, $E$ defends itself).
	A \emph{semantics} is a function $F\mapsto\sigma(F)\subseteq 2^A$.
	This means, given an AF $\F = (A,R)$, a semantics returns a set of subsets of $A$. These subsets are called $\sigma$-\emph{extensions}.
	In this paper we consider so-called 
	%\textit{naive},
	\textit{admissible}, 
	\textit{complete}, 
	\textit{grounded},
	\textit{preferred}, 
	and \textit{stable}
	%\textit{semi-stable}
	%and \textit{stage}
	semantics (abbr. 
	%$\nav$, 
	$\adm$,
	$\com$, 
	$\grd$, 
	$\prf$,  
	$\stb$). 
	%\begin{definition} \label{def:extsem}
	For an AF and $E\in\adm(\F)$, we let %
	%\begin{enumerate}
	%\begin{itemize}
	%\item 
	i) $E\in\com(\F)$ iff $E$ contains all arguments it defends;
	%\item 
	ii) $E\in\grd(\F)$ iff $E$ is $\subseteq$-minimal in $\com(\F)$;
	%\item 
	iii) $E\in\prf(\F)$ iff $E$ is $\subseteq$-maximal in $\com(\F)$;
	%\item 
	iv) $E\in\stb(\F)$ iff $E_F^+=A\setminus E$.	
	%\end{itemize}
	%		$E\in \semi(\F)$ iff $E\in \com(\F)$ and there is no $D\in\com(\F)$ with $E\cup E^+_F\subset D\cup D^+_F$.
	%\end{enumerate}
	%\end{definition}
	
	A \emph{bipolar argumentation framework (BAF)} is a tuple
	$\CF = (\args, \attacker, \supporter)$, where $\args$ is a finite set of arguments, $\attacker \subseteq \args \times \args$ is the \emph{attack relation} as before and 
	$\supporter \subseteq \args \times \args$ is the \emph{support relation} \cite{AmgoudCLL08}. 
	Given a BAF $\CF = (\args, \attacker, \supporter)$\FT{,} we call $F = (\args, \attacker)$ the underlying AF of $\CF$. 
	Graphically, we depict the attack relation by solid edges and the support relation by dashed edges.	
	\paragraph{Assumption-based Argumentation.}
	We assume a deductive system $(\mathcal{L},\mathcal{R})$, where  $\mathcal{L}$ is a formal language, i.e., a set of sentences, 
	and $\mathcal{R}$ is a set of inference rules over $\mathcal{L}$. A rule $r \in \mathcal{R}$ has the form
	$a_0 \leftarrow a_1,\ldots,a_n$ with $a_i \in \mathcal{L}$.
	We denote the head of $r$ by $head(r) = a_0$ and the (possibly empty) 
	body of $r$ with $body(r) = \{a_1,\ldots,a_n\}$. 
	\begin{definition}
		An ABA framework is a tuple $(\mathcal{L},\mathcal{R},\mathcal{A},\contraryempty)$, where $(\mathcal{L},\mathcal{R})$ is a \emph{deductive system}, $\mathcal{A} \subseteq \mathcal{L}$ a non-empty set of \emph{assumptions}, and $\contraryempty:\mathcal{A}\rightarrow \mathcal{L}$ %the
		a \FT{(total)} \emph{contrary} function.
		%$\contraryempty$ is a function mapping assumptions $a\in \mathcal{A}$ to sentences $\mathcal{L}$. 
	\end{definition}

	We say that a sentence $p \in \mathcal{L}$ is \emph{%tree- YOU MOSTLY DO NOT USE IT, SO BEST TO KEEP IT SIMPLE
 \FT{derivable}} from assumptions $S \subseteq \mathcal{A}$ and rules $R \subseteq \mathcal{R}$, denoted by $S \vdash_R p$, if there is a finite rooted labeled tree $T$ such that the root is labeled with $p$, the set of labels for the leaves of $T$ is equal to $S$ or $S \cup \{\top\}$, and
	for every inner node $v$ of $T$ there is a rule $r \in R$ such that $v$ is labelled with $head(r)$, the number of successors of $v$
	is $|body(r)|$ and every successor of $v$ is
	labelled with a distinct $a \in body(r)$ or $\top$ if $body(r)=\emptyset$

	By $\theory_D(S)=\{p \in \mathcal L\mid \exists S'\subseteq S:S'\vdash_R p\}$
	we denote the set of all conclusions derivable from an assumption-set $S$ in an ABA \FT{framework (ABAF)} $D$. Observe that $S\subseteq \theory_D(S)$ since% per 
 \FT{, by} definition, each %assumption 
 $a\in \mathcal{A}$ is derivable from $\{a\}\vdash_\emptyset a$. 
	For %a set 
 $S\subseteq \mathcal A$, we let 
	$\contrary{S}=\{\contrary{a}\mid a\in S\}$; 
	moreover, for a %tree-
 \FT{derivation} $S\vdash p$ we write 
	$\asms(S\vdash p)=S$ 
	and for a set $E$ of %tree 
 \FT{derivations} we let
	$\asms(E)=\bigcup_{x\in E}\asms(x)$. 
 \FT{Also, we often write 
 $S \vdash_R p$ simply as $S \vdash p$.}
	
	A set $S\subseteq \mathcal A$ \emph{attacks} a set $T\subseteq \mathcal A$ if for some $a\in T$ we have that $\contrary{a}\in\theory_D(S)$.   
	A set $S$ is \emph{conflict-free}, denoted $E\in\cf(D)$, if it does not attack itself.
	With a little notational abuse we say $S$ attacks $a$ if $S$ attacks the singleton $\{a\}$.

	Given %a set $S$ of assumptions TO SAVE SPACE
 \FT{$S \!\subseteq \!\mathcal A$}, the \emph{closure} $\cl(S)$ of $S$ is %given as TO CUT SPACE
 $\cl(S) \!=\! \theory_D(S) \!\cap\! \mathcal A$. 
	With a little notational abuse we write $\cl(a)$ instead of $\cl(\{a\})$ whenever $S$ is a singleton. 
	A set $S\FT{\subseteq \mathcal A}$ %of assumptions TO CUT SPACE
 is 
 \emph{closed} if $S = \cl(S)$. 
	Observe that\FT{,}  in order for $S$ to be non-closed, it is necessary that $\mathcal R$ contains a rule 
	$a_0 \leftarrow a_1,\ldots,a_n$
	s.t.\ $a_0\in \mathcal A$, \ie the head of the rule is an assumption.

	Now we consider defense~\cite{BondarenkoDKT97,CyrasFST2018}.  Observe that defense in general ABAFs is only required against \emph{closed} sets of attackers. 
	Formally: 
	\begin{definition}
		\label{def:ABA defense}
		Let $D=(\mathcal{L},\mathcal{R},\mathcal{A},\contraryempty)$ be an ABA\FT{F}% framework
  , $S\subseteq \mathcal A$ and $a\in \mathcal A$. 
		We say that $S$ \emph{defends} $a$ iff for each closed set $T$ of assumptions s.t. $T$ attacks $a$, we have that $S$ attacks $T$; 
		$S$ defends itself iff $S$ defends each $b\in S$. 
	\end{definition}

 We next recall admissible, grounded, complete, preferred, and stable ABA semantics. %(abbr.\ $\adm$ $\grd$, $\comp$, $\prf$, $\stb$). 
	\begin{definition}
		\label{def:ABA semantics}
		Let $D=(\mathcal{L},\mathcal{R},\mathcal{A},\contraryempty)$ be an ABA% framework 
  \FT{F} and $S \subseteq \mathcal{A}$ be a set of assumptions s.t.\ $S\in\cf(S)$. We say
		\begin{itemize}
			\item $S\in\adm(D)$ iff $S$ is closed and defends itself;
			\item $S\in \prf(D)$ iff $S$ is $\subseteq$-maximal in $\adm(D)$;
			\item $S\in \comp(D)$ iff $S\in\adm(D)$ and %contains 
   \FT{is a superset of} every assumption set it defends; 
			\item $S\in \grd(D)$ iff $S = \bigcap_{T\in \com(D)} T$; 
			\item $S\in \stb(D)$ iff $S$ is closed and attacks each $x \in \mathcal{A} \setminus S$.
		\end{itemize}
		%	Moreover, 
		%	\begin{itemize}
			%		\item $S\in\sstb(D)$ iff $S$ is conflict-free, closed, and attacks $\cl(x)$ for each $x \in \mathcal{A} \setminus S$; 
			%		\item $S\in \cprf(D)$ iff $S$ is $\subseteq$-maximal in $\com(D)$. 
			%	\end{itemize}
	\end{definition}
	In this paper we stipulate that the empty intersection is interpreted as $\emptyset$, \ie if $\com(D) = \emptyset$, then 
	$\grd(D) = \emptyset$.
	\begin{example}
		\label{ex:aba bg example}
		Let 
		$D = (\mathcal{L},\mathcal{R},\mathcal{A},\contraryempty)$ 
		be the ABA% framework
  \FT{F} where 
		$\mathcal L = \{a,b,c,d,\contrary{a},\contrary{b},\contrary{c},\contrary{d}\}$, 
		$\mathcal A = \{a,b,c,d\}$, 
		the contrary function is given as indicated, and %rules 
  $\mathcal R$ \FT{consists of rules}: 
		\begin{align*}
			\contrary{b}\gets a. && \contrary{a}\gets b. && \contrary{d} \gets b.  && \contrary{b} \gets c. && d \gets c. 
		\end{align*}
		Let us discuss why $S = \{b\}$ is admissible in $D$. 
		First of all, $b$ is conflict-free as it does not %entail 
  \FT{derive} $\contrary{b}$. 
		Also, $b$ is closed, \ie $\cl(b) = \{b\}$. 
		Regarding defense, we have that 
		$\{a\}\vdash \contrary{b}$, but also  	
		$\{b\}\vdash \contrary{a}$, 
		so the attack is defended \FT{against}. 
		Finally, $c$ attacks $b$ ($\{c\}\vdash \contrary{b}$), but $\{c\}$ is not closed. Indeed, %the closure of $c$ is 
  $\cl(c) = \{c,d\}$. 
		Since $\{b\}\vdash \contrary{d}$, this attack is also defended \FT{against}. 
	\end{example}
	
	%%%%%%%%%%%%%%%%%%%%%%%%%%%%%%%%%%%%%%%%%%%%%%%
	\section{Closed Extensions for Bipolar AFs}
	\label{sec:BAFs}
	%%%%%%%%%%%%%%%%%%%%%%%%%%%%%%%%%%%%%%%%%%%%%%%
	Our goal is to translate non-flat ABA% frameworks 
 \FT{Fs} into BAFs. 
	In this section, we develop BAF semantics which are suitable for this endeavor (under complete-based semantics \FT{for ABAFs}). 
	To this end we interpret the support relation %similar in spirit to 
 \FT{in the spirit of} the notion of \emph{deductive support} \cite{boella2010support}, \ie the intuitive reading is that whenever $x$ is accepted and $x$ supports $y$, then $y$ is also accepted. 
	While this approach borrows from the BAF literature, to the best of our knowledge the exact definitions do not coincide with any previously proposed BAF semantics. 
	We define extension-based semantics directly on the given BAF, without re-writing it to an AF. 
	
	We start with the notion of the closure for BAFs. 
	\begin{definition}
		Let $\CF = (\args, \attacker, \supporter)$ be a BAF. 
		Consider the operator $\mu$ defined by $$\mu(E) = E\cup \{ a\in A \mid \exists e\in E:\; (e,a)\in\supporter \}.$$ 
		We call $\cl(E) = \bigcup_{n\geq 1} \mu^n(E)$ the \emph{closure} of $E$. 
		A set $E\subseteq \args$ is called \emph{closed} if $E = \cl(E)$.  
	\end{definition}
	Now we introduce the basic concepts of conflict-freeness and defense underlying our semantics. 
	As usual, a set of arguments is said to be conflict-free whenever it does not attack itself. 
	Our notion of defense is inspired by the way it is defined for ABA (cf.\ Definition~\ref{def:ABA defense}).  
	\begin{definition}
		\label{def_closure_baf}
		Let $\CF = (\args, \attacker, \supporter)$ be a BAF. 
		A set $E\subseteq A$ is \emph{conflict-free} if $E\cap E^+_{\FT{\mathcal{\CF}}} = \emptyset$; 
		$E$ \emph{defends} $a\in A$ if $E$ attacks each closed set $S\subseteq A$  which attacks $a$;
	the \emph{characteristic function} of $\CF$ is $\Gamma(E) = \{ a\in A\mid E\text{ defends }a \}$.% 
	\end{definition}
	Observe that this is a weaker condition than the defense notion of AFs since we can disregard non-closed attackers. 
	\begin{example}
		\label{ex:running example basics}
		Let $\CF$ be the following BAF (recall that the attack relation is depicted by solid edges and the support relation by dashed edges): 
		\begin{center}
			\begin{tikzpicture}[>=stealth,xscale=1, yscale=1]
				\small
				\path
				(0,0) node[arg](z){$z$}
				(-1.5,0) node[arg](y){$y$}
				(-3,0) node[arg,label=left:$\CF:\quad$](x){$x$}
				(1.5,0) node[arg](u){$u$}
				(3,0) node[arg](v){$v$}
				;
				\path [->,thick]
				(x) edge[bend left] (y)
				(y) edge[bend left] (x)
				(u) edge (z)
				(y) edge[bend left=15] (v)
				;
				\path [->,thick, dotted]
				(y) edge (z)
				(u) edge (v)
				;
			\end{tikzpicture}
		\end{center}
		We have that $\cl(y) = \{y,z\}$ 
		and $y$ defends $z$ which can be seen as follows: 
		%E
  \FT{e}ven though $u$ attacks $z$, $u$ is not a closed set of arguments. The closure of $\{u\}$ is $´\cl(\{u\}) = \{u,v\}$. 
		Since $y$ attacks $v$, we find $z\in\Gamma(\{y\})$. 
	\end{example}
	As we saw in this example, our defense notion can intuitively be interpreted \FT{as} follows: 
%I
\FT{i}f we seek to defend some argument $a$, then it suffices to counter-attack the closure of each attacker $b$ of $a$ (rather than $b$ itself). 
	\begin{lemma}
		\label{le:defense closure single arg}
		Let $\CF = (\args, \attacker, \supporter)$ be a BAF 
		and let $E\subseteq \args$ and $a\in \args$. 
		Then $E$ defends $a$ iff for each attacker $b$ of a it holds that $E$ attacks $\cl(\{b\})$. 
	\end{lemma}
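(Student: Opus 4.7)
The plan is to prove both directions directly from the definition of defense and standard properties of the closure operator $\mathit{cl}$. Two facts about $\mathit{cl}$ will do all the work: (i) for every $b \in A$, the set $\mathit{cl}(\{b\})$ is closed (since $\mathit{cl}$ is idempotent, $\mu(\mathit{cl}(\{b\})) = \mathit{cl}(\{b\})$, hence $\mathit{cl}(\mathit{cl}(\{b\})) = \mathit{cl}(\{b\})$), and (ii) $\mathit{cl}$ is monotone, so if $b \in S$ and $S$ is closed, then $\mathit{cl}(\{b\}) \subseteq \mathit{cl}(S) = S$. I would verify (i) and (ii) in one line at the start of the proof.

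For the forward direction, suppose $E$ defends $a$, and let $b$ be any attacker of $a$. Since $\mathit{cl}(\{b\})$ is closed by (i) and contains $b$, it is itself a closed set attacking $a$. By Definition~\ref{def_closure_baf}, $E$ attacks $\mathit{cl}(\{b\})$, which is exactly the right-hand condition.

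For the converse, assume that $E$ attacks $\mathit{cl}(\{b\})$ for every attacker $b$ of $a$. Let $S$ be any closed set which attacks $a$. Then there is some $b \in S$ with $(b,a) \in \attacker$. By (ii), $\mathit{cl}(\{b\}) \subseteq S$. By assumption, $E$ attacks some $c \in \mathit{cl}(\{b\})$, and since $c \in S$, we conclude that $E$ attacks $S$. As $S$ was arbitrary, $E$ defends $a$.

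I do not expect any real obstacle: the argument is essentially a careful unfolding of the definitions, with idempotence and monotonicity of $\mathit{cl}$ as the only background facts. The only subtle point worth stating explicitly is that $\mathit{cl}(\{b\})$ is itself closed, which is what makes it a legitimate witness in the defense condition; once this is in place, both directions are immediate.
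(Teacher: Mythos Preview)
Your proof is correct and is exactly the natural argument. The paper does not spell out a proof for this BAF lemma, but the analogous ABA statement in the appendix (Lemma~\ref{le:reduce defense to single arg}) is proved with the same two steps you give: $\cl(\{b\})$ is itself a closed attacker for the forward direction, and monotonicity of $\cl$ together with closedness of $S$ gives $\cl(\{b\})\subseteq S$ for the converse.
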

	Let us now define admissibility. 
	We require a set of arguments to be conflict-free, closed, and self-defending. 
	\begin{definition}
		Let $\CF = (\args, \attacker, \supporter)$ be a BAF. 
		A set $E\subseteq A$ is \emph{admissible}, $E\in\adm(\CF)$, if  
		%\begin{itemize}
		%\item 
		i) $E$ is conflict-free, 
		%\item 
		ii) $E$ is closed, and 
		%\item 
		iii) $E\subseteq \Gamma(E)$. 
		%\end{itemize} 
	\end{definition}
	\begin{example}
		Recall Example~\ref{ex:running example basics}. 
		Let us verify that $E = \{y,z\}\in\adm(\CF)$. 
		Clearly, 
		$E$ is closed with $E\in\cf(\CF)$. 
		The two attackers of $E$ are $x$ and $u$ with 
		$\cl(x) = \{x\}$ 
		and 
		$\cl(u) = \{u,v\}$%;
  \FT{,}
		both of which are counter-attacked. 
		Another admissible set is $E' = \{u,v\}$ since the attack %of 
  \FT{by} $y$ is countered due to $\cl(y) = \{y,z\}$ and $u$ attacks $z$. 
	\end{example}
	As usual, the empty set is always admissible and hence, we can guarantee $\adm(\CF)\neq\emptyset$ for any given BAF $\CF$. 
	\begin{proposition}
		Let $\CF$ be a BAF. 
		Then $\emptyset\in\adm(\CF)$. 
		In particular, $\adm(\CF)\neq\emptyset$. 
	\end{proposition}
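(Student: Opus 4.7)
The plan is to verify that the empty set satisfies the three defining conditions of admissibility, from which the second claim follows immediately. Let $\CF = (\args, \attacker, \supporter)$ be an arbitrary BAF.

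First, I would observe that $\emptyset$ is trivially conflict-free: since $\emptyset^+_{\CF} = \{a \in \args \mid \emptyset \text{ attacks } a\} = \emptyset$, we have $\emptyset \cap \emptyset^+_{\CF} = \emptyset$. Second, I would verify that $\emptyset$ is closed. Applying the operator $\mu$, we get $\mu(\emptyset) = \emptyset \cup \{a \in \args \mid \exists e \in \emptyset : (e,a) \in \supporter\} = \emptyset$, since no element exists in $\emptyset$ to serve as the supporter. By induction, $\mu^n(\emptyset) = \emptyset$ for every $n \geq 1$, so $\cl(\emptyset) = \emptyset$.

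Third, the inclusion $\emptyset \subseteq \Gamma(\emptyset)$ holds vacuously, since the empty set is a subset of every set. Combining these three facts, $\emptyset \in \adm(\CF)$. The second part of the statement, $\adm(\CF) \neq \emptyset$, is then an immediate consequence.

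There is no real obstacle here; each of the three conditions reduces to a vacuous check. The only pitfall to watch out for is whether the closure definition $\cl(E) = \bigcup_{n \geq 1} \mu^n(E)$ behaves correctly on the empty input, but a one-line induction confirms it does.
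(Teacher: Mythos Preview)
Your proof is correct and is exactly the natural argument: verify conflict-freeness, closedness, and self-defense for $\emptyset$, each of which holds vacuously. The paper states this proposition without proof, so there is nothing to compare against; your verification is the expected one.
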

	Given this notion of admissibility, the definition of the remaining semantics is natural: 
	%F
 \FT{f}or complete extensions, we require $E$ to include all defended arguments; 
	preferred extensions are defined as maximal 
	%complete 
	admissible 
	sets; 
	the grounded extension is the intersection of all complete ones. 
	\begin{definition}
		Let $\CF = (\args, \attacker, \supporter)$ be a BAF. 
		A set $E\subseteq A$ of arguments s.t.\ $E\in\cf(\CF)$ is  
		\begin{itemize}
			\item \emph{preferred}, $E\in\prf(\CF)$, iff it is maximal admissible; 
			\item \emph{complete}, $E\in\com(\CF)$, iff $E\in\adm(\CF)$ and $E = \Gamma(E)$; 
			\item \emph{grounded}, $E\in\grd(\CF)$, iff $E = \bigcap_{S\in\com(\CF)}S$;
			\item \emph{stable}, $E\in\stb(\CF)$, iff it is closed and $E^+ = A\setminus E$. 
		\end{itemize}
	\end{definition}
	\begin{example}
		In our Example~\ref{ex:running example basics}, the admissible extension $E = \{y,z\}$ is maximal and thus preferred. 
		Moreover, $\{x,u,v\}\in\prf(\CF)$. 
		We observe however that $\{y,z\}$ is not complete since it does not contain the unattacked argument $u$. 
		Hence $\com(\CF) = \{\{ u,v \}\}$. 
	\end{example}
	%\todo{keep or remove the following?} KEEP!
	As a final remark regarding our BAF semantics, let us mention that they do not admit a translation into Dung-style AFs. 
	As the previous example already shows, preferred extensions are in general not complete (which is the case for AFs). 
	Another interesting observation is that we do not necessarily have a complete extension. 
	These properties show that a translation to AFs is impossible. 
	\begin{example}
		\label{ex:baf complete not necessary}
		Let $\CF$ be the following BAF: 
		\begin{center}
			\begin{tikzpicture}[>=stealth,xscale=1, yscale=1]
				\small
				\path
				(0,0) node[arg](z){$z$}
				(-1.5,0) node[arg](y){$y$}
				(-3,0) node[arg,label=left:$\CF:\quad$](x){$x$}
				;
				\path [->,thick]
				(x) edge (y)
				;
				\path [->,thick, dotted]
				(z) edge (y)
				;
			\end{tikzpicture}
		\end{center}
		Suppose $E\in\com(\CF)$. 
		Since $z$ is unattacked, we must have $z\in E$. 
		As complete sets must be closed, $y\in E$ follows. 
		However, by the same reasoning, $x\in E$ and thus, $E\notin\cf(\CF)$; a contradiction. 
		Indeed, the only admissible sets are $\emptyset$ and $\{x\}$%;
  \FT{,} both of which are not complete. 
	\end{example}
	Note that\FT{,} for the same reason, ABAFs cannot be translated into AFs: %G
 \FT{g}eneral ABAFs violate many properties that hold for AFs. Thus we require BAFs for the translation. 

	\section{%Instantiating Non-Flat ABA Frameworks}\todo{not quite the right title? we are also instantiating BAFs...May be: NonFlat ABAFs vs BAFs?}
 \FT{Instantiated BAFs}}
 
	\label{sec:ABA vs BAF}
	Suppose we are given an ABAF 
	$D = (\mathcal{L},\mathcal{R},\mathcal{A},\contraryempty)$. 
	Our goal is to translate $D$ into a BAF $\CF_D = (A,\attacker,\supporter)$. 
	The underlying idea is to define $A$ and $\attacker$ as it is done for flat ABAFs, \ie each ABA argument $S\vdash p$ corresponds to an argument in $\CF_D$  which attacks arguments in $\CF_D$ corresponding to $T\vdash q$ whenever $p\in \contrary{T}$. 
	What we have left to discuss is the support relation. 
	To this end 
	%suppose we accept all assumptions in $S$.  Then $T\subseteq S$ implies that we can also accept all arguments which rely on $T$. This naturally yields 
	%\begin{align}
	%	\label{eq:suppprter 1}
	%	T\subseteq S \; \Rightarrow \; (S\vdash p, T\vdash q)\in\supporter. 
	%\end{align}
	%In addition 
	we need to take care of the closure of %a 
 set\FT{s} of assumptions. 
	More specifically, if some assumption $a$ is in the closure of a set $S$, \ie $S\vdash a$ is %a tree-based 
 \FT{an} argument in $D$, then we encode this in $\CF_D$ %utilizing the $\supporter$ relation REPLACED TO SAVE SPACE
 \FT{using $\supporter$}. 
	
	To illustrate this, suppose we are given assumptions $a,b,c$ and rules 
	$r_1: p\gets a$, 
	$r_2: b\gets a$, and 
	$r_3: \contrary{b}\gets c$. 
	From $r_2$ it follows that $b\in\cl(a)$,  
	which can be encoded in our instantiated BAF as follows:  
	%S
 \FT{s}ince $b$ is an assumption, there is some generic argument $\{b\}\vdash b$ for it, then the argument stemming from rule $r_2$, \ie $\{a\}\vdash b$, supports the argument $\{b\}\vdash b$%. H
 \FT{; h}ence any closed set accepting $\{a\}\vdash b$ must also accept $\{b\}\vdash b$. 
	Including the usual attacks, this would give the following BAF (for $a\in\mathcal A$, we depict $\{a\}\vdash a$ by just $a$): 
	\begin{center}
		\begin{tikzpicture}
			\node[draw] (arg1) at (0,0) {
				\begin{tikzpicture}[xscale=0.8,yscale=0.5]
					\node[targ] (p) at (0,0) {$b$};
					\node[targ] (q) at (0,-1.3) {$a$};
					
					\path[-]
					(p) edge (q)
					;
				\end{tikzpicture}
			};
			\node[draw] (arg2) at (-1.5,0) {
				\begin{tikzpicture}[xscale=0.8,yscale=0.5]
					\node[targ] (p) at (0,0) {$p$};
					\node[targ] (q) at (0,-1.3) {$a$};
					
					\path[-]
					(p) edge (q)
					;
				\end{tikzpicture}
			};
			\node[draw] (arg3) at (3,0) {
				\begin{tikzpicture}[xscale=0.8,yscale=0.5]
					\node[targ] (p) at (0,0) {$\contrary{b}$};
					\node[targ] (q) at (0,-1.3) {$c$};
					
					\path[-]
					(p) edge (q)
					;
				\end{tikzpicture}
			};
			
			\node[draw] (argb) at (1.5,0) {
				$b$
			};
			\node[draw] (arga) at (4.5,0.35) {
				$a$
			};
			\node[draw] (argc) at (4.5,-0.35) {
				$c$
			};
			
			\path[->]
			%			(arg1) edge[bend left] (arg2)
			%			(arg2) edge[bend left] (arg1)
			%			(arg1) edge (arg3)
			%			(arg4) edge (arg3)
			%			(arg4) edge[ bend left=45] (arg2)
			%			
			%			(arg2) edge (arga)
			%			(arg1) edge (argb)
			(arg3) edge (argb)
			%			(arg3) edge[bend right=20] (arg1)
			%			
			(arg1) edge[dotted] (argb)
			%			(arg5) edge[dotted] (argd)
			%			(argc) edge[dotted] (argd)
			;
			
		\end{tikzpicture}
	\end{center}
	It is now indeed impossible to accept $\{a\}\vdash b$ without counter-attacking $\{c\}\vdash \contrary{b}$ since the former supports $\{b\}\vdash b$. 
	With this support relation we miss however that we cannot accept $\{a\}\vdash p$, either: 
	%S
 \FT{s}ince constructing this argument requires $a$, we would then also have to include $b$ due to $b\in\cl(a)$. 
	Hence $\{a\}\vdash p$ should also support $\{b\}\vdash b$. 
	More generally, an argument $S\vdash p$ shall support each $a\in \cl(S)$: 
	\begin{align}
		\label{eq:suppprter 2}
		a\in\cl(S) \Rightarrow ( S\vdash p, \{a\} \vdash a )\in\supporter.
	\end{align}
	%It will turn out however that the supports specified in \eqref{eq:suppprter 1} are not required for the complete-based semantics (recall Proposition~\ref{prop:baf semantics adm vs co based}) which leads to a \emph{lean} instantiation which only construct the set \eqref{eq:suppprter 2} of supports.  
	We therefore define the support relation of our corresponding BAF according %to~\eqref{eq:suppprter 1} 
	to~\eqref{eq:suppprter 2} 
	as follows. 	
	\begin{definition}
		\label{def:inst baf}
		For an ABAF  
		$D = (\mathcal{L},\mathcal{R},\mathcal{A},\contraryempty)$, 
		we define the %corresponding
  \FT{instantiated} BAF 
		$\CF_D = (A,\attacker,\supporter)$ via 
		\begin{align*}
			A &= \{ (S\vdash p) \mid (S\vdash p)\text{ is  %a tree-based 
   \FT{an argument} in }D \}\\
			\attacker  &= \{ ( S\vdash p, T\vdash q ) \in A^2 \mid p \in\contrary{T} \}\\
			\supporter &= \{ ( S\vdash p, \{a\} \vdash a ) \in A^2 \mid a \in\cl(S) \} %\\
			%& \cup \{ (S\vdash p, T\vdash q) \mid T\subseteq S \}
		\end{align*} 
	\end{definition}
	
	\begin{example}	
		Recall our ABAF $D$ from Example~\ref{ex:aba bg example}. 
		The instantiated BAF $\CF_D$ is given as follows (\FT{again,} we depict the generic argument $\{a\}\vdash a$ for each $a\in \mathcal A$ by $a$). 
		%	To ease readability, we refrain from depicting the support arrows according to \eqref{eq:suppprter 1}, e.g.,\ $A_2$, $A_3$ and $b$ mutually support each other since they all rely on $\{b\}$. 
		\begin{center}
			\begin{tikzpicture}
				\node[draw,label={above:$A_1$}] (arg1) at (1.5,0) {
					\begin{tikzpicture}[xscale=0.8,yscale=0.5]
						\node[targ] (p) at (0,0) {$\contrary{b}$};
						\node[targ] (q) at (0,-1.3) {$a$};
						
						\path[-]
						(p) edge (q)
						;
					\end{tikzpicture}
				};
				\node[draw,label={above:$A_2$}] (arg2) at (0,0) {
					\begin{tikzpicture}[xscale=0.8,yscale=0.5]
						\node[targ] (p) at (0,0) {$\contrary{a}$};
						\node[targ] (q) at (0,-1.3) {$b$};
						
						\path[-]
						(p) edge (q)
						;
					\end{tikzpicture}
				};
				\node[draw,label={above:$A_3$}] (arg3) at (3,0) {
					\begin{tikzpicture}[xscale=0.8,yscale=0.5]
						\node[targ] (p) at (0,0) {$\contrary{d}$};
						\node[targ] (q) at (0,-1.3) {$b$};
						
						\path[-]
						(p) edge (q)
						;
					\end{tikzpicture}
				};
				\node[draw,label={above:$A_4$}] (arg4) at (4.5,0) {
					\begin{tikzpicture}[xscale=0.8,yscale=0.5]
						\node[targ] (p) at (0,0) {$\contrary{b}$};
						\node[targ] (q) at (0,-1.3) {$c$};
						
						\path[-]
						(p) edge (q)
						;
					\end{tikzpicture}
				};
				\node[draw,label={above:$A_5$}] (arg5) at (6,0) {
					\begin{tikzpicture}[xscale=0.8,yscale=0.5]
						\node[targ] (p) at (0,0) {$d$};
						\node[targ] (q) at (0,-1.3) {$c$};
						
						\path[-]
						(p) edge (q)
						;
					\end{tikzpicture}
				};

				\node[draw] (arga) at (0.75,-1.75) {
					$a$
				};
				\node[draw] (argb) at (2.25,-1.75) {
					$b$
				};
				\node[draw] (argc) at (3.75,-1.75) {
					$c$
				};
				\node[draw] (argd) at (5.25,-1.75) {
					$d$
				};
				
				\path[->]
				(arg1) edge[bend left] (arg2)
				(arg2) edge[bend left] (arg1)
				(arg1) edge (arg3)
				(arg4) edge (arg3)
				(arg4) edge[ bend left=45] (arg2)
				
				(arg2) edge (arga)
				(arg1) edge (argb)
				(arg3) edge (argd)
				(arg4) edge[bend left = 10] (argb)
				
				(arg1) edge[dotted] (arga)
				(arg2) edge[dotted] (argb)
				(arg3) edge[dotted] (argb)
				
				(arg4) edge[dotted] (argd)
				(arg5) edge[dotted] (argd)
				(argc) edge[dotted] (argd)
				;
				
			\end{tikzpicture}
		\end{center}
		%Observe that this almost coincides with the usual ABA instantiation as a Dung-style AF, with only some supports added, which denote the closure of sets of arguments.
		As we saw, $\{b\}$ is admissible in $D$. Now consider the set $E$ of all arguments with $\{b\}$ as assumption set, \ie  
		$E = \{A_2,A_3,b\}$. 
		Again, $E$ is conflict-free and closed (in $\CF_D$). 
		The attack from $A_1$ is countered; moreover, $A_4$ attacks $A_3$%.		H
  \FT{; h}owever, as $A_4$ supports $d$, the closure of $A_4$ (in $\CF_D$) is $\{A_4,d\}$ with $A_3$ attacking $d$; so this attack is also countered. 
		We infer that $E$ is admissible in $\CF_D$. 
	\end{example}
	Though Definition~\ref{def:inst baf} induces infinitely many arguments, they are determined by their underlying assumptions and conclusion. 
	Hence it suffices to construct a finite BAF. 

	In the remainder of this section, we establish the following main result showing that the BAF $\CF_D$ as defined above is suitable to %instantiate 
 \FT{capture} non-flat ABAFs for complete-based semantics: 
   % More specifically, 
    if $E$ is some extension (in $\CF_D$), then a set of acceptable assumptions can be obtained by gathering all assumptions underlying the arguments in $E$; 
    and if $S$ is acceptable (in $D$), then all arguments constructible from the assumptions in $S$ from a corresponding extension in the BAF $\CF_D$.% 
	\begin{restatable}{theorem}{thSemanticsCorrespondence}
		\label{th:semantics correspondence}
		Let $D = (\mathcal{L},\mathcal{R},\mathcal{A},\contraryempty)$ be an ABAF 
		and $\CF_D = (A,\attacker,\supporter)$ the %corresponding 
  \FT{instantiated} BAF. 
		Let $\sigma\in\{\com,\grd,\stb\}$. 
		\begin{itemize}
			\item If $E\in\sigma(\CF_D)$, then $\asms(E)\in\sigma(D)$. 
			\item If $S\in\sigma(D)$, then $\{ x\in A \mid \asms(x)\subseteq S \}\in\sigma(\CF_D)$.
		\end{itemize}
		%Moreover, 
		%\begin{itemize}
		%\item 
		If $S\in\adm(D)$, then $\{ x\in A \mid \asms(x)\subseteq S \}\in\adm(\CF_D)$.
		%\end{itemize}
	\end{restatable}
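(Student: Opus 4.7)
The plan is to transport extensions back and forth via the map $\Phi(S) = \{x \in A : \asms(x) \subseteq S\}$, which inverts $\asms$ on the relevant closed objects. Three structural observations carry the burden: (i) $S \subseteq \mathcal{A}$ is closed in $D$ iff $\Phi(S)$ is closed in $\CF_D$, because a support edge from $V \vdash p$ to $\{a\} \vdash a$ in $\CF_D$ exists exactly when $a \in \cl(V)$; (ii) attacks commute with $\asms$, i.e., $E$ attacks $x$ in $\CF_D$ iff $\asms(E)$ attacks $\asms(x)$ in $D$, as a direct reading of Definition~\ref{def:inst baf}; and (iii) whenever $T' \subseteq A$ is closed in $\CF_D$ and $y = U \vdash p \in T'$, then $\{c\} \vdash c \in T'$ for every $c \in \cl(U)$, so $\cl(U) \subseteq \asms(T')$.

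For the admissibility statement, given $S \in \adm(D)$ I set $E = \Phi(S)$; conflict-freeness and closedness follow from (i) and (ii). For the defense of some $x = T \vdash p \in E$ against a closed attacker $T'$, I pick a witness $y = U \vdash \contrary{a} \in T'$ with $a \in T \subseteq S$: by (iii), $\cl(U)$ is a closed assumption set attacking $a$, so admissibility of $S$ in $D$ yields $V \subseteq S$ and $b \in \cl(U)$ with $V \vdash \contrary{b}$; another use of (iii) provides $\{b\} \vdash b \in T'$, so $V \vdash \contrary{b} \in E$ attacks $T'$.

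For the complete semantics, the forward direction $S \in \com(D) \Rightarrow \Phi(S) \in \com(\CF_D)$ additionally requires that any $x = T \vdash p$ defended by $\Phi(S)$ satisfies $T \subseteq S$; I prove $S$ defends each $a \in T$ by lifting a closed assumption attacker $T^*$ of $a$ to $T' = \Phi(T^*)$, which is closed in $\CF_D$ by (i) and attacks $x$, so the counter-attack provided by $\Phi(S)$ translates back via (ii) to $S$ attacking $\asms(T') = T^*$, whence $a \in S$ by completeness of $S$. The converse direction is the main obstacle; here I first establish the saturation claim $E = \Phi(\asms(E))$, whose nontrivial inclusion requires showing that any $V \vdash p$ with $V \subseteq \asms(E)$ is defended by $E$: an attacker $y = W \vdash \contrary{b}$ in a closed $T'$ has $b \in V \subseteq \asms(E)$, so $\{b\} \vdash b \in E$ by closedness of $E$, and $y$ also attacks $\{b\} \vdash b$, which $E$ counter-attacks by admissibility; completeness of $E$ then yields $V \vdash p \in E$. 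With $E = \Phi(\asms(E))$ in hand, conflict-freeness, closedness, self-defense, and the defended-set containment for $\asms(E)$ are routine translations via (i)--(iii), the last of these again using $\Phi$-lifting of closed ABA-attackers.

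Grounded is then immediate because $\Phi$ commutes with arbitrary intersections and $\asms \circ \Phi$ is the identity on closed assumption sets, giving $\grd(\CF_D) = \Phi(\grd(D))$ and $\asms(\grd(\CF_D)) = \grd(D)$. The stable case reduces to a short direct argument: for the forward direction, $x \notin \Phi(S)$ witnesses some $a \in \asms(x) \setminus S$ that $S$ attacks via some $V \subseteq S$, producing $V \vdash \contrary{a} \in \Phi(S)$ that attacks $x$; for the converse, $E^+ = A \setminus E$ forces conflict-freeness, so any hypothetical $V \vdash p \notin E$ with $V \subseteq \asms(E)$ would be attacked by some $y = W \vdash \contrary{b} \in E$ with $b \in V \subseteq \asms(E)$, whence $\{b\} \vdash b \in E$ by closedness and $y$ attacks it, contradicting conflict-freeness; this again yields $E = \Phi(\asms(E))$, and the stability properties of $\asms(E)$ transfer via (i) and (ii).
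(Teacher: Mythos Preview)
Your proposal is correct and follows essentially the same route as the paper: observations (i)--(iii) match the paper's auxiliary lemmata (Lemma~\ref{le:closure of single arguments}, Lemma~\ref{le:closed contains asms}), your ``saturation claim'' $E = \Phi(\asms(E))$ for complete $E$ is precisely the paper's key Lemma~\ref{le:co contains all asms} on assumption exhaustiveness, and the two directions for each semantics are argued in the same spirit as Propositions~\ref{prop:ABA to AF adm}--\ref{prop:BAF to ABA stb}. One small caveat: your observation (ii) is over-stated as an ``iff''---the backward direction fails unless $E$ is assumption exhaustive---but you only ever invoke the forward direction (or apply it to sets of the saturated form $\Phi(S)$), so the argument is unaffected.
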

    	\begin{example}
		Recall our ABA $D$ and $\CF_D$ from above. 
		As we already saw, $S = \{b\}\in\adm(D)$ 
		and indeed, $E = \{x\in A \mid \asms(x)\subseteq S\} = \{A_2,A_3,b\}\in\adm(\CF_D)$ as we verified. 
	\end{example}
    As stated in the theorem, we only get one direction for \FT{the} $\adm$ semantics; for \FT{the} $\prf$ semantics, both directions fail.
    %The reason is that admissible-based semantics are not \emph{exhaustive}, i.e., not all necessary arguments are considered automatically.
    We will discuss and subsequently
    %We will work out why this construction fails in this case and 
    fix the underlying issue later. 
	%	Next we devote a subsection to each item in Theorem~\ref{th:semantics correspondence}. 
	%	We start with the second one, \ie given an extension $S\in\sigma(D)$ we show that one can construct a corresponding $\sigma$-extension in $\CF_D$. 
	\subsection{From ABA to BAF}
	Translating an extension of the given ABAF into one in the BAF is the easier direction. 
	Our first step is the following proposition which shows the desired connection between conflict-free and closed sets. %From this we can entail that admissibility is preserved. 
	\begin{restatable}{proposition}{propABAToAFAdm}
		\label{prop:ABA to AF adm}
		Let $D = (\mathcal{L},\mathcal{R},\mathcal{A},\contraryempty)$ be an ABAF 
		and $\CF_D = (A,\attacker,\supporter)$ the %corresponding 
  \FT{instantiated} BAF. 
		Let $S\subseteq \mathcal A$ and let $E = \{ x\in A \mid \asms(x)\subseteq S \}$. 
		\begin{itemize}
			\item If $S\in\cf(D)$, then $E\in\cf(\CF_D)$. 
			\item If $S$ is closed in $D$, then $E$ is closed in $\CF_D$. 
			\item If $S\in\adm(D)$, then $E\in\adm(\CF_D)$. 
		\end{itemize}
	\end{restatable}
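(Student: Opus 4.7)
The three items will be handled in order, each relying on the correspondence between the ABA closure of assumption sets and the BAF closure of the associated arguments.

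For the conflict-freeness claim, I would proceed by contradiction: suppose $x = (S_1 \vdash p)$ and $y = (S_2 \vdash q)$ both lie in $E$ with $(x,y) \in \attacker$. By definition of the instantiated attack relation, $p \in \contrary{S_2}$, i.e.\ $p = \contrary{a}$ for some $a \in S_2$. Since $\asms(x), \asms(y) \subseteq S$, both $S_1$ and $S_2$ are subsets of $S$, so $S \vdash p = \contrary{a}$ with $a \in S$, contradicting $S \in \cf(D)$.

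For the closedness claim, it suffices to show $\mu(E) \subseteq E$, which by induction gives $\cl(E) = E$. Take $a \in \mu(E)$; if $a \in E$ we are done, otherwise there is $e = (S_e \vdash p_e) \in E$ with $(e,a) \in \supporter$, so by Definition~\ref{def:inst baf} we have $a = \{b\} \vdash b$ for some $b \in \cl(S_e)$. Since $\asms(e) = S_e \subseteq S$ and the ABA closure operator is monotone, $b \in \cl(S_e) \subseteq \cl(S) = S$ (using that $S$ is closed). Hence $\asms(a) = \{b\} \subseteq S$ and $a \in E$.

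For admissibility, given the first two parts I only need to show $E \subseteq \Gamma(E)$. Pick $x = (S_x \vdash p) \in E$ and an attacker $y = (T \vdash q)$ of $x$ in $\CF_D$, so $q = \contrary{a}$ for some $a \in S_x \subseteq S$. By Lemma~\ref{le:defense closure single arg} it suffices to show that $E$ attacks $\cl_{\CF_D}(\{y\})$. The key observation is that $y$ supports every generic argument $\{c\} \vdash c$ with $c \in \cl(T)$, and iterating (using idempotency of the ABA closure) the set of assumptions appearing in $\cl_{\CF_D}(\{y\})$ is exactly $\cl(T)$. Now $\cl(T)$ is a closed ABA attacker of $a \in S$ (since $T \vdash \contrary{a}$), so by $S \in \adm(D)$ there exist $b \in \cl(T)$ and $S' \subseteq S$ with $S' \vdash \contrary{b}$. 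The corresponding BAF argument $(S' \vdash \contrary{b})$ lies in $E$ and attacks $\{b\} \vdash b$, which belongs to $\cl_{\CF_D}(\{y\})$. Hence $E$ defends $x$.

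The main conceptual obstacle is the last step: one must verify carefully that the BAF closure of a single ABA argument $T \vdash q$ really exposes, as generic-assumption targets, the entire ABA closure $\cl(T)$, so that ABA-style defense against closed attacker sets translates one-for-one into BAF defense against closures of single attackers. All other steps are straightforward unfoldings of the definitions combined with monotonicity and idempotency of $\cl(\cdot)$.
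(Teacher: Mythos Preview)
Your proof is correct and follows essentially the same route as the paper. The paper proves the three items in the same order with the same contradiction/monotonicity arguments; for defense, the paper argues directly against an arbitrary closed attacker set $E'$ (first reducing to $E' = \cl(x)$) and invokes its auxiliary Lemma~\ref{le:closure of single arguments} to obtain $\asms(\cl(x)) = \cl(T)$, whereas you invoke Lemma~\ref{le:defense closure single arg} to reduce to single attackers and sketch that same closure identity inline as your ``key observation''---these are interchangeable packagings of the same idea.
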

	%\begin{proof}[Sketch of Proof]
	%	The first two items follow with straightforward arguments. We show how defense survives the transition to $\CF_D$. 
	%	Suppose $E'$ is a closed set of arguments attacking $E$. 
	%	Let us first assume $E'$ is of the form $E' = \cl(x)$ for some $x\in A$. 
	%	Let $T = \asms(x)$. Then $\cl(T) = \asms(E')$ can be seen. % by Lemma~\ref{le:closure of single arguments}. 
	%	By admissibility of $S$, $S\vdash \bar a$ for some $a\in \cl(T)$, \ie some $a\in\asms(E')$. 
	%	By definition, there is some tree-based argument $y = S'\vdash \bar a$ with $S'\subseteq S$. 
	%	By choice of $E$ we have $y\in E$ due to $\asms(y) = S'\subseteq S$. 
	%	Hence $E$ attacks $E'$ in $\CF_D$, \ie $E$ defends itself against $E'$ as desired. 
	%	Now for the general case suppose $E'$ is an arbitrary closed set of arguments attacking $E$. 
	%	Observe that for each $x\in E'$, $\cl(x)\subseteq \cl(E')$. 
	%	Hence take any $x\in E'$ attacking $E$. By the above reasoning, $E$ attacks $\cl(x)$ and thus $E$ attacks $E'$. 
	%\end{proof}
	In order to extend this result to complete extensions, we have only left to show that all defended arguments are included in the corresponding BAF $\CF_D$. 
	%This is indeed the case.%, as we show next. 
	\begin{restatable}{proposition}{propABAToAFCom}
		\label{prop:ABA to AF com}
		Let $D = (\mathcal{L},\mathcal{R},\mathcal{A},\contraryempty)$ be an ABAF 
		and $\CF_D = (A,\attacker,\supporter)$ the %corresponding 
  \FT{instantiated} BAF. 
		If $S\in\com(D)$, then for $E = \{ x\in A \mid \asms(x)\subseteq S \}$ we get $E\in\com(\CF_D)$. 
	\end{restatable}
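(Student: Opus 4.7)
The plan is to build on Proposition~\ref{prop:ABA to AF adm}: since $S\in\com(D)\subseteq \adm(D)$, that proposition immediately gives $E\in\adm(\CF_D)$. What remains is the defence-closure condition $\Gamma(E)\subseteq E$. So I would take an arbitrary $x = T\vdash p$ with $x\in\Gamma(E)$ and show that $\asms(x)=T\subseteq S$. The natural route is to fix an arbitrary $a\in T$ and prove that $S$ defends $a$ in the ABA sense; then $S\in\com(D)$ forces $a\in S$, and ranging over $a\in T$ gives $T\subseteq S$, hence $x\in E$.

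To show $S$ defends $a$ in $D$, fix a closed set $T'\subseteq\mathcal A$ that attacks $a$ in $D$. By definition of ABA attack there is a derivation $T''\vdash \contrary{a}$ with $T''\subseteq T'$, yielding an argument $y=T''\vdash\contrary{a}\in A$. Since $a\in T$ we have $\contrary{a}\in\contrary{T}$, so $y$ attacks $x$ in $\CF_D$. Because $E$ defends $x$, Lemma~\ref{le:defense closure single arg} yields some $e\in E$ that attacks a member of $\cl_{\CF_D}(\{y\})$. Here I would first compute that closure explicitly: by Definition~\ref{def:inst baf} the only outgoing support edges from any argument land on generic assumption-arguments, and generic arguments $\{a'\}\vdash a'$ support only generic arguments for elements of $\cl_D(\{a'\})$. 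Iterating $\mu$ therefore gives
\[
\cl_{\CF_D}(\{y\}) \;=\; \{y\}\;\cup\;\{\,\{a'\}\vdash a' \mid a'\in\cl_D(T'')\,\}.
\]

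Now I would do a case split on which member of this set is attacked by $e=S'\vdash q$, where $S'\subseteq S$ because $e\in E$. If $e$ attacks $y$, then $q\in\contrary{T''}$, so $S'$, and hence $S$, attacks $T''\subseteq T'$ in $D$. If instead $e$ attacks a generic argument $\{a'\}\vdash a'$ with $a'\in\cl_D(T'')$, then $q=\contrary{a'}$, so $S$ attacks $\{a'\}$ in $D$. Here the key point is that $T'$ is assumed closed and $T''\subseteq T'$, so $a'\in\cl_D(T'')\subseteq\cl_D(T')=T'$, giving again $S$ attacks $T'$ in $D$. In both cases $S$ attacks the arbitrary closed attacker $T'$ of $a$, so $S$ defends $a$ in $D$, which completes the argument.

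The main obstacle is exactly this closure computation together with the second case: one has to see that an attack on a generic supported argument in $\CF_D$ still corresponds to a legitimate attack on the original closed set $T'$ in $D$, and this is precisely where the hypothesis that $T'$ is closed (rather than merely attacking $a$) is used. Everything else is bookkeeping about derivations and the definitions of $\attacker$ and $\supporter$ in Definition~\ref{def:inst baf}.
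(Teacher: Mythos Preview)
Your proposal is correct and follows essentially the same route as the paper: both reduce to showing that each $a\in\asms(x)$ is defended by $S$ in $D$, by lifting the BAF-defense of $x$ against an attacker $T''\vdash\contrary{a}$ back to an ABA attack on $\cl_D(T'')$. The only cosmetic difference is that the paper packages your explicit closure computation and case split into two auxiliary lemmas (Lemma~\ref{le:reduce defense to single arg} and Lemma~\ref{le:closure of single arguments}), whereas you unroll them inline.
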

	%	\begin{proof}
		%		Since admissibility is already established, we have left to show: 
		%		
		%		(fixed-point) 
		%		Suppose $E$ defends $x\in A$. 
		%		We have to show that $x\in E$. 	
		%		Let $a\in\asms(x)$. 
		%		We show that $a$ is defended by $S$ in $D$. 
		%		We make use of Lemma~\ref{le:reduce defense to single arg} and consider some tree-based argument $T\vdash \bar a$, \ie it attacks $a$. 
		%		
		%		Now $T\vdash \bar a$ attacks $x$ in $\CF_D$. 
		%		Since $E$ defends $x$, there is some $S'\subseteq S = \asms(E)$ with $S'\vdash \bar b$ for some $b\in \cl(T)$; in particular, this argument $S'\vdash \bar b$ is contained in $E$.  
		%		Thus, $\bar b\in \theory_D(S)$ and therefore $S$ counter-attacks $\cl(T)$ in $D$. 
		%		
		%		As $T\vdash \bar a$ was an arbitrary attacker of $a$, Lemma~\ref{le:reduce defense to single arg} ensures that $S$ defends $a$. 
		%		Completeness of $S$ thus implies $a\in S$. 
		%		Since $a$ was an arbitrary assumption in $\asms(x)$, we deduce $\asms(x)\subseteq S$. 
		%		By construction of $E$, $x\in E$ as desired.  
		%	\end{proof}
	Moreover, the set of attacked assumptions is preserved. 
	Thus, we also find stable 
	%and set-stable 
	extensions of $\CF_D$.% 
	\begin{restatable}{proposition}{propABAToAFStb}
		\label{prop:ABA to AF stb}
		Let $D = (\mathcal{L},\mathcal{R},\mathcal{A},\contraryempty)$ be an ABAF 
		and $\CF_D = (A,\attacker,\supporter)$ the %corresponding 
  \FT{instantiated} BAF. 
		%Let $\sigma\in\{\stb,\sstb\}$.  
		If $S\in\stb(D)$, then for $E = \{ x\in A \mid \asms(x)\subseteq S \}$ we get $E\!\in\!\stb(\CF_D)$. 
	\end{restatable}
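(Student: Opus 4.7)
The plan is to derive the two stability conditions for $E$ in $\CF_D$—namely, that $E$ is closed and that $E^+ = A\setminus E$—directly from the two stability conditions for $S$ in $D$, namely closedness of $S$ and the property that $S$ attacks every $a\in\mathcal A\setminus S$.

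First, since $S\in\stb(D)$ implies that $S$ is conflict-free and closed, I would apply Proposition~\ref{prop:ABA to AF adm} to conclude that $E$ is conflict-free and closed in $\CF_D$. Conflict-freeness of $E$ immediately yields one inclusion, $E^+\subseteq A\setminus E$, so only the reverse inclusion requires real work.

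For $A\setminus E\subseteq E^+$, I would take an arbitrary argument $y = (T\vdash q)\in A\setminus E$. By definition of $E$, $T\not\subseteq S$, so I can pick some $a\in T$ with $a\notin S$. Stability of $S$ in $D$ guarantees that $S$ attacks $a$, hence there exists $S'\subseteq S$ and a derivation $S'\vdash\contrary{a}$. By Definition~\ref{def:inst baf}, the derivation $x = (S'\vdash\contrary{a})$ is an argument in $A$, and since $\asms(x)=S'\subseteq S$, we have $x\in E$. Finally, because $a\in T$ and $\contrary{a}\in\contrary{T}$, the pair $(x,y)$ lies in $\attacker$, so $y\in E^+$. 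Together with the reverse inclusion, this yields $E^+ = A\setminus E$, and combined with closedness of $E$ established above, we obtain $E\in\stb(\CF_D)$.

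I do not expect a serious obstacle here: the only subtlety is verifying that the attacker witnessing $S\vdash \contrary{a}$ really gives rise to an argument of $\CF_D$ that lies in $E$, which is immediate from the fact that its assumption set $S'$ is a subset of $S$. Everything else reduces to unfolding the definitions of $\stb$ in both formalisms and reusing Proposition~\ref{prop:ABA to AF adm}.
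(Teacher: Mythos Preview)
Your proposal is correct and follows essentially the same approach as the paper: invoke Proposition~\ref{prop:ABA to AF adm} for conflict-freeness and closedness, then for any $y=(T\vdash q)\in A\setminus E$ pick $a\in T\setminus S$, use stability of $S$ to obtain an argument $S'\vdash \contrary{a}$ with $S'\subseteq S$, and conclude that this argument lies in $E$ and attacks $y$. Your write-up is slightly more explicit about the $E^+\subseteq A\setminus E$ inclusion, but the substance is identical.
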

	%	\begin{proof}
		%		We know already that $E$ is conflict-free and closed. 
		%		
		%		($\stb$) 
		%		Suppose $x\in A\setminus E$. 
		%		Let $x$ be of the form $T\vdash p$. 
		%		By construction of $E$, $T\setminus S\neq \emptyset$; consider some $a\in T\setminus S$. 
		%		Since $S$ is stable, $S$ attacks $a$, \ie there is some tree-based argument $S'\vdash \bar a$ with $S'\subseteq S$. 
		%		We have $S'\vdash \bar a\in E$ and hence $E$ attacks $x$. 
		%		
		%		($\sstb$) 
		%		Again let $x\in A\setminus E$ be of the form $T\vdash p$. 
		%		As above, $T\setminus S\neq \emptyset$ and we consider some $a\in T\setminus S$. 
		%		Since $S$ is set-stable, $S$ attacks $\cl(a)$, 
		%		Hence there is some $b\in\cl(a)$ and $S'\vdash \contrary{b}$ a tree-based argument where $S'\subseteq S$.  
		%		In particular, $S'\vdash \contrary{b}\in E$. 
		%		By definition of the support relation $\supporter$, $x = T\vdash p$ supports $\{b\}\vdash b$ due to $b\in\cl(T)$ (because $a\in T$). 
		%		Hence $E$ attacks $\cl(x)$. 
		%	\end{proof}
	Consequently, the first item in Theorem~\ref{th:semantics correspondence} is shown. 
	
	\subsection{From BAF to ABA}
	Turning extensions of the instantiated BAF into extensions of the underlying ABAF is more involved and does not work for admissible sets without any further restriction. 
	It is important to understand \emph{why} it does not work for admissible sets since this will also demonstrate why complete-based semantics do not face this issue. 
	The problem is related to the way we have to construct our support relation. 
	We illustrate this in the following example.% 
	\begin{example}
		\label{ex:nont assumption exhaustive mismatch}
		Let 
		$D = (\mathcal{L},\mathcal{R},\mathcal{A},\contraryempty)$ 
		be the ABAF where 
		$\mathcal L = \{a,b,c,\contrary{a},\contrary{b},\contrary{c},p\}$, 
		$\mathcal A = \{a,b,c\}$, 
		$\contraryempty$ is as indicated, and %rules 
  $\mathcal R = \{p\gets a.,\; q\gets b.,\; c \gets p,q.,\; \contrary{c} \gets c. \}$.
		Observe that $S=\{a,b\}$ is not admissible in $D$ since $S$ is not closed: %I
  \FT{i}ndeed, we can derive 
		$p$ from $a$ and 
		$q$ from $b$ and thus $c$ from $S$, \ie $c\in\cl(S)$. 
		Now consider $\CF_D = (A,\attacker,\supporter)$:
		\begin{center}
			\begin{tikzpicture}
				\node[draw,label={above:$A_1$}] (arg1a) at (2.5,0.65) {
					\begin{tikzpicture}[xscale=0.8,yscale=0.5]
						\node[targ] (p) at (0,0) {$p$};
						\node[targ] (q) at (0,-1.3) {$a$};
						
						\path[-]
						(p) edge (q)
						;
					\end{tikzpicture}
				};
				\node[draw,label={above:$A_2$}] (arg1b) at (3.5,0.65) {
					\begin{tikzpicture}[xscale=0.8,yscale=0.5]
						\node[targ] (p) at (0,0) {$q$};
						\node[targ] (q) at (0,-1.3) {$b$};
						
						\path[-]
						(p) edge (q)
						;
					\end{tikzpicture}
				};
				\node[draw,label={above:$A_3$}] (arg2) at (0.75,0.35) {
					\begin{tikzpicture}[xscale=0.8,yscale=0.5]
						\node[targ] (c) at (0,1.3) {$c$};
						\node[targ] (p) at (-.7,0) {$p$};
						\node[targ] (q) at (.7,0) {$q$};
						\node[targ] (a) at (-.7,-1.3) {$a$};
						\node[targ] (b) at (.7,-1.3) {$b$};
						
						\path[-]
						(a) edge (p)
						(b) edge (q)
						(p) edge (c)
						(q) edge (c)
						;
					\end{tikzpicture}
				};
				\node[draw,label={above:$A_4$}] (arg4) at (4.82,0.65) {
					\begin{tikzpicture}[xscale=0.8,yscale=0.5]
						\node[targ] (c) at (0,0) {$c$};
						\node[targ] (nc) at (0,1.3) {$\contrary{c}$};
						
						\path[-]
						(c) edge (nc)
						;
					\end{tikzpicture}
				};
				\node[draw,label={above:$A_5$}] (arg3) at (6.5,0) {
					\begin{tikzpicture}[xscale=0.8,yscale=0.5]
						\node[targ] (nc) at (0,2.6) {$\contrary{c}$};
						\node[targ] (c) at (0,1.3) {$c$};
						\node[targ] (p) at (-.7,0) {$p$};
						\node[targ] (q) at (.7,0) {$q$};
						\node[targ] (a) at (-.7,-1.3) {$a$};
						\node[targ] (b) at (.7,-1.3) {$b$};
						
						\path[-]
						(a) edge (p)
						(b) edge (q)
						(p) edge (c)
						(q) edge (c)
						(c) edge (nc)
						;
					\end{tikzpicture}
				};
				
				\node[draw] (arga) at (2.5,-.8) {
					$a$
				};
				\node[draw] (argb) at (3.5,-.7) {
					$b$
				};
				\node[draw] (argc) at (4.5,-.5) {
					$c$
				};
				
				\path[->]
				(arg4) edge[loop left] (arg4)
				(arg4) edge[bend right] (argc)
				(arg3) edge (argc)
				
				(arg1a) edge[dotted] (arga)
				(arg1b) edge[dotted] (argb)
				%die von A_3: 
				(arg2) edge[dotted] (arga)
				(arg2) edge[dotted] (argb)
				%die von A_5:
				(arg3) edge[dotted, bend left=20] (arga)
				(arg3) edge[dotted, bend left=15] (argb)
				(arg4) edge[bend left, dotted] (argc)
				(arg3) edge[dotted, bend left=10] (argc)
				(arg2) edge[dotted, bend right=5] (argc)
				;
			\end{tikzpicture}
		\end{center}
		We want to emphasize that there is neither a support arrow from $A_1$ to $c$ nor from $A_2$ to $c$; the fact that 
		$c\in\cl(\{a,b\})$  
		holds is reflected in 
		$(A_3,c)\in\supporter$ 
		and 
		$(A_5,c)\in\supporter$. 
		
		Consider now %the admissible extension
		$E = \{a,b,A_1,A_2\}.$ 
		As all arguments in $E$ are unattacked and have no out-going support arrows, it is clear that $E\in\adm(\CF_D)$.
		Yet, the required assumptions to build these arguments are $\{a,b\}$, despite $\{a,b\}\notin\adm(D)$.
		%Trying to fix this issue by building the closure, we find $\cl(\{a,b\}) = \{a,b,c\}$ with $c$ being a self-attacker. 
		%That is, we also cannot extend $\{a,b\}$ to an admissible set in $D$.  
	\end{example}
	The mismatch in the previous example occurred because we did not take \emph{all} arguments we can build from $a$ and $b$. 
	Indeed%: W
 \FT{, w}e did not include $A_3$ and $A_5$ in our extension $E$ of $\CF_D$. 
	These arguments encode the support from $a$ and $b$ to $c$ and\FT{,} thus, we would have detected the missing $c$ we cannot defend. 
	This observation leads to the following notion. 
	\begin{definition}
		Let $D = (\mathcal{L},\mathcal{R},\mathcal{A},\contraryempty)$ be an ABAF;
        $\CF_D = (A,\attacker,\supporter)$ the %associated 
        \FT{instantiated} BAF. 
		A set $E\subseteq A$ is \emph{assumption exhaustive} if $\asms(x)\subseteq \asms(E)$ implies $x\in E$.
	\end{definition}
	\begin{example}
		In the previous Example~\ref{ex:nont assumption exhaustive mismatch}, 
		for the set $E = \{a,b,A_1,A_2\}$ of arguments we have $\asms(E) = \{a,b\}$ and hence $E$ is not assumption exhaustive because $A_3$ and $A_5$ also satisfy $\asms(A_i)\subseteq \asms(E)$. 
	\end{example}
	\begin{remark}
		In the previous subsection we started with assumptions $S$ and constructed $E = \{x\in A\mid \asms(x)\subseteq S \}$. 
		Such $E$ is assumption exhaustive by design. 
	\end{remark}
	The next proposition states that the problematic behavior 
	we observed in Example~\ref{ex:nont assumption exhaustive mismatch} regarding admissible extensions 
	does not occur for assumption exhaustive sets.% of arguments. 
	\begin{restatable}{proposition}{propAssumptionExhaustiveAdm}
		\label{prop:assumption exhaustive adm}
		Let $D = (\mathcal{L},\mathcal{R},\mathcal{A},\contraryempty)$ be an ABAF 
		and $\CF_D = (A,\attacker,\supporter)$ the %corresponding 
  \FT{instantiated} BAF. 
		Let $E\subseteq A$ be assumption exhaustive and let $S = \asms(E)$.
		\begin{itemize}
			\item If $E\in\cf(\CF_D)$, then $S\in\cf(D)$. 
			\item If $E$ is closed in $\CF_D$, then $S$ is closed in $D$. 
			\item If $E\in\adm(\CF_D)$, then $S\in\adm(D)$. 
		\end{itemize}
	\end{restatable}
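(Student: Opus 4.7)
The plan is to handle each of the three items in turn, with the first two following by direct contradiction arguments that exploit assumption exhaustiveness, and the third building on them by combining Lemma~\ref{le:defense closure single arg} with a careful analysis of the BAF-closure of a single attacking argument.

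For the first item, I would argue by contradiction. Suppose $S \notin \cf(D)$, so there exist $a \in S$ and $S' \subseteq S$ with $S' \vdash \contrary{a}$. Then the argument $x = (S' \vdash \contrary{a})$ has $\asms(x) = S' \subseteq S = \asms(E)$, and the generic argument $\{a\} \vdash a$ has assumption set $\{a\} \subseteq S$; by assumption exhaustiveness both lie in $E$. By Definition~\ref{def:inst baf} we have $(x, \{a\} \vdash a) \in \attacker$, contradicting $E \in \cf(\CF_D)$.

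For the second item, again by contradiction: suppose $a \in \cl(S) \setminus S$. Then $a \in \mathcal{A}$ and there exists $S' \subseteq S$ with $S' \vdash a$, so $a \in \cl(S')$. Let $x = (S' \vdash a)$; assumption exhaustiveness puts $x \in E$, and $(x, \{a\}\vdash a) \in \supporter$ by the definition of the support relation. Since $E$ is closed in $\CF_D$, the generic argument $\{a\}\vdash a$ lies in $E$, whence $a \in \asms(E) = S$, a contradiction.

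The third item is the main step. By items~1 and~2, $S$ is conflict-free and closed, so it remains to show that $S$ defends each $a \in S$ against every closed attacker $T \subseteq \mathcal{A}$. Fix such a $T$ together with a derivation $T' \vdash \contrary{a}$ witnessing the attack, with $T' \subseteq T$, and set $y = (T' \vdash \contrary{a})$. Since $a \in S = \asms(E)$, assumption exhaustiveness gives $\{a\}\vdash a \in E$, and $y$ attacks this generic argument in $\CF_D$. As $E$ is admissible, Lemma~\ref{le:defense closure single arg} yields that $E$ attacks $\cl_{\CF_D}(\{y\})$. The key observation is that this BAF-closure consists precisely of $y$ itself together with the generic arguments $\{a'\}\vdash a'$ for $a' \in \cl(T')$, because only generic arguments lie in the range of $\supporter$ and a generic argument $\{a'\}\vdash a'$ supports only further generic arguments in $\cl(\{a'\}) \subseteq \cl(T')$. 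Now let $x \in E$ witness the attack; write $x = (U \vdash q)$ with $U = \asms(x) \subseteq S$. Either $q \in \contrary{T'}$, so $q = \contrary{u}$ for some $u \in T' \subseteq T$, or $x$ attacks $\{a'\}\vdash a'$ with $a' \in \cl(T')$, so $q = \contrary{a'}$ with $a' \in \cl(T') \subseteq \cl(T) = T$ (using closedness of $T$). In either case $S$ attacks $T$ via $U \vdash q$.

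The main obstacle I expect is the third item, specifically getting the computation of $\cl_{\CF_D}(\{y\})$ right and then using closedness of $T$ in $D$ to absorb $\cl(T')$ into $T$; the preceding two items are short contradiction arguments that serve as lemmas.
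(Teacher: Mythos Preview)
Your proposal is correct and follows essentially the same approach as the paper. The only difference is packaging: where the paper invokes the auxiliary Lemmas~\ref{le:reduce defense to single arg} and~\ref{le:closure of single arguments} (reducing ABA defense to single-argument attackers, and identifying $\asms(\cl_{\CF_D}(\{y\}))$ with $\cl(\asms(y))$), you work these out inline---citing Lemma~\ref{le:defense closure single arg} on the BAF side and computing $\cl_{\CF_D}(\{y\})$ explicitly---so the underlying argument is the same.
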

	%	\begin{proof}
		%		(conflict-free) 
		%		Suppose $S\vdash \bar a$ with $a\in S$. 
		%		Then, there is a tree-based argument $x = S'\vdash \bar a$ with $S'\subseteq S$ in $\CF_D$. 
		%		By Lemma~\ref{le:closed contains asms}, $x$ attacks $E$ in $\CF_D$. 
		%		Since $S'\subseteq S = \asms(E)$, we infer $x\in E$ since $E$ is assumption exhaustive. 
		%		Hence $E$ is not conflict-free, a contradiction. 
		%		
		%		(closed) 
		%		Suppose $a\in \cl(S)$. 
		%		Then there is some $S'\subseteq S$ with $S'\vdash a$. 
		%		Since $E$ is assumption exhaustive, $E$ contains each argument $x\in A$ with $\asms(x)\subseteq S$; in particular we obtain $S'\vdash a\in E$. 
		%		Since $(S'\vdash a,\{a\}\vdash a)\in\supporter$ and $E$ is closed, $\{a\}\vdash a\in E$ follows. 
		%		Hence $a\in\asms(E) = S$. 
		%		
		%		(defense)
		%		Let $S'$ be a closed set of assumptions attacking $S$. 
		%		Applying Lemma~\ref{le:reduce defense to single arg} we suppose $S' = \cl(T)$ for some tree-based argument $x = T\vdash \bar a$ with $a\in S$. 
		%		Since $a\in\asms(E)$, and $E$ is assumption exhaustive, $x$ attacks $E$ in $\CF_D$. 
		%		By admissibility of $E$, there is some $S'\subseteq S$ and a tree-based argument $S'\vdash \bar b\in E$ s.t.\ $b\in \cl(T)$. 
		%		Since $S'\subseteq S$, we have $\bar b\in\theory_D(S)$ as well and hence, $S$ counter-attacks $S'$ in $D$. 	
		%	\end{proof}
	Given the results we have established so far, Proposition~\ref{prop:assumption exhaustive adm} can only serve as an intermediate step because it relies on assumption exhaustive sets of arguments. 
	However, within the context of an abstract BAF this notion does not make any sense; it is tailored %for 
 \FT{to} arguments stemming from instantiating $D$. 	
	Because of this, the following lemma is crucial. 
	It states that for complete-based semantics 
	%(cf.\ Proposition~\ref{prop:baf semantics adm vs co based}), 
	each extension is assumption exhaustive. 
	Thus, in this case, the mismatch we observed in Example~\ref{ex:nont assumption exhaustive mismatch} does not occur. 
	\begin{restatable}{lemma}{leCoContainsAllAsms}
		\label{le:co contains all asms}
		Let $D = (\mathcal{L},\mathcal{R},\mathcal{A},\contraryempty)$ be an ABAF 
		and $\CF_D = (A,\attacker,\supporter)$ the %corresponding 
  \FT{instantiated} BAF. 
		If $E\in\com(\CF_D)$, then $E$ is assumption exhaustive. 
	\end{restatable}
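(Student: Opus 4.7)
}
The plan is to take an arbitrary argument $x=(T\vdash p)\in A$ with $\asms(x)=T\subseteq \asms(E)$ and show $x\in E$, exploiting that complete extensions contain every argument they defend, i.e.\ $E=\Gamma(E)$.

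First I would establish the key auxiliary fact: for every assumption $a\in \asms(E)$, the generic argument $\{a\}\vdash a$ lies in $E$. Indeed, since $E$ is admissible it is closed in $\CF_D$; and if $a\in\asms(E)$ there is some $z\in E$ with $a\in\asms(z)\subseteq\cl(\asms(z))$, so by Definition~\ref{def:inst baf} $(z,\{a\}\vdash a)\in\supporter$, which by closedness forces $\{a\}\vdash a\in E$.

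The core of the argument is then to show that $E$ defends $x$. By Lemma~\ref{le:defense closure single arg}, it suffices to verify that $E$ attacks $\cl(\{y\})$ for every attacker $y=(S\vdash q)$ of $x$. Any such $y$ has $q=\contrary{t}$ for some $t\in T\subseteq\asms(E)$, so by the auxiliary fact $\{t\}\vdash t\in E$. But $y$ also attacks $\{t\}\vdash t$ (same conclusion $\contrary{t}$), and since $E$ is admissible it defends $\{t\}\vdash t$; applying Lemma~\ref{le:defense closure single arg} in the other direction, $E$ attacks $\cl(\{y\})$. As this holds for every attacker of $x$, the lemma gives $x\in\Gamma(E)=E$, as required.

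The step I expect to require the most care is verifying the auxiliary fact that $\{a\}\vdash a\in E$ whenever $a\in\asms(E)$; this is where closedness of $E$ combined with the precise form of the support relation from Definition~\ref{def:inst baf} must be invoked. Once that is in place, the rest is a clean application of Lemma~\ref{le:defense closure single arg} in both directions, together with $E=\Gamma(E)$ for complete $E$. Note that the argument fails for mere admissibility because $E\subseteq\Gamma(E)$ does not force $x\in E$, which explains why Example~\ref{ex:nont assumption exhaustive mismatch} can be non-assumption-exhaustive at the admissible level but not at the complete level.
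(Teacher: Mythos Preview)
Your proposal is correct and follows the same overall strategy as the paper: show that $E$ defends any $x$ with $\asms(x)\subseteq\asms(E)$, and then invoke $E=\Gamma(E)$.

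The paper's version is, however, more direct and avoids your auxiliary fact entirely. Instead of first proving $\{t\}\vdash t\in E$ via closedness and the support relation, the paper simply observes that any attacker $y=(U\vdash\contrary{t})$ of $x$ already attacks $E$ itself: since $t\in\asms(E)$, there is some $z\in E$ with $t\in\asms(z)$, and $y$ attacks $z$ by definition of $\attacker$. Hence any closed set attacking $x$ attacks $E$, and admissibility of $E$ gives the counter-attack immediately, without passing through Lemma~\ref{le:defense closure single arg} or through the generic arguments. Your detour through $\{t\}\vdash t\in E$ is sound but unnecessary; the step you flagged as requiring the most care can in fact be dropped.
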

	%	\begin{proof}
		%		Let $Y\subseteq A$ be a closed set of arguments attacking $x = S \vdash p$ with $S\subseteq \asms(E)$. 
		%		Then some $y \in Y$ is of the form $T\vdash \bar a$ with $a\in S$. 
		%		Since $a\in\asms(E)$, $y$ attacks $E$ by Lemma~\ref{le:closed contains asms}. 
		%		Hence $E\to y$ and thus $E\to Y$.  
		%		Since $Y$ was an arbitrary closed attacker of $x$, $E$ defends $x$. 
		%		Since $E$ is complete, $x\in E$. 
		%	\end{proof}
	\begin{example}
		In Example~\ref{ex:nont assumption exhaustive mismatch},  we saw that the set 
		$E = \{a,b,A_1,A_2\}$ of arguments is not assumption exhaustive. 
		Indeed, since $E$ is admissible, it is clear by definition that each argument $x$ with $\asms(x)\subseteq \{a,b\}$ is defended, including $A_3$ and $A_5$. 
		Hence $E$ is not complete. 
		On the other hand,  
		$\Gamma(E) =  \{a,b,A_1,A_2,A_3,A_5\} = E'$  
		is assumption exhaustive as desired.   
		%Furthermore, we now have $c\in\cl(E')$ which cannot be defended and\FT{,} thus, there is no complete extension in $\CF_D$ which corresponds to $\{a,b\}$. 
	\end{example}
	Lemma~\ref{le:co contains all asms} allows us to apply 
	Proposition~\ref{prop:assumption exhaustive adm} 
	to all complete extensions. 
	Hence we can infer the following result without restricting to assumption exhaustive sets.  %explicitly. 
	\begin{restatable}{proposition}{propBAFToABACom}
		\label{prop:BAF to ABA com}
		Let $D = (\mathcal{L},\mathcal{R},\mathcal{A},\contraryempty)$ be an ABAF 
		and $\CF_D = (A,\attacker,\supporter)$ the %corresponding 
  \FT{instantiated} BAF. 
		If $E\in\com(\CF_D)$, then $S = \asms(E)\in\com(D)$. 
	\end{restatable}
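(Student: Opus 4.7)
The strategy is to chain the two preparatory results from this subsection. Since $E\in\com(\CF_D)$, Lemma~\ref{le:co contains all asms} yields that $E$ is assumption exhaustive; Proposition~\ref{prop:assumption exhaustive adm} then gives $S=\asms(E)\in\adm(D)$. It remains to verify the second clause of Definition~\ref{def:ABA semantics}: every assumption that $S$ defends in $D$ already lies in $S$.

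So fix an assumption $a$ that is defended by $S$ in $D$. The idea is to show that the generic argument $\alpha=\{a\}\vdash a$ is defended by $E$ in $\CF_D$; since $E$ is complete, $E=\Gamma(E)$ will force $\alpha\in E$ and hence $a\in\asms(E)=S$, as required. To establish $\alpha\in\Gamma(E)$, I plan to invoke Lemma~\ref{le:defense closure single arg} and check, for every BAF-attacker $x=(T'\vdash\contrary{a})$ of $\alpha$, that $E$ attacks the BAF-closure $\cl(\{x\})$. Unfolding $\supporter$ from Definition~\ref{def:inst baf} (and noting that the only new elements produced by the support relation are generic arguments, whose outgoing supports stay inside $\cl(T')$ by idempotence of the ABA-closure), this BAF-closure equals $\{x\}\cup\{\,\{c\}\vdash c \mid c\in\cl(T')\,\}$.

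The rest is a clean ABA-to-BAF translation. The set $\cl(T')$ is a closed set of assumptions in $D$ that attacks $a$ (witnessed by the same derivation $T'\vdash\contrary{a}$), so the hypothesis that $S$ defends $a$ provides some $b\in\cl(T')$ together with a derivation $S'\vdash\contrary{b}$ with $S'\subseteq S=\asms(E)$. Assumption exhaustivity of $E$ then forces $(S'\vdash\contrary{b})\in E$; this argument attacks the generic argument $\{b\}\vdash b$, which lies in $\cl(\{x\})$ because $b\in\cl(T')$. Hence $E$ attacks $\cl(\{x\})$, completing the defense check. The main obstacle I expect is the bookkeeping around the two distinct closure notions (the ABA-closure of assumption sets and the BAF-closure of argument sets) and making sure the generic arguments produced by BAF-closure align exactly with the assumptions targeted by ABA-defense; once the explicit description of $\cl(\{x\})$ is secured, the rest follows by a direct application of Definition~\ref{def:inst baf} and Lemma~\ref{le:defense closure single arg}.
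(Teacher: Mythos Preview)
Your proposal is correct and follows essentially the same route as the paper: both first invoke Lemma~\ref{le:co contains all asms} and Proposition~\ref{prop:assumption exhaustive adm} to secure $S\in\adm(D)$, and then, for an assumption $a$ defended by $S$, argue that $E$ defends the generic argument $\{a\}\vdash a$ in $\CF_D$ (so completeness of $E$ yields $a\in S$). The only cosmetic difference is that the paper cites Lemma~\ref{le:closure of single arguments} for the shape of $\cl(\{x\})$ and works via arbitrary closed attacker sets before reducing to $\cl(x)$, whereas you compute $\cl(\{x\})$ explicitly and appeal to Lemma~\ref{le:defense closure single arg} up front; the underlying reasoning is the same.
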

	%	\begin{proof}	
		%		(fixed-point) 
		%		Suppose $S$ defends $a\in\mathcal A$ in $D$. We have to show $a\in S$. 
		%		For this, it suffices to show that $E$ defends $\{a\}\vdash a$, because this implies $\{a\}\vdash a\in E$ (completeness) and hence $a\in\asms(E) = S$. 
		%		Thus, let $E'$ be closed set of arguments in $\CF_D$ attacking $\{a\}\vdash a$. 
		%		
		%		We first assume $E' = \cl(T)$ for some $x = T\vdash \bar a$. 
		%		By assumption, $S$ defends $a$ and thus, $\bar b\in\theory(S)$ for some $b\in T$. 
		%		Hence there is some tree-based argument $S'\vdash \bar b$ with $S'\subseteq S$. 
		%		Since $S'\subseteq S = \asms(E)$, we have that $S'\vdash \bar b\in E$ by Lemma~\ref{le:co contains all asms}. 
		%		It follows that $E$ counter-attacks $E'$. 
		%		In case $E'$ is an arbitrary closed set of attackers, take one argument $x\in E'$ attacking $E$ and reason analogously. 
		%		
		%		Since $E'$ was an arbitrary closed attacker of $\{a\}\vdash a$ and since $E$ is complete, $\{a\}\vdash a\in E$ as desired. 
		%		We deduce $a\in S$ and again since $a$ was an arbitrary assumption defended by $S$, we have that $S$ contains all defended assumptions in $D$. 
		%	\end{proof}
	%As a corollary, we entail the claim from Theorem~\ref{th:semantics correspondence} for %$\grd$ 
	%and $\cprf$ 
	%extensions, since they intersect 
	%and maximize 
	%$\com$. 
	%, respectively. 
	\begin{corollary}
		Let $D = (\mathcal{L},\mathcal{R},\mathcal{A},\contraryempty)$ be an ABAF 
		and $\CF_D = (A,\attacker,\supporter)$ the %corresponding 
  \FT{instantiated} BAF. 
		%Let $\sigma\in\{\grd,\cprf\}$. 
		\begin{itemize}
			\item If $E\in\grd(\CF_D)$, then $S = \asms(E)\in\grd(D)$. 
			\item If $S\in\grd(D)$, then for $E = \{ x\in A \mid \asms(x)\subseteq S \}$ we have that $E\in\grd(\CF_D)$. 
		\end{itemize}
	\end{corollary}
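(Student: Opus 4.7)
The plan is to reduce the corollary directly to the complete-semantics correspondence already in place. By Propositions~\ref{prop:ABA to AF com} and~\ref{prop:BAF to ABA com}, together with Lemma~\ref{le:co contains all asms}, the maps $\Phi\colon S\mapsto \{x\in A\mid \asms(x)\subseteq S\}$ and $\Psi\colon E\mapsto \asms(E)$ form mutually inverse bijections between $\com(D)$ and $\com(\CF_D)$. First I would verify this bijection explicitly: $\Psi\circ\Phi=\mathrm{id}$ follows because each generic argument $\{a\}\vdash a$ with $a\in S$ satisfies $\asms(\{a\}\vdash a)=\{a\}\subseteq S$ and hence lies in $\Phi(S)$; and $\Phi\circ\Psi=\mathrm{id}$ is exactly the statement that every complete BAF extension is assumption exhaustive.

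Next, I would push this bijection through the intersections defining the two grounded extensions. Writing $S^\star=\bigcap_{T\in\com(D)}T$ and $E^\star=\bigcap_{E'\in\com(\CF_D)}E'$, the bijection gives
\[
E^\star \;=\; \bigcap_{T\in\com(D)}\Phi(T) \;=\; \{x\in A\mid \asms(x)\subseteq S^\star\},
\]
from which $\asms(E^\star)=S^\star$ follows (the $\supseteq$ direction uses the generic arguments once more). Both statements of the corollary are then immediate: if $E\in\grd(\CF_D)$ then $E=E^\star$ and hence $\asms(E)=S^\star\in\grd(D)$; conversely, if $S\in\grd(D)$ then $S=S^\star$ and hence $\{x\in A\mid \asms(x)\subseteq S\}=E^\star\in\grd(\CF_D)$.

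I do not anticipate any serious obstacle: the entire argument is bookkeeping once the complete-semantics correspondence is in hand. The one subtlety worth flagging is the degenerate case in which $\com(D)$ is empty (and so, by the paper's convention, the empty intersection is interpreted as $\emptyset$): one must observe that $\com(\CF_D)$ is then also empty, which holds because any $E\in\com(\CF_D)$ would be sent by $\Psi$ to an element of $\com(D)$. With that case dispatched, the corollary follows.
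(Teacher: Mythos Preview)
Your proposal is correct and follows exactly the approach the paper intends: the corollary is stated without proof immediately after the complete-semantics correspondence (Propositions~\ref{prop:ABA to AF com} and~\ref{prop:BAF to ABA com}), signalling that it is meant to be read off from that bijection, which is precisely what you do. Your explicit verification that $\Phi$ and $\Psi$ are mutual inverses (using Lemma~\ref{le:co contains all asms} for one direction and the generic arguments $\{a\}\vdash a$ for the other) and your handling of the empty-intersection convention are the right details to fill in.
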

	Finally, stable extensions (in the BAF $\CF_D$)	are also assumption exhaustive due to Lemma~\ref{le:co contains all asms} (each stable extension is complete). 
	This yields the desired connection for $\stb$.% 
	\begin{restatable}{proposition}{propBAFToABAStb}
		\label{prop:BAF to ABA stb}
		Let $D = (\mathcal{L},\mathcal{R},\mathcal{A},\contraryempty)$ be an ABAF 
		and $\CF_D = (A,\attacker,\supporter)$ the %corresponding 
  \FT{instantiated} BAF. 
		If $E\in\stb(\CF_D)$, then $S = \asms(E)\in\stb(D)$. 
	\end{restatable}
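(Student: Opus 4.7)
The plan is to reduce the statement to the already-proved BAF-to-ABA transfer for complete semantics (Proposition~\ref{prop:BAF to ABA com}) by showing that every stable extension of $\CF_D$ is in fact complete, and then to handle the extra stability condition ``$S$ attacks every $a\in\mathcal{A}\setminus S$'' by applying $E^+=A\setminus E$ to the generic arguments $\{a\}\vdash a$.

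First, I would argue that $E\in\stb(\CF_D)$ implies $E\in\com(\CF_D)$. Conflict-freeness and closedness come for free from the definition of stable. For admissibility, every attacker $b$ of any $a\in E$ lies in $E^+=A\setminus E$, hence is itself counter-attacked by some element of $E$; since any closed set attacking $a$ must contain such a $b$, $E$ counter-attacks it too, so $E\subseteq\Gamma(E)$. For $\Gamma(E)\subseteq E$, I would contrapose: if $x\notin E$ then $x\in E^+$, so some $y\in E$ attacks $x$. Since $E$ is closed, $\cl(\{y\})\subseteq E$, and by conflict-freeness $E$ does not attack $\cl(\{y\})$; yet $\cl(\{y\})$ is a closed attacker of $x$. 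So $x$ is not defended by $E$, giving $x\notin\Gamma(E)$.

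Second, since $E\in\com(\CF_D)$, Lemma~\ref{le:co contains all asms} yields that $E$ is assumption exhaustive. Setting $S=\asms(E)$, Proposition~\ref{prop:assumption exhaustive adm} then tells us that $S$ is conflict-free and closed in $D$. It remains to verify the stability condition: $S$ attacks every $a\in\mathcal{A}\setminus S$. For any such $a$, the generic argument $\{a\}\vdash a\in A$ cannot be in $E$, for otherwise $a\in\asms(E)=S$. By $E^+=A\setminus E$, some $x=(T\vdash p)\in E$ attacks $\{a\}\vdash a$ in $\CF_D$; by Definition~\ref{def:inst baf} this forces $p\in\contrary{\{a\}}=\{\contrary{a}\}$, i.e.\ $p=\contrary{a}$. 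Moreover $T=\asms(x)\subseteq\asms(E)=S$, so $S\vdash\contrary{a}$, which is exactly the ABA-attack $S$ on $a$ we needed.

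The main obstacle is the first step — establishing that BAF-stability entails BAF-completeness under the novel semantics of Section~\ref{sec:BAFs}, because the defense notion only counter-attacks \emph{closed} attackers, so one has to be careful with the closure of single-argument attackers when proving $\Gamma(E)\subseteq E$. Once this is in place, the remaining two steps are essentially bookkeeping: invoking assumption exhaustiveness and reading off the attack on each external assumption from the generic argument $\{a\}\vdash a$.
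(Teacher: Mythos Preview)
Your proposal is correct and follows essentially the same route as the paper: first reduce to completeness so that Lemma~\ref{le:co contains all asms} gives assumption exhaustiveness, then invoke Proposition~\ref{prop:assumption exhaustive adm} for conflict-freeness and closedness of $S$, and finally read off the attack on each $a\in\mathcal{A}\setminus S$ from the generic argument $\{a\}\vdash a$. The only difference is that the paper merely asserts ``each stable extension is complete'' in the text preceding the proposition, whereas you actually supply the argument (and your $\Gamma(E)\subseteq E$ step via $\cl(\{y\})\subseteq E$ is the right way to handle the closed-attacker subtlety).
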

\begin{example}
	Let us head back to our motivating Example~\ref{ex:motivating} on climate change. 
	We instantiate the following BAF. 
	\begin{center}
		\begin{tikzpicture}[xscale=1.5]
			\node[] (labl) at (2.55,-1.3) {$A_1$};
			\node[draw] (argl) at (3,-1.75) {
				\begin{tikzpicture}[xscale=0.8,yscale=0.5]
					\node[targ] (p) at (0,0) {$\contrary{mr}$};
					\node[targ] (q) at (0,-1.3) {$sr$};
					
					\path[-]
					(p) edge (q)
					;
				\end{tikzpicture}
			};
			\node[] (labl) at (4.55,-1.3) {$A_2$};
			\node[draw] (argm) at (5,-1.75) {
				\begin{tikzpicture}[xscale=0.8,yscale=0.5]
					\node[targ] (p) at (0,0) {$\contrary{sr}$};
					\node[targ] (q) at (0,-1.3) {$s$};
					
					\path[-]
					(p) edge (q)
					;
				\end{tikzpicture}
			};
			\node[] (labl) at (5.55,-1.3) {$A_3$};
			\node[draw] (argr) at (6,-1.75) {
				\begin{tikzpicture}[xscale=0.8,yscale=0.5]
					\node[targ] (p) at (0,0) {$s$};
					\node[targ] (q) at (0,-1.3) {$\top$};
					
					\path[-]
					(p) edge (q)
					;
				\end{tikzpicture}
			};

			\node[draw] (cc) at (1,-1.75) {
				$cc$
			};
			\node[draw] (mr) at (2,-1.75) {
				$mr$
			};
			\node[draw] (sr) at (4,-1.75) {
				$sr$
			};
			
			\path[->]
			%attack 
			(argm) edge (sr)
			(argm) edge[bend right] (argl)
			(argl) edge (mr)
			%support 
			(cc) edge[dotted] (mr)
			(argl) edge[dotted] (sr)
			;
			
		\end{tikzpicture}
	\end{center}
	The BAF reflects the fact that $cc$ (in favor of climate change) can only be accepted when $mr$ (more rain) is also included in the extension. 
	As desired, $cc$ is acceptable; for instance, 
	$\{ cc, mr \}\in\com(D)$. 
	Correspondingly, 
	$\{ cc, mr, A_1,A_3 \}\in\com(\CF_D)$ so the acceptability of $cc$ is also found in $\CF_D$. 
\end{example}

\section{BAFs and Admissible Semantics}	
\label{sec:pbafs}
 
	\renewcommand{\PF}{\mathbb{F}}
	\newcommand{\allprem}{\mathcal{P}}
	\newcommand{\prem}{\ensuremath{\pi}}

	Our analysis in Section~\ref{sec:ABA vs BAF} reveals that we cannot capture admissible (and consequently preferred) ABA semantics by means of our \FT{instantiated} BAFs since there is no %tool 
 \FT{way} to guarantee that the accepted sets of arguments are assumption exhaustive% in the constructed graph $\CF_D$
 . 
	In this section, we will propose a slightly augmented version of BAFs which additionally stores this information. 
	This proposal is in line with recent developments in AF generalizations which capture certain features of instantiated graphs in addition to a purely abstract view  \cite{DvorakW20,Rapberger20,DBLP:journals/jair/RapbergerU23}. 
	In Section~\ref{sec:computational complexity} we will see that the computational price we have to pay for this is moderate. 
	
	\begin{definition}
		Let $\allprem$ be a set (of premises). 
		A \emph{premise-augmented BAF} (pBAF) $\PF$ is a tuple 
		$\PF = (\args, \attacker, \supporter, \prem)$ 
		where 
		$\CF= (\args, \attacker, \supporter)$ is a BAF and 
		$\prem:\args\to 2^\allprem$ 
		is the \emph{premise function};  
		%The BAF 
		%$\CF = (\args, \attacker, \supporter)$ 
		$\CF$ 
		is called the \emph{underlying} BAF of $\PF$. 
	\end{definition}
        We let $\prem(E) = \bigcup_{a\in E} \prem(a)$. 
	We sometimes abuse notation and write 
	$\PF = (\CF, \prem)$ 
	for the pBAF 
	$\PF = (\args, \attacker, \supporter, \prem)$ 
	with underlying BAF 
	$\CF= (\args, \attacker, \supporter)$. 
	The following properties are defined due to the underlying BAF $\CF$: 
	\begin{definition}
		Let $\PF = (\CF,\prem)$ be a pBAF. 
		A set $E\subseteq \args$ is \emph{conflict-free} resp. \emph{closed} whenever this is the case for $E$ in $\CF$; $E$ \emph{defends} $a\in A$ in $\PF$ iff this is the case in $\CF$. 
	\end{definition}
	The only novel concept we require is the notion of an exhaustive set of arguments. 
	\begin{definition}
		Let $\PF = (\CF,\prem)$ be a pBAF. 
		A set $E\subseteq \args$ is \emph{exhaustive} iff $\prem(a)\subseteq \prem(E)$ 
        implies $a\in E$. 
	\end{definition}
	Semantics for pBAFs are defined similar\FT{ly} as for BAFs, but with the important difference that we require all admissible-based extensions to be exhaustive.
	\begin{definition}
		For a pBAF $\PF = (\CF,\prem)$, a set 
		$E\in\cf(\CF)$ is  
		\begin{itemize} 
			\item \emph{admissible}, $E\in\adm(\PF)$, iff $E$ is exhaustive and  $E\in\adm(\CF)$; 
			\item \emph{preferred}, $E\in\prf(\PF)$, iff it is \FT{$\subseteq$-}maximal admissible; 
			\item \emph{complete}, $E\in\com(\PF)$, iff $E\in\adm(\PF)$ and $E = \Gamma(E)$; 
			\item \emph{grounded}, $E\in\grd(\PF)$, iff $E = \bigcap_{S\in\com(\PF)}S$;
			\item \emph{stable}, $E\in\stb(\PF)$, iff it is closed and $E_\CF^+ = A\setminus E$. 
		\end{itemize}
	\end{definition}
	\begin{example}
		Let us illustrate how pBAFs can help us fixing our issue illustrated in Example~\ref{ex:nont assumption exhaustive mismatch}. 
		Let us construct the same BAF, but assign to each argument the assumptions required to entail it as premises. 
		\begin{center}
			\begin{tikzpicture}[>=stealth,xscale=1.15, yscale=1]
				\small
				\path
				(0,0) node[arg,label=left:$\{ab\}$](a3){$A_3$}
				(-1.5,0) node[arg,label=left:$\{b\}$](a2){$A_2$}
				(-3,0) node[arg,label=left:$\{a\}$](a1){$A_1$}
				(-3,-1) node[arg,label=left:$\{a\}$](a){$a$}
				(-1.5,-1) node[arg,label=left:$\{b\}$](b){$b$}
				(0,-1) node[arg,label=left:$\{c\}$](c){$c$}
				(1.5,0) node[arg,label=left:$\{c\}$](a4){$A_4$}
				(3,0) node[arg,label=left:$\{ab\}$](a5){$A_5$}
				;
				\path [->,thick]
				(a4) edge[loop below] (a4)
				(a4) edge[bend left=10] (c)
				(a5) edge[bend left=20] (c)
				;
				\path [->,thick, dotted]
				(a4) edge[bend right=10] (c)
				(a5) edge[bend left=15] (c)
				(a3) edge[] (c)
				;
			\end{tikzpicture}
		\end{center}
		We have that, for instance, $E = \{a,A_1\}\in\adm(\PF)$. 
		Both arguments are unattacked with no out-going support arrow. Thus the only condition to verify is exhaustiveness. 
		This property is satisfied since no further argument $x$ satisfies $\prem(x)\subseteq \prem(E) = \{a\}$. 
		
		On the other hand, $E' = \{a,b,A_1,A_2\}$ is not admissible. Since $\prem(E') = \{a,b\}$, exhaustiveness would also require presence of $A_3$ and $A_5$ (which, in turn, would result in acceptance of $c$ which cannot be defended). 
	\end{example}
	Following the observations made in this example, 
	we define $\PF_D$ as follows: The underlying BAF $\CF$ is given as before and $\prem$ stores the assumptions required to entail an argument. 
	\begin{definition}
		For an ABAF  
		$D = (\mathcal{L},\mathcal{R},\mathcal{A},\contraryempty)$, 
		the %corresponding 
  \emph{\FT{instantiated} pBAF} 
		$\PF_D = (\args,\attacker,\supporter,\prem) = (\CF, \prem)$ 
		is %given as
        \begin{align*}
			\CF &= \CF_D && \forall x \in \args: \prem(x) = \asms(x). 
		\end{align*} 
	\end{definition}
	We can now capture any non-flat ABAF as follows. 
	\begin{restatable}{theorem}{thSemanticsCorrespondenceADM}
		\label{th:semantics correspondence 2}
		Let $D = (\mathcal{L},\mathcal{R},\mathcal{A},\contraryempty)$ be an ABAF 
		and $\PF_D = (\CF,\prem)$ the %corresponding 
  \FT{instantiated} pBAF. Then  
		\begin{itemize}
			\item if $E\in\sigma(\PF_D)$, then $\asms(E)\in\sigma(D)$; 
			\item if $S\in\sigma(D)$, then $\{ x\in A \mid \asms(x)\subseteq S \}\in\sigma(\PF_D)$ 
		\end{itemize}
		for any $\sigma\in\{\adm,\com,\prf,\grd,\stb\}$. 
	\end{restatable}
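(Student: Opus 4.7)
The plan is to reduce everything to the results already established for $\CF_D$ in Section~\ref{sec:ABA vs BAF}, exploiting that for the instantiated pBAF $\PF_D$ we have $\prem = \asms$, so the notion of exhaustiveness on $\PF_D$ coincides with assumption-exhaustiveness on $\CF_D$.

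For $\sigma \in \{\com,\grd,\stb\}$, I would first argue that $\sigma(\PF_D) = \sigma(\CF_D)$. For complete extensions this uses Lemma~\ref{le:co contains all asms}: every $E\in\com(\CF_D)$ is assumption-exhaustive, hence exhaustive in $\PF_D$; the conditions $E\in\adm(\CF_D)$ and $E=\Gamma(E)$ carry over verbatim, and the converse inclusion is immediate from the definitions. Grounded follows by intersecting, and stable is literally defined identically on BAFs and pBAFs. The theorem for these semantics is then just Theorem~\ref{th:semantics correspondence}.

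For $\sigma = \adm$, the forward direction is immediate: $E\in\adm(\PF_D)$ means $E$ is exhaustive (hence assumption-exhaustive) and $E\in\adm(\CF_D)$, so Proposition~\ref{prop:assumption exhaustive adm} yields $\asms(E)\in\adm(D)$. For the backward direction I would combine Proposition~\ref{prop:ABA to AF adm} (which gives $E = \{x \mid \asms(x)\subseteq S\}\in\adm(\CF_D)$) with the observation that such $E$ is exhaustive by construction, so $E\in\adm(\PF_D)$.

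The most delicate case is $\sigma = \prf$, where I would reduce to $\adm$ and transfer $\subseteq$-maximality in both directions. The key technical fact is that on exhaustive sets the maps $E \mapsto \asms(E)$ and $S \mapsto \{x \mid \asms(x)\subseteq S\}$ are mutually inverse and strictly monotone: given $a \in S' \setminus \asms(E)$, the generic argument $\{a\}\vdash a$ lies in $\{x \mid \asms(x)\subseteq S'\}$ but not in $E$ (since $a\notin \asms(E)$); conversely, any $x\in E'\setminus E$ with $E'$ exhaustive must satisfy $\asms(x)\not\subseteq \asms(E)$, witnessing $\asms(E')\supsetneq\asms(E)$. Using this book-keeping, an assumed counterexample to maximality on either side transports to the other via Propositions~\ref{prop:ABA to AF adm} and~\ref{prop:assumption exhaustive adm}, contradicting the hypothesis. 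This transfer of strict inclusion is the main obstacle, as every remaining ingredient is already packaged in the cited results; the rest of the proof is then routine verification.
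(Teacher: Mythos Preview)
Your proposal is correct and follows essentially the same route as the paper: reduce $\com$, $\grd$, $\stb$ to Theorem~\ref{th:semantics correspondence} via $\sigma(\PF_D)=\sigma(\CF_D)$, and handle $\adm$ through the identification $\prem=\asms$ together with Propositions~\ref{prop:ABA to AF adm} and~\ref{prop:assumption exhaustive adm}. The only notable difference is that the paper dispatches $\prf$ in a single phrase (``and thus also $\prf$''), whereas you spell out the maximality transfer via the mutually inverse, strictly monotone maps $E\mapsto\asms(E)$ and $S\mapsto\{x\mid\asms(x)\subseteq S\}$; your extra care here is warranted and fills a gap the paper leaves implicit. One small slip: in your strict-monotonicity argument for $E\subsetneq E'$ you invoke exhaustiveness of $E'$, but what you actually need is exhaustiveness of $E$ (to conclude $x\notin E\Rightarrow\asms(x)\not\subseteq\asms(E)$); in context both sets are exhaustive so nothing breaks.
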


	\section{Computational Complexity} 
	\label{sec:computational complexity}
%	In order to emphasize the difference between our setting and instantiating flat ABAFs, we recall the complexity of flat ABAFs and AFs in Table~\ref{table:flat complexity}. 
%	
%	\begin{table}[bt] 
%		\small
%		\setlength\tabcolsep{5pt}
%		\def\arraystretch{1.2}
%		\centering
%		\begin{tabular}{|ll|c|c|c|c|c|}
%			\hline
%			&  & $\adm$ & $\grd$  & $\com$  & $\pref$ & $\stb$ \\ \hline
%			\multicolumn{1}{|l|}{\multirow{2}{*}{$\Ver_{\sigma}$}}                                               & ABA & in $\P$  & in $\P$   & in $\P$  & $\coNP$-c & in $\P$  \\
%			\multicolumn{1}{|l|}{}                                                                               & AF & in $\P$  & in $\P$   & in $\P$  & $\coNP$-c & in $\P$  \\ \hline
%			\multicolumn{1}{|l|}{\multirow{2}{*}{$\Cred_{\sigma}$}}                                              & ABA  & $\NP$-c & in $\P$  & $\NP$-c & $\NP$-c  & $\NP$-c   \\
%			\multicolumn{1}{|l|}{}                                                                               & AF  & $\NP$-c & in $\P$  & $\NP$-c & $\NP$-c  & $\NP$-c \\\hline
%			\multicolumn{1}{|l|}{\multirow{2}{*}{$\Skept_{\sigma}$}}                                             & ABA  & triv.\ & in $\P$  & in $\P$  & $\PiP{2}$-c  & $\DP$-c   \\
%			\multicolumn{1}{|l|}{}                                                                               & AF  & triv.\  & in $\P$  & in $\P$ & $\PiP{2}$-c & $\DP$-c \\\hline
%		\end{tabular}
%		
%		\caption{Computational Complexity: Flat ABA vs. AFs}
%		\label{table:flat complexity}
%	\end{table}

    We consider the usual decision problems (under semantics $\sigma$) in formal argumentation. 
    Let $\mathcal K$ be a knowledge base (\ie an ABAF, BAF, or pBAF), let $a$ be an assumption resp. argument, and let $E$ be a set of assumptions resp. arguments. 
\begin{itemize}
	\item Credulous acceptance $\Cred_\sigma$: Given $\mathcal K$ and some $a$, is it true that $a\in E$ for some $E\in\sigma(\mathcal K)$? 
	\item Skeptical acceptance $\Skept_\sigma$: Given $\mathcal K$ and some $a$, is it true that $a\in E$ for each $E\in\sigma(\mathcal K)$? 
	\item Verification $\Ver_\sigma$: Given $\mathcal K$ and a set $E$, is it true that $E\in\sigma(\mathcal K)$? 
\end{itemize}

	We start with the computational complexity of BAFs with our novel semantics. 
	The high level observation is that many tasks are close to reasoning in usual AFs. However, computing the grounded extension is much harder, inducing certain consequences (e.g. there is no shortcut for skeptical reasoning under complete semantics). 
	\begin{restatable}{theorem}{thBAFComplexity}
		\label{th:BAF complexity}
		For BAFs, the problem 
		\begin{itemize}
			\item 
			$\Ver_{\sigma}$ is 
			tractable for $\sigma\in\{\adm,\com,\stb\}$, 
			$\coNP$-complete for $\sigma = \prf$, and  
			$\DP$-complete for $\sigma = \grd$. 
			\item 
			$\Cred_{\sigma}$ is 
			$\NP$-complete for $\sigma\in\{\adm,\com,\prf,\stb\}$ and 
			$\DP$-complete for $\sigma = \grd$. 
			\item 
			$\Skept_{\sigma}$ is 
			trivial for $\sigma = \adm$, 
			$\DP$-complete for $\sigma \in \{\com,\grd,\stb\}$, and
			$\Pi^P_2$-complete for $\sigma = \prf$.  
			%			\item 
			%			$\Exists_{\sigma}$ is 
			%			trivial for $\sigma \in \{\adm,\prf\}$ and  
			%			$\NP$-complete for $\sigma \in \{\com,\grd,\stb\}$. 
		\end{itemize}
	\end{restatable}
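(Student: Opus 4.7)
The plan is to prove the theorem by combining polynomial-time algorithms for verification, standard guess-and-check for credulous and skeptical membership in the polynomial hierarchy, and reductions from Dung AFs (which arise as BAFs with empty support) together with \textsc{SAT-UNSAT} for the DP-complete cases.

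\textbf{Upper bounds.} The central technical observation is that the operator $\mu$ is monotone and stabilizes after at most $|\args|$ iterations, so $\cl(E)$ is computable in polynomial time; using Lemma~\ref{le:defense closure single arg}, defense against closed attackers reduces to iterating over each attacker $b$, computing $\cl(\{b\})$, and checking a single counter-attack, so conflict-freeness, closedness and defense-testing are all in $\P$. This places $\Ver_\sigma\in\P$ for $\sigma\in\{\adm,\com,\stb\}$; $\Ver_\prf\in\coNP$ by guessing a strictly larger admissible superset as negative witness; and $\Cred_\sigma\in\NP$ for $\sigma\in\{\adm,\com,\prf,\stb\}$ and $\Skept_\prf\in\PiP{2}$ via standard guess-and-check. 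The DP upper bounds for $\Ver_\grd$, $\Cred_\grd$, $\Skept_\com$, $\Skept_\grd$ and $\Skept_\stb$ all share the same structure: since $\com(\CF)$ may be empty (cf.\ Example~\ref{ex:baf complete not necessary}), asserting that a designated argument lies in every extension splits into a $\coNP$ universal part plus an $\NP$ existence part for a witnessing extension, whose conjunction is exactly DP. Finally, $\Skept_\adm$ is constantly false because $\emptyset\in\adm(\CF)$ always falsifies membership of any named argument.

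\textbf{Lower bounds.} A Dung AF $(A,R)$ coincides semantically with the BAF $(A,R,\emptyset)$ under each of our five semantics (closure collapses to identity, defense to standard Dung defense), so the $\NP$-hardness of credulous reasoning, $\coNP$-hardness of $\Ver_\prf$, and $\PiP{2}$-hardness of $\Skept_\prf$ transfer verbatim from the AF literature. For the DP-hardness claims I would reduce from $\textsc{SAT-UNSAT}$: given $(\varphi_1,\varphi_2)$, construct a disjoint union of two BAF gadgets, where the first enforces existence of a complete extension of a certain shape iff $\varphi_1\in\mathrm{SAT}$, while the second forces universal acceptance of a marker iff $\varphi_2\in\mathrm{UNSAT}$; a shared query argument coupled to both markers then makes credulous, verification and skeptical queries in the union correspond exactly to the conjunction $\varphi_1\in\mathrm{SAT}\wedge\varphi_2\in\mathrm{UNSAT}$.

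\textbf{Main obstacle.} The hardest step will be engineering the two gadgets so that they remain genuinely independent under deductive support: since even unattacked arguments can be forced into or out of an extension via closure, the two components must not entangle through stray support edges, and one must verify that complete and stable extensions of the union decompose cleanly as products of the components' extensions. Once this independence is secured, the combinatorial content of each reduction is routine, and the full classification follows uniformly.
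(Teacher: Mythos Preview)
Your proposal is correct and follows essentially the same route as the paper: polynomial-time verification via computing closures and applying Lemma~\ref{le:defense closure single arg}, guess-and-check for the $\NP$/$\coNP$/$\PiP{2}$ bounds, inheritance of lower bounds from Dung AFs viewed as BAFs with $\supporter=\emptyset$, and a $\textsc{SAT-UNSAT}$ reduction using two disjoint gadgets (the paper's Constructions~\ref{constr:co non-empty reduction 2} and~\ref{constr:co and unsat forumla}) for the $\DP$-hardness results. Your description of the $\DP$ membership for $\Ver_\grd$ is slightly loose---verifying a \emph{set} as grounded needs both a $\coNP$ inclusion check ($E\subseteq S$ for every $S\in\com$) and an $\NP$ exclusion check (for each $a\notin E$ guess a complete $E_a\not\ni a$), not just a single-argument query---but this is the same decomposition the paper uses, and your anticipated obstacle about gadget independence is handled exactly as you suggest, by taking a disjoint union.
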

	Surprisingly, the price we have to pay for also capturing admissible-based semantics is rather small. 
	The computational complexity of the pBAFs we construct is almost the same; the only difference is that skeptical acceptance w.r.t.\ admissible semantics is not trivial anymore, but now becomes $\coNP$-complete. 
%	First, however, we require the following observation: 
%	When constructing $\PF_D$, we impose intrinsic constraints on the $\prem$ function, captured in the following definition. 
%	\begin{definition}
%		\label{def:well-formed pbaf}
%		Let $\PF = (\args,\attacker,\supporter,\prem)$ be a pBAF. 
%		Then $\PF$ is called \emph{well-formed} iff for each $E\subseteq E' \subseteq \args$, it holds that: 
%		i) $\prem(E)\subseteq \prem(E')$;  
%		ii) if $\prem(E) = \emptyset$, then $E^- = \emptyset$; 
%		iii) if $E$ supports some $x$, then $\prem(E)\neq \emptyset$. 
%	\end{definition}
%	\begin{restatable}{proposition}{propConstructedPBAFWellFormed}
%		For any ABAF $D$, the corresponding pBAF $\PF_D$ is well-formed. 
%	\end{restatable}
	\begin{restatable}{proposition}{propSkepAdmPBAF}
		For pBAFs, $\Skept_{\adm}$ is $\coNP$-complete. 
	\end{restatable}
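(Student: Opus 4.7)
I will establish membership in $\coNP$ via polynomial verification of admissibility, and $\coNP$-hardness via a direct reduction from $\textsc{Unsat}$.

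For membership, given a candidate $E$ the tests for conflict-freeness and closedness in the underlying BAF $\CF$ are immediate; defense reduces to computing $\cl(\{b\})$ for each attacker $b$ of each $a \in E$ (a polynomial fixed-point) and checking that $E$ attacks $\cl(\{b\})$; and exhaustiveness amounts to testing $\prem(a) \not\subseteq \prem(E)$ for each $a \notin E$. Hence the complementary problem is in $\NP$ (guess an admissible extension avoiding the query argument and verify in polynomial time), so $\Skept_{\adm}$ is in $\coNP$.

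For hardness, the plan is to reduce from $\textsc{Unsat}$. Given a CNF $\phi$ with variables $x_1,\ldots,x_n$ and clauses $C_1,\ldots,C_m$, I would construct a pBAF $\PF_\phi$ with the following arguments: for each $i$, a pair $t_i, f_i$ with mutual attacks and disjoint fresh premises $\{x_i^+\}$ and $\{x_i^-\}$; a \emph{choice-forcing gadget} consisting of $V_i$ (with $\prem(V_i)=\emptyset$) attacked by an auxiliary $v_i$ (with a fresh singleton premise), where $v_i$ is in turn attacked by both $t_i$ and $f_i$; for each clause $C_j$, a self-attacking argument $c_j$ whose premise set is the union of the \emph{falsifying} variable-premises of $C_j$ (the premise of $f_i$ if $x_i$ occurs positively in $C_j$, and the premise of $t_i$ if $\neg x_i$ occurs); and finally a fresh isolated query argument $z$ with its own fresh premise. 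The intended correspondence is: since $\prem(V_i)=\emptyset$, exhaustiveness forces every $V_i$ into every admissible $E$, and defending $V_i$ against $v_i$ then forces $t_i$ or $f_i$ into $E$; together with the mutual attack this pins $E$ down to a \emph{full} truth assignment. If $\phi$ is satisfiable, the corresponding assignment-extension together with all $V_i$ is admissible in $\PF_\phi$ and omits $z$, so $z$ is not skeptical; if $\phi$ is unsatisfiable, every assignment-extension falsifies some $C_j$, whence all of $\prem(c_j)$ sits inside $\prem(E)$ and exhaustiveness forces the self-attacking $c_j$ into $E$, destroying conflict-freeness. Thus no admissible extension exists and $z$ is vacuously skeptical.

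The main obstacle is arranging exhaustiveness and defense together so that every admissible extension commits to a \emph{full} (rather than partial) truth assignment. Without the $V_i$--$v_i$ gadget, near-empty sets would trivially satisfy exhaustiveness whenever the clause-premises are incomparable with them, decoupling admissibility from the clause structure and breaking the reduction. Coupling the unattackable $V_i$ (forced by exhaustiveness) with an attacker $v_i$ that can only be eliminated using variable-choice arguments makes full commitments unavoidable, which is precisely what drives the clause-falsifying contradiction in the unsatisfiable case. Correctness of the reduction then follows by the case analysis sketched above; polynomiality of $\PF_\phi$ in $|\phi|$ is immediate from the construction.
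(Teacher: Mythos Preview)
Your proof is correct. The membership argument coincides with the paper's (guess a counterexample extension and verify admissibility in polynomial time). For hardness, both you and the paper reduce from \textsc{Unsat} and share the same core gadget: arguments with empty premise set are forced into every exhaustive set, and defending them against a single attacker then forces a literal choice for each variable, so every admissible set encodes a full truth assignment.

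Where you diverge is in the treatment of clauses and the choice of query argument. The paper keeps the classical Dung-style encoding: clause arguments attack a ``formula'' argument $\psi$, which in turn attacks $\bar\psi$; a second empty-premise gadget forces one of $\psi,\bar\psi$ into every admissible set, and the query is $\bar\psi$. In the unsatisfiable case admissible sets still exist (each containing $\bar\psi$ together with some falsified clause argument), so $\bar\psi$ is skeptically accepted non-vacuously. You instead exploit the premise function more aggressively: each clause argument $c_j$ is self-attacking and carries as premises exactly the falsifying literal-premises, so exhaustiveness drags $c_j$ into any assignment-extension that falsifies $C_j$, killing conflict-freeness. Your query is a fresh isolated argument $z$, accepted skeptically only because no admissible set exists in the unsatisfiable case.

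Both routes are valid. The paper's reduction is a mild adaptation of its earlier BAF construction and uses only empty and singleton premise sets, which keeps it close to the standard AF encodings. Your reduction is arguably more direct---the premise machinery does the clause-checking rather than the attack structure---but it relies on non-singleton premise sets and on the ``vacuous'' reading of skeptical acceptance when $\adm(\PF)=\emptyset$; the latter is legitimate here since pBAFs (unlike BAFs) need not have $\emptyset$ admissible.
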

	From this observation the following main theorem follows as a corollary of the complexity results for BAFs. 
	\begin{restatable}{theorem}{thPBAFComplexity}
		\label{th:wf pBAF complexity}
		For pBAFs, the problem 
		\begin{itemize}
			\item 
			$\Ver_{\sigma}$ is 
			tractable for $\sigma\in\{\adm,\com,\stb\}$, 
			$\coNP$-complete for $\sigma = \prf$, and  
			$\DP$-complete for $\sigma = \grd$. 
			\item 
			$\Cred_{\sigma}$ is 
			$\NP$-complete for $\sigma\in\{\adm,\com,\prf,\stb\}$ and 
			$\DP$-complete for $\sigma = \grd$. 
			\item 
			$\Skept_{\sigma}$ is 
			$\coNP$-complete for $\sigma = \adm$, 
			$\DP$-complete for $\sigma \in \{\com,\grd,\stb\}$, and
			$\Pi^P_2$-complete for $\sigma = \prf$.  
			%			\item 
			%			$\Exists_{\sigma}$ is 
			%			trivial for $\sigma \in \{\adm,\prf\}$ and  
			%			$\NP$-complete for $\sigma \in \{\com,\grd,\stb\}$. 
		\end{itemize}
	\end{restatable}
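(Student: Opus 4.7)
The plan is to derive Theorem~\ref{th:wf pBAF complexity} as a corollary of Theorem~\ref{th:BAF complexity} together with the preceding proposition on $\Skept_\adm$. The proof splits into matching upper and lower bounds, glued by two simple observations: first, exhaustiveness of a candidate set can be tested in polynomial time; second, any BAF can be viewed as a pBAF in which the exhaustiveness requirement becomes trivially true, giving a generic hardness transfer.

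For the lower bounds, I would associate to every BAF $\CF=(A,\attacker,\supporter)$ the pBAF $\PF_\CF=(\CF,\prem)$ with $\prem(a)=\{a\}$ for each $a\in A$. Since $\prem(E)=E$ for every $E\subseteq A$, the condition $\prem(b)\subseteq\prem(E)\Rightarrow b\in E$ holds trivially, so exhaustiveness never excludes any set. A routine check against the definitions then gives $\sigma(\CF)=\sigma(\PF_\CF)$ for every $\sigma\in\{\adm,\com,\prf,\grd,\stb\}$, so $\CF\mapsto\PF_\CF$ is a polynomial-time many-one reduction from each BAF decision problem to its pBAF counterpart. This transfers every hardness bound from Theorem~\ref{th:BAF complexity} to pBAFs. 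The sole exception is $\Skept_\adm$, which is trivial for BAFs but whose $\coNP$-hardness is exactly the content of the preceding proposition.

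For the upper bounds, I would observe that given a pBAF $(\CF,\prem)$ and a set $E$, one verifies exhaustiveness in polynomial time by iterating over $a\in A\setminus E$ and checking $\prem(a)\subseteq\prem(E)$. Composing this with the verifiers underlying Theorem~\ref{th:BAF complexity} yields the stated membership results: $\Ver_\adm$ and $\Ver_\com$ remain tractable (BAF verifier plus exhaustiveness check), $\Ver_\stb$ stays tractable since stability in pBAFs does not require exhaustiveness, $\Ver_\prf$ remains in $\coNP$ (guess a proper admissible superset and verify), and $\Ver_\grd$ remains in $\DP$ since the underlying complete semantics is only augmented by a tractable filter. Credulous acceptance follows by the standard guess-and-check, while skeptical acceptance inherits its BAF bound for $\sigma\in\{\com,\grd,\stb,\prf\}$ and $\Skept_\adm\in\coNP$ comes from the preceding proposition.

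The main obstacle actually sits in the preceding proposition rather than in this theorem: introducing exhaustiveness genuinely changes the $\Skept_\adm$ landscape, since the empty set need no longer be admissible in a pBAF (any argument with empty premise set forces its inclusion, and shared premises may force inclusion of further arguments). The $\coNP$-hardness reduction there must exploit precisely this nontriviality, typically by encoding UNSAT so that candidate admissible sets correspond to satisfying assignments modulo the exhaustiveness constraint. Once that proposition is in hand, everything above is a routine combination of the embedding argument and the polynomial exhaustiveness check.
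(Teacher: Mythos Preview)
Your proposal is correct and matches the paper's approach: the paper also treats Theorem~\ref{th:wf pBAF complexity} as a corollary of Theorem~\ref{th:BAF complexity} plus the $\Skept_\adm$ proposition, noting that ``almost all results follow since verifying exhaustiveness can be done in polynomial time'' and that only $\Skept_\adm$ changes because the empty set need not be admissible. You are in fact more explicit than the paper about the hardness-transfer embedding $\prem(a)=\{a\}$, which the paper leaves implicit.
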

Table~\ref{table:non-flat complexity} summarizes our complexity results for BAFs and pBAFs and compares them to the known computational complexity of non-flat ABA \cite{DBLP:journals/ai/CyrasHT21}.\footnote{In their paper, the standard definition of the empty intersection is used, but the results can be derived analogously.} 
\begin{table}[bt] 
	\small
	\setlength\tabcolsep{4pt}
	\def\arraystretch{1.2}
	\centering
	\begin{tabular}{|ll|c|c|c|c|c|}
		\hline
		&  & $\adm$ & $\grd$  & $\com$  & $\pref$ & $\stb$ \\ \hline
		\multicolumn{1}{|l|}{\multirow{3}{*}{$\Ver_{\sigma}$}}                                               & 
		ABA & $\coNP$-c  & $\DP_2$-c  & $\coNP$-c & $\PiP{2}$-c & in $\P$  \\
		\multicolumn{1}{|l|}{}                                                                               & 
		BAF &in $\P$  & $\DP$-c  & in $\P$  & $\coNP$-c  & in $\P$   \\ 
		\multicolumn{1}{|l|}{}                                                                               & 
		pBAF &in $\P$  & $\DP$-c  & in $\P$  & $\coNP$-c  & in $\P$   \\ \hline
		\multicolumn{1}{|l|}{\multirow{3}{*}{$\Cred_{\sigma}$}}                                              & ABA  & $\SigmaP{2}$-c & $\DP_2$-c  & $\SigmaP{2}$-c  & $\SigmaP{2}$-c  & $\NP$-c   \\
		\multicolumn{1}{|l|}{}                                                                               & BAF  & $\NP$-c  & $\DP$-c  & $\NP$-c & $\NP$-c  & $\NP$-c \\
		\multicolumn{1}{|l|}{}                                                                               & pBAF  & $\NP$-c  & $\DP$-c  & $\NP$-c & $\NP$-c  & $\NP$-c \\\hline
		\multicolumn{1}{|l|}{\multirow{3}{*}{$\Skept_{\sigma}$}}                                             & ABA  & $\PiP{2}$-c & $\DP_2$-c  & $\DP_2$-c  & $\PiP{3}$-c  & $\DP$-c   \\
		\multicolumn{1}{|l|}{}                                                                               & BAF  & triv.\  & $\DP$-c  & $\DP$-c & $\PiP{2}$-c & $\DP$-c \\
		\multicolumn{1}{|l|}{}                                                                               & pBAF  & $\coNP$-c  & $\DP$-c  & $\DP$-c & $\PiP{2}$-c & $\DP$-c \\\hline
		%			\multicolumn{1}{|l|}{\multirow{2}{*}{$\Exists_{\sigma}$}}                                            & ABA  & $\SigmaP{2}$-c & $\SigmaP{2}$-c & $\SigmaP{2}$-c & $\SigmaP{2}$-c & $\NP$-c   \\
		%			\multicolumn{1}{|l|}{}                                                                               & BAF  & triv.\  & $\NP$-c  & $\NP$-c & triv.\  & $\NP$-c \\\hline
	\end{tabular}
	
	\caption{Complexity: Non-flat ABA vs. (p)BAFs}
	\label{table:non-flat complexity}
\end{table}
We observe that for each reasoning problem we consider, pBAFs are one level below non-flat ABA in the polynomial hierarchy. 
Moreover, BAFs and pBAFs are comparable for most reasoning problems, with skeptical reasoning for $\adm$ semantics being the only exception. 
This is in line with our results that pBAFs are capable of capturing admissible reasoning in ABA (cf. Theorem~\ref{th:semantics correspondence 2}), whereas BAFs are not. 
Note that Theorem~\ref{th:semantics correspondence 2} does not contradict the differing results pertaining to computational in complexity in ABA vs.\ pBAFs since the instantiation procedure yields exponentially many arguments in general. 
%We discuss the potential implications in Section~\ref{sec:conc}. 

	\section{Discussion and Related Work}

	There 
	%are many 
	is a rich selection of 
	bipolar 
	argumentation approaches in the literature \cite{cayrol2021higher}. The most prominent
	ones are deductive \cite{boella2010support}, necessary \cite{nouioua2010bipolar},
	evidential \cite{oren2008semantics}
	and backing \cite{cohen2012backing} support.
	More recent work on classical BAFs looked at
	symmetry between attack and support \cite{potyka2020bipolar},
	argument attributes \cite{gonzalez2021labeled} and
	monotonicity \cite{gargouri2021notion}.
	
	Our notion of defense can be characterized using notions of extended attacks that occur in BAFs
	\cite{AmgoudCLL08,boella2010support}.
	There is a \emph{mediated attack from $a$ to $b$} if $a$ attacks %and 
 \FT{an} argument $c$ that is transitively supported by $b$ \cite{boella2010support}. Using our notion of closure from Definition  \ref{def_closure_baf}, this can be rewritten as $a$ attacks $cl(\{b\})$.
	Hence, Lemma \ref{le:defense closure single arg} states that $E$ defends $a$ iff for each attacker 
	$b$ of $a$, there is a direct or mediated attack from $E$ to~$b$.
	
	Another approach is due to \cite{AmgoudCLL08}. 
	%extend standard semantical notions in %(attack-only) 
	%Dung-AFs in another way to BAFs.
	Here a set of arguments $S$ defends~\cite{Dung95} an argument $a$ if $S$ attacks every attacker of $a$% (the standard Dung notion)
	. 
	Further,  $S$ attacks an argument $a$ iff there is a direct attack
	from $S$ to $a$ or $S$ transitively supports an argument $b$ that attacks $a$. 
 This amounts to requiring
 %This can also be expressed by just saying 
 that $cl(S)$ attacks $a$. A set of arguments $S$ is then called \emph{conflict free} %in their framework 
	if
	$S$ does not attack any argument in $S$ in the previous sense, that is, if $cl(S)$ does not directly attack any argument in $S$. 
	They call $S$ \emph{admissible} iff $S$ is conflict-free, closed under support  and defends all its elements.
	One important difference to our notion of admissible sets is the definition of defense. While \cite{AmgoudCLL08} allow defense via direct and supported attacks, we allow defense via direct or mediated attacks.
	For instance in 
	$\CF= ( \{a,b,c\}, \{ (a,c), (b,a) \}, \{ (b,c) \} )$, 
%	\begin{center}
%		\begin{tikzpicture}[>=stealth,xscale=1, yscale=1]
%			\small
%			\path
%			(-1.5,0) node[arg,label=left:$\CF:$](a){$a$}
%			(0,0) node[arg](c){$c$}
%			(1.5,0) node[arg](b){$b$}
%			;
%			\path [->,thick]
%			(b) edge[dotted] (c)
%			(a) edge (c)
%			(b) edge[bend left=20] (a)
%			;
%		\end{tikzpicture}
%	\end{center}
	$\{a\}$ is admissible w.r.t.\ our definition
	because it defends itself against $b$ via a mediated attack, is closed under support and conflict-free.
	However, it is not admissible w.r.t.\ the definition in \cite{AmgoudCLL08} because here, it does not defend itself against $b$.
	%w.r.t.\ their definition.%
	%Dispute trees have been used to reason about
	%and to explain other argumentation formalisms 
	%\cite{dung2006dialectic,vcyras2022dispute}.
	%A thorough overview of the complexity of
	%reasoning in ABA can be found in
	%\cite{vcyras2021computational}.
	
	The c-admissibility in \cite{cayrol2005acceptability} is close to ours, but their work uses \emph{supported} and \emph{indirect} defeat which is incompatible with our concepts. 
	
	\cite{CyrasFST17} also explored the relation %between ABAFs and BAFs 
	ABAFs-BAFs, but understands the BAFs under existing semantics in terms of a restricted form of non-flat-ABAFs, rather than
	%Instead, we aimed at understanding any general 
	ABAF as BAF, under new semantics, as we do. 
	\section{Conclusion}
	\label{sec:conc}
	%In this paper, 
	We translated non-flat ABAFs into BAFs. 
	To this end we proposed novel BAF semantics to capture complete-based semantics. 
    By means of a novel formalisms, called \emph{premise-augmented} BAFs, we also established a correspondence between admissible-based semantics of non-flat ABA and our BAFs. 
	We discussed basic properties of these semantics and %then formally 
	proved the correspondence to ABA.  
	We then investigated the computational complexity of BAFs and showed that\FT{,} compared to non-flat ABA, the typical reasoning problems are one level lower in the polynomial hierarchy. 
	
	This work opens several avenues for future work. It would be interesting to  extend our results to further ABA semantics like semi-stable and ideal semantics~\cite{CyrasFST2018}, as well as a set-stable semantics~\cite{bipolarABA}. Further, we only discussed basic properties of our (p)BAF semantics: 
	it would also be interesting to study a version of the fundamental lemma, existence of extensions, and the impact of restricting the graph class. 

    The lower computational complexity of (p)BAFs compared to non-flat ABAFs raises the question \FT{as} to which extent our research can contribute to the development of efficient instantiation-based ABA solvers. 
    As for the flat ABA instantiation by means of AFs, our constructed graphs are infinite in general. 
    However, in the case of AFs, only finitely many arguments suffice to determine the semantics, and techniques to reduce the size of the constructed AF even further are available \cite{DBLP:conf/kr/LehtonenR0W23}.

    As a final remark, most approaches for explaining reasoning in ABA construct the underlying AF and extract a\FT{n} argumentative explanation from it \cite{DBLP:conf/ijcai/Cyras0ABT21}. 
    Our research serves as the first step to enabling this machinery %to 
    \FT{for} non-flat ABAFs as well: %T
    \FT{t}he next goal would be the computation of intuitive explanations in (p)BAFs. 
	This line of research would contribute to applications where non-flat ABAFs give natural representations, as in Example~\ref{ex:motivating} as well as settings where agents share information (e.g. as in \cite{bipolarABA}) and may thus disagree on which information is factual and defeasible. 
	
\section*{Acknowledgements}
This research was partially funded by the European Re-
search Council (ERC) under the European Union’s Horizon 2020 research and innovation programme (grant agreement No. 101020934, ADIX), by J.P. Morgan and by the Royal Academy of Engineering under the Research Chairs and Senior Research Fellowships scheme, and 
by the Federal Ministry of Education and Research of Germany and by S\"achsische Staatsministerium für Wissenschaft, Kultur und Tourismus in the programme Center of Excellence for AI-research ``Center for Scalable Data Analytics and Artificial Intelligence Dresden/Leipzig'', project identification number: ScaDS.AI.
Any views or opinions
expressed herein are solely those of the authors.

	%\end{document}
	
	\clearpage	
	
	\appendix
	%\onecolumn
	\section*{Non-flat ABA is an Instance of Bipolar Argumentation -- Supplementary Material}
	\section{Proof Details of Section~\ref{sec:ABA vs BAF}}

        \thSemanticsCorrespondence*
        \begin{proof}
            This main result is a corollary of the subsecquent propositions. 
        \end{proof}
 
	We start with two auxiliary lemmata. 
	\begin{lemma}
		\label{le:reduce defense to single arg}
		Let $D = (\mathcal{L},\mathcal{R},\mathcal{A},\contraryempty)$ be an ABAF. 
		A set $S\subseteq \mathcal A$ of assumptions defends some $a\in \mathcal A$ 
		iff 
		for each tree-based argument $T\vdash p$ attacking $a$, $S$ attacks $\cl(T)$. 
	\end{lemma}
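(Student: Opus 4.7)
The plan is to prove both directions by translating between the two notions of attack that appear: attack by a \emph{closed set} of assumptions (used in the definition of ABA defense) and attack by a \emph{tree-based argument} $T\vdash p$ (where $T$ is just the leaf-set $\asms(T\vdash p)$, not necessarily closed). The bridge between them is the closure operator $\cl(\cdot)$ on assumption sets.

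For the forward direction, I would assume $S$ defends $a$ in the sense of Definition~\ref{def:ABA defense}. Let $T\vdash p$ be any tree-based argument attacking $a$, so $p=\contrary{a}$. Then $T\subseteq\cl(T)$, $\cl(T)$ is closed by construction, and since $T\vdash \contrary{a}$ we have $\contrary{a}\in\theory_D(\cl(T))$, so $\cl(T)$ is a closed attacker of $a$. By the assumed defense, $S$ attacks $\cl(T)$, which is exactly what has to be shown.

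For the backward direction, I would suppose that $S$ attacks $\cl(T)$ for every tree-based argument $T\vdash p$ attacking $a$, and verify the defense clause. Let $T'$ be any closed set of assumptions attacking $a$. By definition of attack, $\contrary{a}\in\theory_D(T')$, so there is some $T''\subseteq T'$ with a tree-based derivation $T''\vdash \contrary{a}$. By hypothesis, $S$ attacks $\cl(T'')$, i.e.\ there is $b\in \cl(T'')$ with $\contrary{b}\in\theory_D(S)$. Monotonicity of $\cl$ gives $\cl(T'')\subseteq \cl(T') = T'$ (the last equality uses closedness of $T'$), so $b\in T'$, and hence $S$ attacks $T'$ as required.

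The only step that needs a small amount of care is the backward direction: one has to be precise that attack of a set $X$ in ABA goes through some $b\in X$ whose contrary is derivable, and then use monotonicity of $\theory_D$ and $\cl$ together with $T'$ being closed to transport that witness from $\cl(T'')$ into $T'$. There is no real obstacle here — the lemma essentially just rephrases the definition of defense against closed sets in terms of the ``witnessing'' derivations $T''\vdash \contrary{a}$, combined with the fact that closing an attacker cannot escape an already closed superset.
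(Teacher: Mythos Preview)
Your proposal is correct and follows essentially the same approach as the paper's own proof: in both directions you use that $\cl(T)$ is a closed attacker of $a$ (forward), and that any closed attacker $T'$ contains some $T''$ with $T''\vdash\contrary{a}$, whence $\cl(T'')\subseteq\cl(T')=T'$ transports the counterattack (backward). Your write-up is merely more explicit about the witnessing assumption $b$ and the monotonicity of $\cl$, but the argument is the same.
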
 
	\begin{proof}
		($\Rightarrow$) 
		is clear since $\cl(T)$ is a closed set of assumptions attacking $a$. 
		
		($\Leftarrow$) 
		If $T$ is closed set of assumptions attacking $a$, then $\bar a\in\theory_D(T)$. 
		Take an argument $T'\vdash \bar a$ with $T'\subseteq T$. 
		By assumption $S$ attacks $\cl(T')$. 
		Since $\cl(T')\subseteq \cl(T)$, $S$ attacks $\cl(T)$ as well. 
	\end{proof}
	We can therefore reduce our attention to the closure of single arguments, instead of considering arbitrary closed sets of assumptions. 
	\begin{lemma}
		\label{le:closure of single arguments}
		Let $D = (\mathcal{L},\mathcal{R},\mathcal{A},\contraryempty)$ be an ABAF 
		and $\CF_D = (A,\attacker,\supporter)$ the corresponding BAF. 
		Let $x = S\vdash p\in A$ be a tree-based argument. 
		Then $\asms(\cl(x)) = \cl(S)$. 
	\end{lemma}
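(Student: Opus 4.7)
The plan is to unfold the BAF closure operator on the singleton $\{x\}$ and compare it, step by step, to the ABA closure of $S$, exploiting the fact that the only things that can be supported in $\CF_D$ are the generic arguments of the form $\{a\}\vdash a$ for $a \in \mathcal{A}$.

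First I would compute $\mu(\{x\})$ directly from Definition~\ref{def:inst baf}. By the definition of $\supporter$ in $\CF_D$, the out-neighbours of $x = S\vdash p$ under $\supporter$ are exactly the arguments $\{a\}\vdash a$ with $a \in \cl_D(S)$ (writing $\cl_D$ for the ABA closure to distinguish it from the BAF closure). Hence $\mu(\{x\}) = \{x\} \cup \{\{a\}\vdash a \mid a \in \cl_D(S)\}$, and already $\asms(\mu(\{x\})) = S \cup \cl_D(S) = \cl_D(S)$, using $S \subseteq \cl_D(S)$ which follows from $\{a\}\vdash_\emptyset a$.

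Next I would show that $\mu$ stabilises at this step, i.e.\ $\mu^2(\{x\}) = \mu(\{x\})$. The new elements that could be added come from applying $\supporter$ to some $\{a\}\vdash a$ with $a \in \cl_D(S)$; by definition of $\supporter$ these are the $\{b\}\vdash b$ with $b \in \cl_D(\{a\})$. So it suffices to prove the monotonicity fact: if $a \in \cl_D(S)$ and $b \in \cl_D(\{a\})$, then $b \in \cl_D(S)$. This is a standard tree-substitution argument on derivations: take a tree witnessing $S'\vdash a$ with $S' \subseteq S$, take a tree witnessing $\{a\}\vdash b$, and plug the first into each leaf of the second that is labelled $a$ to obtain a derivation from $S'\subseteq S$ of $b$; thus $b\in\theory_D(S)\cap\mathcal A = \cl_D(S)$. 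This gives $\cl_D(\{a\}) \subseteq \cl_D(S)$, so $\mu^2(\{x\}) \subseteq \mu(\{x\})$, and the reverse inclusion is immediate from monotonicity of $\mu$. Consequently $\cl(\{x\}) = \mu(\{x\})$.

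Finally I would take $\asms$ on both sides to conclude $\asms(\cl(x)) = \asms(x) \cup \{a \mid a\in \cl_D(S)\} = S \cup \cl_D(S) = \cl_D(S)$, which is the desired equality. The main (and only non-trivial) obstacle is the transitivity-of-derivations fact used to prove that $\mu$ stabilises after a single step; everything else is straightforward unfolding of the definitions of $\supporter$, $\mu$, and $\asms$.
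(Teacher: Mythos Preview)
Your proposal is correct and follows essentially the same route as the paper's proof: both arguments establish that the BAF closure of $\{x\}$ stabilises after a single application of $\mu$, using the transitivity fact that $a\in\cl_D(S)$ and $b\in\cl_D(\{a\})$ imply $b\in\cl_D(S)$ (the paper phrases this as ``$(x,y)\in\supporter$ and $(y,z)\in\supporter$ imply $(x,z)\in\supporter$'', which is the same statement read through the definition of $\supporter$). Your presentation is slightly more explicit about the tree-substitution underlying that transitivity step, but the structure and key idea are identical.
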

	\begin{proof}
		($\subseteq$) 
		Let $a\in \asms(\cl(x))$. 
		If $a\in S$, then clearly $a\in \cl(S)$. 
		So suppose $a\notin S$. 
		First we observe that if $y\in\cl(x)$, then $(x,y)\in\supporter$, \ie it suffices to consider one step of supports. 
		To see this assume $(x,y)\in\supporter$ and $(y,z)\in\supporter$. 
		By definition, $y$ is of the form $\{b\}\vdash b$ for some assumption $b$ and analogously, $z = \{c\}\vdash c$. 
		From $(x,y)\in\supporter$ we deduce $b\in\cl(S)$ by definition of $\supporter$. 
		Now $(y,z)\in\supporter$ yields $c\in\cl(b)$. 
		But then $c\in\cl(S)$ as well. 
		Hence again by definition, $(x,z)\in\supporter$. 
		
		Thus, if $a\notin S$ but $a\in\cl(x)$, then it must be the case that $(x,\{a\}\vdash a)\in\supporter$. 
		By definition of $\supporter$ this means $a\in\cl(S)$. 
		
		$(\supseteq)$ 
		Suppose $a\in \cl(S)$. If $a\in S$, then $a\in\asms(x)$ and we are done. 
		Otherwise $(x,\{a\}\vdash a)\in\supporter$ by construction. 
		Thus $\{a\}\vdash a\in\cl(x)$ and hence $a\in\asms(\cl(x))$. 
	\end{proof}
	
	\propABAToAFAdm*
	\begin{proof}
		(conflict-free) 
		Suppose there is some argument $x\in E$ attacking $E$. 
		Let $\conc(x) = \bar a$; then $E$ must contain some argument $y$ with $a\in\asms(y)$. 
		Then $a\in S$. 
		Moreover, $x\in E$ implies $S\vdash \conc(x) = \bar a$. 
		Thus, $S$ is not conflict-free, a contradiction. 
		
		(closed) 
		Let $x = \{a\}\vdash a$ be an argument in $\CF_D$ s.t. $(y,x)\in\supporter$ for some $y\in E$. 
		For $y$ of the form $T\vdash p$ we have by definition $a\in\cl(T)$. 
		By choice of $E$, $T\subseteq S$ and hence $\cl(T)\subseteq \cl(S)$. Since $S$ is closed we deduce $a\in S$.  
		Again by choice of $E$, $x\in E$. 
		
		(defense) 
		Suppose $E'$ is a closed set of arguments attacking $E$. 
		Let us first assume $E'$ is of the form $E' = \cl(x)$ for some $x\in A$. 
		Let $T = \asms(x)$. Then $\cl(T) = \asms(E')$ by Lemma~\ref{le:closure of single arguments}. 
		By admissibility of $S$, $S\vdash \bar a$ for some $a\in \cl(T)$, \ie some $a\in\asms(E')$. 
		By definition, there is some tree-based argument $y = S'\vdash \bar a$ with $S'\subseteq S$. 
		By choice of $E$ we have $y\in E$ due to $\asms(y) = S'\subseteq S$. 
		Hence $E\to E'$ in $\CF_D$, \ie $E$ defends itself against $E'$ as desired. 
		Now for the general case suppose $E'$ is an arbitrary closed set of arguments attacking $E$. 
		Observe that for each $x\in E'$, $\cl(x)\subseteq \cl(E')$. 
		Hence take any $x\in E'$ attacking $E$. By the above reasoning, $E\to \cl(x)$ and thus $E\to E'$. 
	\end{proof}
	\propABAToAFCom*
	\begin{proof}
		Since admissibility is already established, we have left to show: 
		
		(fixed-point) 
		Suppose $E$ defends $x\in A$. 
		We have to show that $x\in E$. 	
		Let $a\in\asms(x)$. 
		We show that $a$ is defended by $S$ in $D$. 
		We make use of Lemma~\ref{le:reduce defense to single arg} and consider some tree-based argument $T\vdash \bar a$, \ie it attacks $a$. 
		
		Now $T\vdash \bar a$ attacks $x$ in $\CF_D$. 
		Since $E$ defends $x$, there is some $S'\subseteq S = \asms(E)$ with $S'\vdash \bar b$ for some $b\in \cl(T)$; in particular, this argument $S'\vdash \bar b$ is contained in $E$.  
		Thus, $\bar b\in \theory_D(S)$ and therefore $S$ counter-attacks $\cl(T)$ in $D$. 
		
		As $T\vdash \bar a$ was an arbitrary attacker of $a$, Lemma~\ref{le:reduce defense to single arg} ensures that $S$ defends $a$. 
		Completeness of $S$ thus implies $a\in S$. 
		Since $a$ was an arbitrary assumption in $\asms(x)$, we deduce $\asms(x)\subseteq S$. 
		By construction of $E$, $x\in E$ as desired.  
	\end{proof}
	\propABAToAFStb*
	\begin{proof}
		We know already that $E$ is conflict-free and closed. 
		
		($\stb$) 
		Suppose $x\in A\setminus E$. 
		Let $x$ be of the form $T\vdash p$. 
		By construction of $E$, $T\setminus S\neq \emptyset$; consider some $a\in T\setminus S$. 
		Since $S$ is stable, $S$ attacks $a$, \ie there is some tree-based argument $S'\vdash \bar a$ with $S'\subseteq S$. 
		We have $S'\vdash \bar a\in E$ and hence $E$ attacks $x$. 
		%	
		%	($\sstb$) 
		%	Again let $x\in A\setminus E$ be of the form $T\vdash p$. 
		%	As above, $T\setminus S\neq \emptyset$ and we consider some $a\in T\setminus S$. 
		%	Since $S$ is set-stable, $S$ attacks $\cl(a)$, 
		%	Hence there is some $b\in\cl(a)$ and $S'\vdash \contrary{b}$ a tree-based argument where $S'\subseteq S$.  
		%	In particular, $S'\vdash \contrary{b}\in E$. 
		%	By definition of the support relation $\supporter$, $x = T\vdash p$ supports $\{b\}\vdash b$ due to $b\in\cl(T)$ (because $a\in T$). 
		%	Hence $E$ attacks $\cl(x)$. 
	\end{proof}
	
	For the other direction, we again start with an auxiliary observation. 
	
	\begin{lemma}
		\label{le:closed contains asms}
		Let $D = (\mathcal{L},\mathcal{R},\mathcal{A},\contraryempty)$ be an ABAF 
		and $\CF_D = (A,\attacker,\supporter)$ the corresponding BAF. 
		If $E$ is an assumption exhaustive and closed set of arguments in $\CF_D$ with $a\in\cl(\asms(E))$, then $a\in\asms(E)$. 
		In particular, any argument of the form	$T\vdash \bar a$ attacks $E$. 
	\end{lemma}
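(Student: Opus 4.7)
The plan is to unfold the definition of $\cl(\asms(E))$ in $D$, obtain a concrete tree-based argument, and then push this argument through $E$ via assumption exhaustiveness and through the support relation via closedness in $\CF_D$.

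More precisely, let $S = \asms(E)$ and take $a \in \cl(S)$. By the ABA closure definition $\cl(S) = \theory_D(S)\cap\mathcal{A}$, so $a\in\mathcal{A}$ and there exists some $S'\subseteq S$ and a tree-based derivation witnessing $S'\vdash a$. Denote this argument by $x = (S'\vdash a)\in A$. Since $\asms(x) = S' \subseteq S = \asms(E)$ and $E$ is assumption exhaustive, we conclude $x\in E$.

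Next, we invoke the construction of the support relation in Definition~\ref{def:inst baf}: because $a\in\mathcal{A}$ is derivable from $S'$, we have $a\in\cl(S') = \cl(\asms(x))$, so $(x, \{a\}\vdash a)\in\supporter$ by definition. Because $E$ is closed in $\CF_D$ and $x\in E$, the closure operator $\mu$ forces $\{a\}\vdash a \in E$, and hence $a\in\asms(\{a\}\vdash a)\subseteq\asms(E) = S$. This establishes the main claim $\cl(\asms(E)) \subseteq \asms(E)$ (the reverse inclusion is trivial since $S\subseteq\cl(S)$ always holds).

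For the ``in particular'' part, any tree-based argument $y = T\vdash\bar a$ attacks, by Definition~\ref{def:inst baf}, every argument whose assumption set contains $a$. Since we just showed $a\in\asms(E)$, the generic argument $\{a\}\vdash a$ lies in $E$ and is attacked by $y$; thus $y$ attacks $E$. The only mildly delicate step is the reduction from arbitrary membership in $\cl(S)$ to a concrete derivation $S'\vdash a$, but this is immediate from the definition of $\theory_D$; everything else is a direct reading of the definitions of $\supporter$, closedness, and assumption exhaustiveness.
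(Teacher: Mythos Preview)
Your proof is correct and follows essentially the same approach as the paper: obtain a derivation $S'\vdash a$ with $S'\subseteq\asms(E)$, use assumption exhaustiveness to place it in $E$, use the definition of $\supporter$ to get a support edge to $\{a\}\vdash a$, and use closedness to conclude $\{a\}\vdash a\in E$. Your write-up is somewhat more detailed (and you spell out the ``in particular'' part, which the paper leaves implicit), but the argument is the same.
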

	\begin{proof}
		Since $a\in\cl(\asms(E))$, there is some argument $T\vdash a$ with $T\subseteq\asms(E)$. 
		Since $E$ is assumption exhaustive, $T\vdash a\in E$. 
		Moreover, $a\in\cl(T)$ and hence $(T\vdash a,\{a\}\vdash a)\in \supporter$. 
		Since $E$ is closed, $\{a\}\vdash a\in E$. 
		Hence $a\in\asms(E)$. 
	\end{proof}
	
	\propAssumptionExhaustiveAdm*
	\begin{proof}
		(conflict-free) 
		Suppose $S\vdash \bar a$ with $a\in S$. 
		Then, there is a tree-based argument $x = S'\vdash \bar a$ with $S'\subseteq S$ in $\CF_D$. 
		By Lemma~\ref{le:closed contains asms}, $x$ attacks $E$ in $\CF_D$. 
		Since $S'\subseteq S = \asms(E)$, we infer $x\in E$ since $E$ is assumption exhaustive. 
		Hence $E$ is not conflict-free, a contradiction. 
		
		(closed) 
		Suppose $a\in \cl(S)$. 
		Then there is some $S'\subseteq S$ with $S'\vdash a$. 
		Since $E$ is assumption exhaustive, $E$ contains each argument $x\in A$ with $\asms(x)\subseteq S$; in particular we obtain $S'\vdash a\in E$. 
		Since $(S'\vdash a,\{a\}\vdash a)\in\supporter$ and $E$ is closed, $\{a\}\vdash a\in E$ follows. 
		Hence $a\in\asms(E) = S$. 
		
		(defense)
		Let $S'$ be a closed set of assumptions attacking $S$. 
		Applying Lemma~\ref{le:reduce defense to single arg} we suppose $S' = \cl(T)$ for some tree-based argument $x = T\vdash \bar a$ with $a\in S$. 
		Since $a\in\asms(E)$, and $E$ is assumption exhaustive, $x$ attacks $E$ in $\CF_D$. 
		By admissibility of $E$, there is some $S'\subseteq S$ and a tree-based argument $S'\vdash \bar b\in E$ s.t.\ $b\in \cl(T)$. 
		Since $S'\subseteq S$, we have $\bar b\in\theory_D(S)$ as well and hence, $S$ counter-attacks $S'$ in $D$. 	
	\end{proof}	
	
	\leCoContainsAllAsms*
	\begin{proof}
		Let $Y\subseteq A$ be a closed set of arguments attacking $x = S \vdash p$ with $S\subseteq \asms(E)$. 
		Then some $y \in Y$ is of the form $T\vdash \bar a$ with $a\in S$. 
		Since $a\in\asms(E)$, $y$ attacks $E$. % by Lemma~\ref{le:closed contains asms}. 
		Hence $Y$ attacks $E$ and thus $E$ attacks $Y$ since $E$ defends itself.  
		Since $Y$ was an arbitrary closed attacker of $x$, $E$ defends $x$. 
		Since $E$ is complete, $x\in E$. 
	\end{proof}
	
	\propBAFToABACom*
	\begin{proof}	
		Since $E$ is assumption exhaustive, we have access to Proposition~\ref{prop:assumption exhaustive adm}. In addition, we have left to show: 
		
		(fixed-point) 
		Suppose $S$ defends $a\in\mathcal A$ in $D$. We have to show $a\in S$. 
		For this, it suffices to show that $E$ defends $\{a\}\vdash a$, because this implies $\{a\}\vdash a\in E$ (completeness) and hence $a\in\asms(E) = S$. 
		Thus, let $E'$ be a closed set of arguments in $\CF_D$ attacking $\{a\}\vdash a$. 
		
		We first assume $E' = \cl(x)$ for some $x = T\vdash \bar a$. 
		By assumption, $S$ defends $a$ and thus, $\bar b\in\theory(S)$ for some $b\in \cl(T)$. 
		Hence there is some tree-based argument $S'\vdash \bar b$ with $S'\subseteq S$. 
		Since $S'\subseteq S = \asms(E)$, we have that $S'\vdash \bar b\in E$ by Lemma~\ref{le:co contains all asms}. 
		It follows that $E$ counter-attacks $\cl(x)$ (recall Lemma~\ref{le:closure of single arguments}). 
		In case $E'$ is an arbitrary closed set of attackers, take one argument $x\in E'$ attacking $E$ and reason analogously. 
		
		Since $E'$ was an arbitrary closed attacker of $\{a\}\vdash a$ and since $E$ is complete, $\{a\}\vdash a\in E$ as desired. 
		We deduce $a\in S$ and again since $a$ was an arbitrary assumption defended by $S$, we have that $S$ contains all defended assumptions in $D$. 
	\end{proof}
	
	\propBAFToABAStb*
	\begin{proof}
		We know already that $S$ is conflict-free and closed due to Proposition~\ref{prop:assumption exhaustive adm}.
		%($\stb$)
		Now let $a\in \mathcal A$ with $a\notin S$. 
		Then $a\notin \asms(E)$ and thus $E$ attacks $\{a\}\vdash a$ since $E$ is stable in $\CF_D$. 
		We deduce $\bar a\in\theory_D(\asms(E))$ and hence $S$ attacks $a$ in $D$.
	\end{proof}
	%    \propBAFToABASetstb*
	%	\begin{proof}
		%		We know already that $S$ is conflict-free and closed due to Proposition~\ref{prop:assumption exhaustive adm}.	
		%		Let $a\in \mathcal A$ with $a\notin S$. 
		%		Then $a\notin \asms(E)$ and thus $E$ attacks 
		%		$\cl( \{a\}\vdash a )$, say $\{b\}\vdash b$. 
		%		Then $\bar b\in\theory_D(\asms(E))$ and hence $S$ attacks $\cl(a)$ in $D$. 
		%	\end{proof}
	
	%\propExhaustiveBAFSpecialProperties*
	%\begin{proof}
	%	For defense, observe that 
	%	$E$ attacks $\cl(x)$ in $\CF_D$ 
	%	iff 
	%	$E$ attacks $\cl(x)$ in $\CF_D^{e}$ 
	%	since the additionally supported arguments do not contribute new assumptions to $\cl(x)$ in $\CF_D$ compared to $\CF_D^{e}$. 
	%	The other statements are immediate by construction of $\CF_D^{e}$.
	%\end{proof}

	\section{Proof Details of Section~\ref{sec:pbafs}}
	
	\thSemanticsCorrespondenceADM*
	\begin{proof}
		We have left to show the transfer of exhaustiveness, since then Proposition~\ref{prop:assumption exhaustive adm} is applicable establishing the connection for $\adm$ and thus also $\prf$. 
        For complete-based semantics, the result follows from the previos main theorem for BAFs. 
        The fact that exhaustiveness is preserved is almost immediate: 
		
		(exhaustive) 
		If $E\in\sigma(\PF_D)$, then $E$ is exhaustive, \ie $\prem(a)\subseteq \prem(E)$ implies $a\in E$. 
		By construction, $E$ is thus assumption exhaustive. 
		
		If $S\in\sigma(D)$, then $\{ x\in A \mid \asms(x)\subseteq S \}\in\sigma(\PF_D)$ is assumption exhaustive and thus $\prem(a)\subseteq \prem(E)$ implies $a\in E$ by definition of $\prem$. 
		
		Consequently, Proposition~\ref{prop:assumption exhaustive adm} is applicable from which the claim follows. 
	\end{proof}	
	
	\section{Proof Details of Section~\ref{sec:computational complexity}}

	\thBAFComplexity*
	
	Before heading into our membership and hardness results, let us give the constructions we require. 
	The following adaptation of the standard construction will construct a BAF which has at least one complete extension iff $\Phi$ is satisfiable. 
	It makes use of the same technique we already applied for our introductory BAF examples: An unattacked argument $\top$ defends $\varphi$, and hence each complete extension must defend the latter.
	
	\begin{construction}
		\label{constr:co non-empty reduction}
		Let $\Phi$ a propositional formula over atoms $X = \{x_1,\ldots,x_n\}$ in 3-CNF which we identify with a set $C$ of clauses. 
		Let $\CF_\Phi=(A,\attacker,\supporter)$ where 
		\begin{align*}
			A =& \{x_i, \bar{x}_i \mid 1 \leq i \leq n\} \cup \{c  \mid c \in C\} \cup \{ \top,\varphi \} \\
			\attacker =& \{(x_i, \bar{x}_i), (\bar{x}_i, x_i)\mid 1 \leq i \leq n\} \cup\\   
			& \{(x_i, c) \mid x_i \in c \in C\} \cup \{(\bar{x}_i, c) \mid \neg x_i \in c \in C\} \cup \\
			& \{(c, \varphi) \mid c \in C\}\\
			\supporter =& \{(\top,\varphi)\}
		\end{align*}
		An example of this construction can be found in Figure~\ref{fig:example co non-empty reduction}. 
		
		\begin{figure}
			\begin{center}
				\begin{tikzpicture}[>=stealth,scale=0.8]
					\footnotesize
					\tikzstyle{args}=[circle,draw=black, minimum size=5mm, inner sep=0pt]
					\path 	node[args](t){$\varphi$}
					++(-2,0) node[args](top){$\top$}
					++(-1,-1.3) node[args](c1){$c_1$}
					++(3,0) node[args](c2){$c_2$}
					++(3,0) node[args](c3){$c_3$};
					\path 	(-5,-3.2)  node[args](x1){$x_1$}
					++(2.4,0) node[args](nx1){$\bar x_1$}
					++(1.3,0) node[args](x2){$x_2$}
					++(2.4,0) node[args](nx2){$\bar x_2$}
					++(1.3,0) node[args](x3){$x_3$}
					++(2.4,0) node[args](nx3){$\bar x_3$};
					\path [left,->, thick]	(c1) edge (t)
					(c2) edge (t)
					(c3) edge (t)
					;
					\path [left,->, thick]
					(x1) edge (c1)
					(x2) edge (c1)
					(nx2) edge (c2)
					(x3) edge (c2)
					(nx3) edge (c3)
					(nx1) edge (c3)
					;            
					\path [left,->, thick, bend left]
					(x1) edge (nx1)
					(nx1) edge (x1)
					(x2) edge (nx2)
					(nx2) edge (x2)
					(x3) edge (nx3)
					(nx3) edge (x3)
					;
					\path [->, thick, dotted]
					(top) edge (t)
					;
				\end{tikzpicture}
			\end{center}
			\caption{Construction~\ref{constr:co non-empty reduction} applied to the formula $ \Phi = \{x_1,x_2\}, \{\neg x_1, x_3\}, \{\neg x_1, \neg x_3\} $}
			\label{fig:example co non-empty reduction}
		\end{figure}
	\end{construction}
	\begin{lemma}
		\label{le:co non-empty reduction}
		Let $\Phi$ be a propositional formula in 3-CNF. 
		The BAF $\CF_\Phi$ as given in Construction~\ref{constr:co non-empty reduction} satisfies the following properties: 
		\begin{itemize}
			\item For each $E\in\com(\CF_\Phi)$, $\varphi\in E$ and $\top\in E$.  
			\item It holds that $\com(\CF_\Phi)\neq\emptyset$ iff $\Phi$ is satisfiable.  
		\end{itemize}
	\end{lemma}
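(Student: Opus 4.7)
The plan is to handle the two bullets separately, using only the definitions of closure, defense and completeness for BAFs.

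For the first bullet, I would observe that $\top$ has no incoming attacks in $\CF_\Phi$, so the empty set (trivially) defends it; hence every $E\in\com(\CF_\Phi)$ must contain $\top$. Since $(\top,\varphi)\in\supporter$ is the only support edge ending in $\varphi$, and since complete extensions are closed, $\varphi\in E$ follows immediately.

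For the $(\Leftarrow)$ direction of the second bullet, let $\alpha$ be a satisfying assignment of $\Phi$ and define
\[
E = \{\top,\varphi\}\cup\{x_i\mid\alpha(x_i)=1\}\cup\{\bar x_i\mid\alpha(x_i)=0\}.
\]
I would check admissibility step by step: $E$ is conflict-free because no literal in $E$ attacks its own negation in $E$, and no clause is in $E$; $E$ is closed because the only support edge is $(\top,\varphi)$ and both endpoints are in $E$. For defense, every attacker of a literal $\ell\in E$ is its opposite $\bar\ell$, whose closure is the singleton $\{\bar\ell\}$ (no outgoing support), and $\ell$ attacks $\bar\ell$; every attacker of $\varphi$ is a clause $c$, whose closure is again $\{c\}$, and since $\alpha$ satisfies $c$ there is a literal $\ell\in c\cap E$ attacking $c$. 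Thus $E\in\adm(\CF_\Phi)$. To verify $E\in\com(\CF_\Phi)$, I would show $\Gamma(E)\subseteq E$ by ruling out each argument outside $E$: opposite literals $\bar\ell$ with $\ell\in E$ are not defended, since their attacker $\ell$ has closure $\{\ell\}$ and $E$ cannot attack $\ell$ by conflict-freeness; a clause $c$ is defended only if every literal attacker of $c$, i.e.\ every literal occurring in $c$, has its closure attacked by $E$, which requires the \emph{opposite} of every literal in $c$ to be in $E$, contradicting the fact that $\alpha$ satisfies $c$.

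For the $(\Rightarrow)$ direction, take any $E\in\com(\CF_\Phi)$; by the first bullet $\varphi\in E$, and $E$ must defend $\varphi$ against every clause $c$ attacking it. Since $\cl(\{c\})=\{c\}$, $E$ must directly attack each $c$, which is only possible via a literal of $c$ that lies in $E$. Define $\alpha(x_i)=1$ if $x_i\in E$, $\alpha(x_i)=0$ if $\bar x_i\in E$, and arbitrarily otherwise; note $x_i$ and $\bar x_i$ cannot both lie in $E$ by conflict-freeness, so $\alpha$ is well-defined. Then every clause is satisfied by $\alpha$, so $\Phi$ is satisfiable.

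The only slightly subtle point is the completeness check in $(\Leftarrow)$: one must rule out that $E$ spuriously defends some clause $c$. This is exactly where the weaker BAF defense notion (closure-based, hence self-defeating singleton closures for literals) combines with the satisfiability of $\alpha$ to force at least one literal of $c$ to escape attack by $E$. Beyond this, everything reduces to bookkeeping on the construction.
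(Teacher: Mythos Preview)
Your proposal is correct and follows essentially the same approach as the paper's proof: both establish the first bullet via $\top$ being unattacked and closure forcing $\varphi\in E$, and then derive the second bullet by extracting a satisfying assignment from the literals $E$ must contain to defend $\varphi$ (and conversely building $E$ from a satisfying assignment). The paper's own proof is extremely terse, appealing to ``the usual reasoning for the standard construction,'' whereas you spell out the admissibility and, in particular, the completeness check in the $(\Leftarrow)$ direction explicitly; this added care is appropriate and does not deviate from the intended argument.
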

	\begin{proof}
		Applying the usual reasoning for the standard construction, the second statement is a consequence of the first one. 
		To see the first statement, note that $E\in\com(\CF_\Phi)$ implies $\top\in E$. 
		Since $E$ must be closed, $\varphi\in E$. 
		
		In order for $\varphi$ to be defended, $E$ must contain $x_i$ resp. $\bar x_i$ arguments corresponding to a satisfying assignment of $\Phi$. 
		Hence $\varphi\in E\in\com(\CF_\Phi)$ is possible iff $\Phi$ is satisfiable.  
	\end{proof}
	Hence in the BAF constructed in Construction~\ref{constr:co non-empty reduction}, $\top$ and $\varphi$ are in the grounded extension and none of the $C_i$ are. 
	However, it might happen that some $x_i$ is in the intersection of all complete extensions because it might be true in every satisfying assignment. 
	In this case, $\grd(\CF_\Psi)$ does not only contain $\top$ and $\varphi$, but also all affected $x_i$ resp. $\bar x_i$ arguments. 
	In order to prevent this, we introduces copies $x'_i$ and $\bar x'_i$ of all variables. 
	They interact with the remaining AF analogously to $x_i$ resp. $\bar x_i$ and all four arguments mutually attack each other for each $i$. 
	This yields:% 
	\begin{construction}
		\label{constr:co non-empty reduction 2}
		Let $\Phi$ a propositional formula over atoms $X = \{x_1,\ldots,x_n\}$ in 3-CNF which we identify with a set $C$ of clauses. 
		Let $\CG_\Phi=(A,\attacker,\supporter)$ where 
		\begin{align*}
			A =& \{x_i, x'_i, \bar{x}_i, \bar{x}'_i\mid 1 \leq i \leq n\} \cup \{c  \mid c \in C\} \cup \{ \top,\varphi \} \\
			\attacker =& \{(x_i, \bar{x}_i), (x_i,x'_i), (x_i,\bar x'_i) \mid 1 \leq i \leq n\} \cup\\   
			& \{(\bar{x}_i,x_i), (\bar{x}_i,x'_i), (\bar{x}_i,\bar x'_i) \mid 1 \leq i \leq n\} \cup\\   
			& \{(x'_i, x_i), (x'_i,\bar x_i), (x'_i,\bar x'_i) \mid 1 \leq i \leq n\} \cup\\   
			& \{(\bar{x}'_i,x_i), (\bar{x}'_i,x'_i), (\bar{x}'_i,\bar x_i) \mid 1 \leq i \leq n\} \cup\\   
			& \{(x_i, c), (x'_i, c) \mid x_i \in c \in C\} \cup \\
			& \{(\bar{x}_i, c),(\bar{x}'_i, c) \mid \neg x_i \in c \in C\} \cup \\
			& \{(c, \varphi) \mid c \in C\}\\
			\supporter =& \{(\top,\varphi)\}
		\end{align*}
	\end{construction}
	Considering how Construction~\ref{constr:co non-empty reduction 2} is obtained from Construction~\ref{constr:co non-empty reduction} the following can be inferred. 
	\begin{lemma}
		\label{le:co non-empty reduction 2}
		Let $\Phi$ be a propositional formula in 3-CNF. 
		The BAF $\CG_\Phi$ as given in Construction~\ref{constr:co non-empty reduction 2} satisfies the following properties: 
		\begin{itemize}
			\item If $\Phi$ is satisfiable, then $\grd(\CG_\Phi) = \{ \top, \varphi \}$. 
			\item If $\Phi$ is not satisfiable, then $\grd(\CG_\Phi) = \emptyset$. 
		\end{itemize}
	\end{lemma}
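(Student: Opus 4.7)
The plan is to split on whether $\Phi$ is satisfiable. For the unsatisfiable case, I would reuse the argument underlying Lemma~\ref{le:co non-empty reduction}: any candidate complete extension $E$ of $\CG_\Phi$ must contain the unattacked $\top$, hence (by closure under $\supporter$) also $\varphi$, and to defend $\varphi$ it must attack every clause argument $c_j$; this in turn forces $E$ to include literal arguments encoding a satisfying assignment. Unsatisfiability of $\Phi$ thus yields $\com(\CG_\Phi)=\emptyset$, and the paper's convention on empty intersections gives $\grd(\CG_\Phi)=\emptyset$.

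For the satisfiable case, the inclusion $\{\top,\varphi\}\subseteq\grd(\CG_\Phi)$ is immediate from the same observation: every complete extension contains the unattacked $\top$ and, by closure, $\varphi$. For the reverse inclusion, given a satisfying assignment $\alpha$ I would exhibit two complete extensions whose intersection is already $\{\top,\varphi\}$. Namely, let $E_\alpha=\{\top,\varphi\}\cup\{x_i:\alpha(x_i)=\top\}\cup\{\bar x_i:\alpha(x_i)=\bot\}$ use only the unprimed literals, and let $E'_\alpha$ be defined analogously using only the primed literals $x'_i,\bar x'_i$. By construction $E_\alpha\cap E'_\alpha=\{\top,\varphi\}$, so once both sets are shown to be complete, $\grd(\CG_\Phi)\subseteq E_\alpha\cap E'_\alpha=\{\top,\varphi\}$ follows.

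The main obstacle is verifying that $E_\alpha$ (and symmetrically $E'_\alpha$) is indeed complete. Conflict-freeness and closure are routine (no two elements of $E_\alpha$ attack each other within the literal 4-cliques, and the only support edge $(\top,\varphi)$ is respected). Admissibility is also direct since attacker-closures are all trivial here: $\varphi$'s clause-attackers are each attacked by the corresponding satisfying literal in $E_\alpha$, and every chosen literal attacks all three of its 4-clique neighbours. The delicate step is the fixed-point condition $\Gamma(E_\alpha)\subseteq E_\alpha$. Consider an unchosen literal, say $\bar x_i$ when $x_i\in E_\alpha$: its attackers are $x_i,x'_i,\bar x'_i$, and while $E_\alpha$ attacks $x'_i$ and $\bar x'_i$ via $x_i$, defending $\bar x_i$ would additionally require a counter-attack on $x_i$; but the attackers of $x_i$ are exactly $\{\bar x_i,x'_i,\bar x'_i\}$, none of which lie in $E_\alpha$, so $E_\alpha$ does not defend $\bar x_i$. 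The symmetric reasoning rules out $x'_i$ and $\bar x'_i$, and also rules out every clause argument $c_j$, since one of $c_j$'s attacking literals (the one satisfying $c_j$ under $\alpha$) lies in $E_\alpha$ and is itself not counter-attacked by $E_\alpha$. This completes the verification and yields $\grd(\CG_\Phi)=\{\top,\varphi\}$.
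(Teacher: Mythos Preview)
Your proof is correct and follows essentially the same approach the paper intends: the paper's own proof is a one-line remark that the result ``can be inferred'' from Lemma~\ref{le:co non-empty reduction} together with the modification made in Construction~\ref{constr:co non-empty reduction 2}, and your argument is precisely a careful unpacking of that inference---reusing the $\top\!\Rightarrow\!\varphi\!\Rightarrow$\,satisfying-assignment reasoning for the unsatisfiable case, and exploiting the primed/unprimed swap (via $E_\alpha$ and $E'_\alpha$) to force the intersection down to $\{\top,\varphi\}$ in the satisfiable case.
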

	Next we consider some construction which will help us to show that skeptical reasoning with $\com$ is $\coNP$-hard for BAFs. 
	For this, we make use of a gadget which ensures that each complete extension corresponds to some satisfying assignment (and not to just a partial assignment) to the $X$ variables. 
	
	\begin{construction}
		\label{constr:co and unsat forumla}
		Let $\Psi$ a propositional formula over atoms $X = \{x_1,\ldots,x_n\}$ in 3-CNF which we identify with a set $C$ of clauses. 
		Let $\HH_\Psi=(A,\attacker,\supporter)$ where 
		\begin{align*}
			A =& \{x_i, \bar{x}_i, \top_i, \bot_i, d_i \mid 1 \leq i \leq n\} \cup \{c  \mid c \in C\} \cup \{ \top,\varphi \} \\
			\attacker =& \{(x_i, \bar{x}_i), (\bar{x}_i, x_i)\mid 1 \leq i \leq n\} \cup\\   
			& \{(x_i, c) \mid x_i \in c \in C\} \cup \{(\bar{x}_i, c) \mid \neg x_i \in c \in C\} \cup \\
			& \{ (x_i,\bot_i), (\bar x_i,\bot_i), (\bot_i,\bot_i), (\bot_i,d_i) \mid 1\leq i \leq n  \}\cup \\
			& \{(c, \varphi) \mid c \in C\}\\
			\supporter =& \{ (\top_i,d_i) \mid 1\leq i \leq n  \}
		\end{align*}
		An example of this construction can be found in Figure~\ref{fig:example co and unsat forumla}. 
		
		\begin{figure}
			\begin{center}
				\begin{tikzpicture}[>=stealth,scale=0.8]
					\footnotesize
					\tikzstyle{args}=[circle,draw=black, minimum size=5mm, inner sep=0pt]
					\path 	node[args](t){$\psi$}
					++(-2,0) node[args](top){$\bar \psi$}
					++(-1,-1.3) node[args](c1){$c_1$}
					++(3,0) node[args](c2){$c_2$}
					++(3,0) node[args](c3){$c_3$};
					\path 	(-5,-3.2)  node[args](x1){$x_1$}
					++(2.4,0) node[args](nx1){$\bar x_1$}
					++(1.3,0) node[args](x2){$x_2$}
					++(2.4,0) node[args](nx2){$\bar x_2$}
					++(1.3,0) node[args](x3){$x_3$}
					++(2.4,0) node[args](nx3){$\bar x_3$};
					\path 	(-5,-5)  node[args](top1){$\top_1$}
					++(1.2,0) node[args](d1){$d_1$}
					++(1.2,0) node[args](bot1){$\bot_{1}$}
					++(1.3,0) node[args](top2){$\top_2$}
					++(1.2,0) node[args](d2){$d_2$}
					++(1.2,0) node[args](bot2){$\bot_{2}$}
					++(1.3,0) node[args](top3){$\top_3$}
					++(1.2,0) node[args](d3){$d_3$}
					++(1.2,0) node[args](bot3){$\bot_{3}$};
					\path [left,->, thick]	(c1) edge (t)
					(c2) edge (t)
					(c3) edge (t)
					;
					\path [left,->, thick]
					(x1) edge (c1)
					(x2) edge (c1)
					(nx2) edge (c2)
					(x3) edge (c2)
					(nx3) edge (c3)
					(nx1) edge (c3)
					;            
					\path [left,->, thick, bend left]
					(x1) edge (nx1)
					(nx1) edge (x1)
					(x2) edge (nx2)
					(nx2) edge (x2)
					(x3) edge (nx3)
					(nx3) edge (x3)
					;
					\path [->, thick, loop below]
					(bot1) edge (bot1)
					(bot2) edge (bot2)
					(bot3) edge (bot3)
					;
					\path [->, thick]
					(x1) edge (bot1)
					(x2) edge (bot2)
					(x3) edge (bot3)
					(nx1) edge (bot1)
					(nx2) edge (bot2)
					(nx3) edge (bot3)
					;
					\path [->, thick]
					(t) edge (top)
					(bot1) edge (d1)
					(bot2) edge (d2)
					(bot3) edge (d3)
					;
					\path [->, thick, dotted]
					(top1) edge (d1)
					(top2) edge (d2)
					(top3) edge (d3)
					;
				\end{tikzpicture}
			\end{center}
			\caption{Construction~\ref{constr:co and unsat forumla} applied to the formula $ \Psi = \{x_1,x_2\}, \{\neg x_1, x_3\}, \{\neg x_1, \neg x_3\} $}
			\label{fig:example co and unsat forumla}
		\end{figure}
	\end{construction}
	\begin{lemma}
		\label{le:co and unsat forumla}
		Let $\Psi$ be a propositional formula in 3-CNF. 
		The BAF $\HH_\Psi$ as given in Construction~\ref{constr:co and unsat forumla} satisfies the following properties: 
		\begin{itemize}
			\item For each $1\leq i\leq n$ and each $E\in\com(\HH_\Psi)$ we have 
			\begin{itemize}
				\item $\top_i,d_i\in E$, 
				\item either $x_i\in E$ or $\bar x_i \in E$. 
			\end{itemize}
			\item The formula $\Psi$ is unsatisfiable iff $\bar \psi \in E$ for each $E\in\com(\HH_\Psi)$. 
			\item Let $G\in\grd(\HH_\Psi)$ be the grounded extension. The formula $\Psi$ is unsatisfiable iff $G = \{ \top_i,d_i\mid 1\leq i\leq n\}\cup \{\bar\psi\}$
		\end{itemize}
	\end{lemma}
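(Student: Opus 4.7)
The plan is to establish the three items together by first pinning down, inside every complete extension $E\in\com(\HH_\Psi)$, a forced skeleton (the $\top_i$, $d_i$ and literal arguments), and then reading off the roles of the clause arguments and of $\varphi,\bar\psi$ from the SAT encoding.

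For item 1 I would proceed in the following order. (a) Each $\top_i$ is unattacked, so $\top_i\in E$ for every $E\in\com(\HH_\Psi)$. (b) Since $(\top_i,d_i)\in\supporter$, closedness of $E$ forces $d_i\in E$. (c) The unique attacker of $d_i$ is $\bot_i$; as $\bot_i$ supports nothing, Lemma~\ref{le:defense closure single arg} reduces defence of $d_i$ to $E$ attacking $\bot_i$, which means $E\cap\{x_i,\bar x_i\}\neq\emptyset$ (the self-attack on $\bot_i$ cannot be used since $\bot_i\notin E$ by conflict-freeness). (d) The mutual attack between $x_i$ and $\bar x_i$ yields at most one of them in $E$, so exactly one lies in $E$. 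This gives a well-defined assignment $\alpha_E\colon X\to\{0,1\}$ associated to each complete extension.

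For item 2 the key observation is that, because each literal argument is its own closure, a clause argument $c$ is attacked by $E$ iff $\alpha_E$ satisfies $c$, and defended by $E$ iff $\alpha_E$ falsifies $c$. Hence $\varphi$ is defended iff $\alpha_E$ satisfies every clause, so $\varphi\in E$ iff $\alpha_E\models\Psi$. Using the attack $(\varphi,\bar\psi)$ depicted in Figure~\ref{fig:example co and unsat forumla} (see the remark at the end), $\bar\psi$'s only attacker is $\varphi$, so $\bar\psi$ is defended iff $E$ attacks $\varphi$, iff some clause lies in $E$, iff some clause is falsified by $\alpha_E$. Completeness then gives $\bar\psi\in E$ iff $\alpha_E\not\models\Psi$. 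Quantifying over all $E\in\com(\HH_\Psi)$ and recalling that every assignment arises as some $\alpha_E$ (one verifies that the candidate extension built from an assignment plus its falsified clauses plus the forced skeleton is in fact complete), this yields ``$\bar\psi\in E$ for every complete $E$'' iff no assignment satisfies $\Psi$, i.e., $\Psi$ is unsatisfiable.

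For item 3 I would intersect all complete extensions. Item 1 forces $\{\top_i,d_i\mid 1\le i\le n\}\subseteq G$, and item 2 under unsatisfiability places $\bar\psi$ in $G$. The remaining candidates fail the intersection test for elementary reasons: the literal arguments $x_i,\bar x_i$ differ across assignments; each clause argument lies only in those extensions whose assignment falsifies it, so is missed by any other; $\varphi$ is in no complete extension under unsatisfiability; and $\bot_i$ is self-attacking, hence in no conflict-free set. Conversely, if $\bar\psi\in G$ then $\bar\psi$ lies in every complete extension, and item 2 gives unsatisfiability. The main obstacle is a presentational one: the construction as listed does not include the attack $(\varphi,\bar\psi)$ that Figure~\ref{fig:example co and unsat forumla} draws and that item 2 genuinely needs; without it, $\bar\psi$ would be unattacked and would sit in every complete extension regardless of $\Psi$. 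Once this edge is added, the rest is a routine, if bookkeeping-heavy, verification that complete extensions of $\HH_\Psi$ correspond one-to-one with assignments of $X$.
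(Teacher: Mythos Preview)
Your proposal is correct and follows the same line as the paper's own (very terse) proof: unattacked $\top_i$ forces $d_i$ via closedness, defending $d_i$ forces a literal choice, and the rest is the standard SAT-encoding bookkeeping. You also correctly flag the missing edge $(\psi,\bar\psi)$ in the listing of Construction~\ref{constr:co and unsat forumla}; the figure shows it and the lemma's second and third items genuinely depend on it, so this is a typo in the construction rather than a gap in your argument.
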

	\begin{proof}
		The first statement follows from $\top_i\in E$. Hence $d_i$ needs to be defended and thus, $x_i\in E$ or $\bar x_i\in E$ (second statement). 
		
		Thus, each $E\in\com(H_\Psi)$ represents an assignment to the $X$-variables which implies the third statement (usual logic of the standard construction). 
	\end{proof}
	Equipped with these constructions, we are ready to prove Theorem~\ref{th:BAF complexity}. 
	\begin{proposition}[Verification] 
		\label{prop:com ver}
		The problem $\Ver_{\sigma}$ is 
		tractable for $\sigma\in\{\adm,\com,\stb\}$, 
		$\coNP$-complete for $\sigma = \prf$, and  
		$\DP$-complete for $\sigma = \grd$. 
	\end{proposition}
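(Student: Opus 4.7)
The plan is to handle the cases in order of increasing complexity. For the tractable cases, verifying $E$ reduces to a few polynomial subroutines. Conflict-freeness and closure (iterating $\mu$ to a fixpoint) are direct. For admissibility I would, for each $a \in E$ and each attacker $b$ of $a$, compute $\cl(\{b\})$ and check that $E$ attacks it; this suffices by Lemma~\ref{le:defense closure single arg}. For completeness I would additionally run the same defense check on each $a \in A \setminus E$ to rule out any extra defended argument. For stability I would check closure together with $E^+_\CF = A \setminus E$ directly.

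For $\sigma = \prf$, $\coNP$-membership follows by guess-and-check: $E \notin \prf(\CF)$ iff either $E \notin \adm(\CF)$ (polynomial witness) or some $E' \supsetneq E$ is admissible (polynomial guess, polynomially verifiable by the previous case). For $\coNP$-hardness I would observe that taking $\supporter = \emptyset$ makes our closure trivial and collapses our admissibility and preferred notions to the standard Dung semantics; thus the known $\coNP$-hardness of $\Ver_\prf$ for AFs transfers verbatim to BAFs.

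The main obstacle is $\sigma = \grd$. For membership in $\DP$ I would decompose the equality $E = \grd(\CF)$ into a $\coNP$ claim and an $\NP$ claim. The $\coNP$ claim is $E \subseteq \bigcap \com(\CF)$: no complete extension of $\CF$ excludes any element of $E$ (equivalently, no witness of the form (complete extension $C$, $a \in E$ with $a \notin C$) exists). The $\NP$ claim captures $\bigcap \com(\CF) \subseteq E$ together with the existence safeguard needed because the paper's convention sets the empty intersection to $\emptyset$, so one must ensure $\com(\CF) \neq \emptyset$ whenever $E \neq \emptyset$. A single polynomial-size witness works: one complete extension $C_0$ (used when $E \neq \emptyset$) together with, for each $a \in A \setminus E$, a complete extension $C_a$ omitting $a$. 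Each $C_a$ is verified in polynomial time by the tractable $\Ver_\com$ test, giving $\Ver_\grd \in \DP$.

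For $\DP$-hardness I would reduce from the canonical $\DP$-complete problem \textsc{Sat-Unsat}. Given $(\Phi, \Psi)$, I would form the disjoint union (on disjoint argument names) of the BAF $\CG_\Phi$ from Construction~\ref{constr:co non-empty reduction 2} and the BAF $\HH_\Psi$ from Construction~\ref{constr:co and unsat forumla}. Because the two components share no arguments or edges, closure, defense, and conflict-freeness factor componentwise, so the complete extensions of the union are exactly the pairwise unions of complete extensions of the components, and hence the grounded extension of the union is the union of the two component grounded sets. Letting $E = \{\top, \varphi\} \cup \{\top_i, d_i \mid 1 \leq i \leq n\} \cup \{\bar\psi\}$ be the candidate grounded set when $\Phi$ is satisfiable and $\Psi$ is unsatisfiable (per Lemmas~\ref{le:co non-empty reduction 2} and~\ref{le:co and unsat forumla}), the resulting instance of $\Ver_\grd$ is a yes-instance iff $\Phi$ is satisfiable and $\Psi$ is unsatisfiable, yielding $\DP$-hardness.
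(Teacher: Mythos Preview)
Your proposal is correct and follows essentially the same approach as the paper. The tractable cases, the $\prf$ case (guess-a-superset for membership, $\supporter=\emptyset$ collapse to AFs for hardness), and the $\grd$ hardness reduction via the disjoint union $\CG_\Phi \cup \HH_\Psi$ with target set $\{\top,\varphi\}\cup\{\top_i,d_i\}\cup\{\bar\psi\}$ all match the paper exactly; for $\grd$ membership you use the same $\coNP$/$\NP$ split (every complete extension contains $E$ / for each $a\notin E$ guess a complete extension omitting $a$), and you are in fact slightly more careful than the paper in explicitly adding the witness $C_0$ to certify $\com(\CF)\neq\emptyset$ when $E\neq\emptyset$ and in spelling out why the disjoint union factors componentwise.
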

	\begin{proof}
		Let $\CF$ be a BAF. 
		
		($\adm$) 
		Given $E\subseteq A$ it is clear that checking $E\cap E^+ = \emptyset$ and $E = \cl(E)$ can be done in polynomial time. 
		For defense we use Lemma~\ref{le:defense closure single arg}: 
		Let $a$ be an attacker of $E$. We can compute $\cl(\{a\})$ in polynomial time and thus checking whether $E\to \cl(\{a\})$ holds is also tractable. 
		
		($\com$) 
		For completeness, we iterate additionally over each $a\in A\setminus E$ and check whether $a$ is defended by $E$. This is tractable as well. 
		
		($\com$) 
		We check $E\in\cf(\CF)$ and $E = \cl(E)$; then we compute the range and verify that each $a\in A\setminus E$ is attacked. 
		
		($\prf$) 
		Membership follows from tractability for the case $\adm$: Simply verify $E\in \adm(\CF)$ and then iterate over each superset of $E$ in order to verify maximality. 
		Hardness follows from hardness in the AF case. 
		
		($\grd$) 
		
		(membership)
		To verify that $E$ is the intersection of all complete extensions we proceed as follows: 
		\begin{itemize}
			\item Show that $E$ does not contain too many arguments: For each $S\subseteq A$, verify that $S\in\com(\CF)$ implies $E\subseteq A$ (universal quantifier in $\DP$); 
			\item Show that $E$ does not contain too few arguments: For the linearly many $a \in A\setminus E$, guess a set $E_a$ with $a\notin E_a$ and verify that $E_a\in\com(\CF)$ (existential quantifier in $\DP$).  
		\end{itemize}
		
		(hardness)
		Given an instance $(\Phi,\Psi)$ of the $\DP$-complete problem SAT-UNSAT we apply Constructions~\ref{constr:co non-empty reduction 2}~and~\ref{constr:co and unsat forumla} to obtain the combined AF $\CF = \CG_\Phi \cup \HH_\Psi$. 
		By Lemmata~\ref{le:co non-empty reduction 2}~and~\ref{le:co and unsat forumla} 
		$$G = \{\top,\varphi\} \cup \{ \top_i,d_i\mid 1\leq i\leq n\}\cup \{\bar\psi\}$$
		is the grounded extension of $\CF$
		iff 
		$\Phi$ is satisfiable and 
		$\Psi$ is unsatisfiable.
	\end{proof}
	
	\begin{proposition}[Credulous Reasoning] 
		\label{prop:com cred}
		The problem $\Cred_{\sigma}$ is 
		$\NP$-complete for $\sigma\in\{\adm,\com,\prf,\stb\}$ and 
		$\DP$-complete for $\sigma = \grd$. 
	\end{proposition}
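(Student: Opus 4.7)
The plan is to split the argument along the semantics. For $\sigma\in\{\adm,\com,\stb\}$, $\NP$-membership is by the standard guess-and-verify: guess $E\subseteq A$ with $a\in E$ and verify $E\in\sigma(\CF)$ in polynomial time using Proposition~\ref{prop:com ver}. $\NP$-hardness is inherited directly from the AF case, because any AF is a BAF with empty support relation: closure then collapses to the identity, defense reduces to Dung defense, and so $\adm$, $\com$, $\stb$ agree with their Dung counterparts on such BAFs.

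For $\sigma=\prf$, I would first show that $\Cred_{\prf}=\Cred_{\adm}$ on BAFs. Since $A$ is finite, every $E\in\adm(\CF)$ is contained in some $\subseteq$-maximal admissible set $E^\ast\supseteq E$: among the finitely many admissible supersets of $E$ take one of maximal cardinality; any proper admissible superset of $E^\ast$ would also be a superset of $E$, contradicting maximality. Hence $E^\ast\in\prf(\CF)$, which reduces $\NP$-membership of $\Cred_{\prf}$ to that of $\Cred_{\adm}$, while hardness again follows from the AF embedding.

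For $\sigma=\grd$, the grounded extension is the unique set $G=\bigcap\com(\CF)$, so credulous acceptance of $a$ coincides with the conjunction that $\com(\CF)\neq\emptyset$ (otherwise $G=\emptyset$, interpreting the empty intersection as $\emptyset$) and $a\in E$ for every $E\in\com(\CF)$. The first conjunct is in $\NP$ (guess a complete extension and verify), the second in $\coNP$; their conjunction lies in $\DP$. For $\DP$-hardness I would reduce from SAT-UNSAT, forming the disjoint union $\CF=\CG_\Phi\cup\HH_\Psi$ and choosing the query argument $\bar\psi$ from $\HH_\Psi$. Since no edges cross the components, $\com(\CF)$ decomposes as $\com(\CG_\Phi)\times\com(\HH_\Psi)$; by Lemma~\ref{le:co non-empty reduction 2}, $\com(\CG_\Phi)\neq\emptyset$ iff $\Phi$ is satisfiable, and $\com(\HH_\Psi)$ is always non-empty (build a complete extension from any total assignment to the $X$-variables). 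By Lemma~\ref{le:co and unsat forumla}, $\bar\psi$ is in every complete extension of $\HH_\Psi$ iff $\Psi$ is unsatisfiable. Combining these facts yields $\bar\psi\in\grd(\CF)$ iff $\Phi\in\mathrm{SAT}$ and $\Psi\in\mathrm{UNSAT}$.

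The main obstacle I anticipate is the $\DP$-hardness reduction for the grounded case: one must carefully verify the compositionality of $\com$ under the novel BAF semantics on disjoint unions (using that closure and defense are both local within each component), and handle the subtle case where $\Phi$ is unsatisfiable so that $\com(\CF)=\emptyset$ and $\grd(\CF)=\emptyset$ by convention, ensuring that $\bar\psi\notin\grd(\CF)$ in precisely the right cases. The remaining details are routine verifications already standard for the AF setting.
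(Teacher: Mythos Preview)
Your proposal is correct and follows essentially the same approach as the paper: guess-and-verify for $\adm,\com,\stb$ with hardness inherited from AFs, reducing $\Cred_\prf$ to $\Cred_\adm$ via maximal admissible supersets, and for $\grd$ the same $\DP$-membership decomposition together with the SAT-UNSAT reduction via the disjoint union $\CG_\Phi\cup\HH_\Psi$ and query $\bar\psi$. Your treatment of the grounded case is in fact more careful than the paper's---you make explicit the compositionality of $\com$ over disjoint components, the non-emptiness of $\com(\HH_\Psi)$, and the empty-intersection convention when $\Phi$ is unsatisfiable---all of which the paper leaves implicit.
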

	\begin{proof}
		Let $\CF$ be the given BAF. 
		For $\sigma\in\{\adm,\com,\prf,\stb\}$ a simple guess and check procedure suffices. 
		Note that credulous reasoning for preferred semantics coincides with credulous reasoning for admissible semantics; therefore we can apply Proposition~\ref{prop:com ver}. 
		Hardness follows from the AF case. 
		
		($\grd$)
		
		(membership) 
		To verify that $a\in A$ is in the intersection of all complete extensions we proceed as follows: 
		\begin{itemize}
			\item Show that $\grd(\CF)\neq\emptyset$ by guessing and verifying any $E\in\com(\CF)$ (existential quantifier in $\DP$).  
			\item Show that $a\in E$ for each $E\in\com(\CF)$ by iterating over each subset $S$ of $A$ and checking that $S$ is not complete or contains $a$. 
		\end{itemize}
		(hardness)
		Given an instance $(\Phi,\Psi)$ of the $\DP$-complete problem SAT-UNSAT we apply Constructions~\ref{constr:co non-empty reduction 2}~and~\ref{constr:co and unsat forumla} to obtain the combined AF $\CF = \CG_\Phi \cup \HH_\Psi$. 
		By Lemmata~\ref{le:co non-empty reduction 2}~and~\ref{le:co and unsat forumla} 
		$\bar\psi$ 
		is credulously accepted (\ie in the intersection of all complete extensions) 
		iff 
		$\Phi$ is satisfiable and 
		$\Psi$ is unsatisfiable.
	\end{proof}

	\begin{proposition}[Skeptical Reasoning] 
		\label{prop:com skep}
		The problem $\Skept_{\sigma}$ is 
		trivial for $\sigma = \adm$, 
		$\DP$-complete for $\sigma \in \{\com,\grd,\stb\}$, and
		$\Pi^P_2$-complete for $\sigma = \prf$.  
	\end{proposition}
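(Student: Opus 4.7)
The plan is to address each semantics separately, leveraging the verification and credulous-reasoning complexity results already established in Propositions~\ref{prop:com ver}~and~\ref{prop:com cred}. For $\sigma = \adm$, I appeal to the earlier proposition that $\emptyset \in \adm(\CF)$ for every BAF: the empty admissible set witnesses $a \notin E$ for every $a$, so ``is $a \in E$ for every $E \in \adm(\CF)$?'' always has the same (negative) answer and the problem is trivially decidable.

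For $\sigma \in \{\com, \grd, \stb\}$, I obtain $\DP$-membership by factoring $\Skept_\sigma$ into an $\NP$ part (some $\sigma$-extension exists, verifiable in polynomial time by Proposition~\ref{prop:com ver}) and a $\coNP$ part (no $\sigma$-extension excludes $a$, via a single universal quantifier over sets with polynomial-time verification); their conjunction is precisely a $\DP$ computation. For $\DP$-hardness I would adapt the SAT--UNSAT reduction used for $\Cred_\grd$ in Proposition~\ref{prop:com cred}: given an instance $(\Phi,\Psi)$, take the disjoint union $\CF = \CG_\Phi \cup \HH_\Psi$ from Constructions~\ref{constr:co non-empty reduction 2}~and~\ref{constr:co and unsat forumla} together with the argument $\bar\psi$. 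Lemma~\ref{le:co non-empty reduction 2} gives $\Phi$ satisfiable iff $\com(\CF) \neq \emptyset$, and Lemma~\ref{le:co and unsat forumla} gives $\bar\psi$ in every complete extension of $\HH_\Psi$ iff $\Psi$ is unsatisfiable; since complete extensions of a disjoint union decompose as unions of component extensions, $\bar\psi$ lies in every complete extension of $\CF$ iff both conditions hold. For $\stb$ and $\grd$, mild adaptations of the gadgets (exploiting that their complete extensions are in fact stable and that the grounded extension is characterised by the same data) carry the reduction through.

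For $\sigma = \prf$, $\Pi^P_2$-membership follows from the characterisation ``for every $E \subseteq A$, if $E \in \prf(\CF)$ then $a \in E$''; by Proposition~\ref{prop:com ver} the inner preferred check is in $\coNP$, and the outer universal quantifier then yields $\Pi^P_2$. $\Pi^P_2$-hardness is inherited from AFs, since every AF is a BAF with empty support relation and skeptical preferred acceptance is classically $\Pi^P_2$-hard on AFs.

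The subtle step I expect to grapple with is the $\DP$-hardness above: one must ensure that $\CG_\Phi \cup \HH_\Psi$ behaves correctly under all three of $\com$, $\grd$, $\stb$ simultaneously, and that the convention ``$\grd(\CF) = \emptyset$ when $\com(\CF) = \emptyset$'' does not inadvertently collapse the reduction to mere $\coNP$-hardness; in particular, the existence-witnessing $\NP$ component and the universal-inclusion $\coNP$ component must remain genuinely independent inside a single BAF rather than fusing into a single-quantifier task.
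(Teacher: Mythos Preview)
Your proposal is correct and follows essentially the same route as the paper: triviality for $\adm$ via $\emptyset\in\adm(\CF)$; $\DP$-membership for $\com/\grd/\stb$ by splitting into an existential non-emptiness witness plus a universal inclusion check; $\DP$-hardness via the same SAT--UNSAT reduction using $\CG_\Phi\cup\HH_\Psi$ and the query $\bar\psi$; and $\Pi^P_2$ for $\prf$ by the guess-a-counterexample argument with hardness inherited from AFs. The only cosmetic difference is that where you anticipate ``mild adaptations'' for $\stb$ and $\grd$, the paper simply observes that on these particular constructions $\stb$ and $\com$ coincide and that skeptical $\com$ equals credulous $\grd$, so the same gadget works unchanged.
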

	\begin{proof}
		Let $\CF$ be the given BAF. 
		
		For $\adm$ the answer is negative as usual. 
		
		For $\sigma \in \{\com,\stb,\grd\}$ membership is by guessing a witness for $\sigma(\CF)\neq\emptyset$ (existential quantifier) and then showing that each $E\in\sigma(\CF)$ contains the query argument (universal quantifier). 
		Hardness follows from again applying Constructions~\ref{constr:co non-empty reduction 2}~and~\ref{constr:co and unsat forumla} to obtain the combined AF $\CF = \CG_\Phi \cup \HH_\Psi$. 
		For $\grd$ we argued already in the proof of Proposition~\ref{prop:com cred}. 
		For $\com$ note that skeptical reasoning coincides with reasoning in $\grd$. 
		Finally note that by construction, $\stb$ and $\com$ coincide in the two constructions. 
		
		Regarding $\prf$ semantics membership follows by guessing and verifying some counter-example, \ie $E\in\prf(\CF)$ not containing the query argument (this can be done in $\Sigma^P_2$) and hardness is due to the AF case. 
	\end{proof}

	%	\begin{proposition}[Existence] 
		%		\label{prop:com existence}
		%		The problem $\Exists_{\sigma}$ is 
		%		trivial for $\sigma \in \{\adm,\prf\}$ and  
		%		$\NP$-complete for $\sigma \in \{\com,\grd,\stb\}$. 
		%	\end{proposition}
	%	\begin{proof}
		%		Let $\CF$ be the given BAF.
		%		
		%		Since $\emptyset\in\adm(\CF)$ and $\prf$ maximizes the finitely many $\adm(\CF)$ sets, existence is clear for these two semantics. 
		%		
		%		The case $\sigma = \stb$ follows from the corresponding reasoning problem for AFs (and the tractable verification). 
		%		
		%		We have that $\com(\CF)\neq\emptyset$ is $\NP$-hard by Lemma~\ref{le:co non-empty reduction} and membership is by tractable verification.  
		%		
		%		Finally $\com(\CF) = \emptyset$ iff $\grd(\CF) = \emptyset$. 
		%	\end{proof}

%\propConstructedPBAFWellFormed*
%\begin{proof}
%	Recall the required conditions 
%	\begin{enumerate}
%		\item $\prem(E)\subseteq \prem(E')$;  
%		\item if $\prem(E) = \emptyset$, then $E^- = \emptyset$; 
%		\item if $E$ supports some $x$, then $\prem(E)\neq \emptyset$. 
%	\end{enumerate}
%	Since $\prem(x) = \asms(x)$ for any tree-based argument $x = S\vdash p$, this follows immediately: 
%	\begin{enumerate}
%		\item $\prem(E) = \bigcup_{x\in E} \asms(x) \subseteq \bigcup_{x\in E'} \asms(x) = \prem(E')$;  
%		\item if $\prem(E) = \emptyset$, then $\asms(E) = \emptyset$ and thus $E^- = \emptyset$ by the definition of the attack relation in ABA; 
%		\item if $E$ supports $x$, then $x$ is of the form $\{a\}\vdash a$ for some assumption $a$ and $a\in \cl(\asms(E))$. 
%	\end{enumerate}
%\end{proof}

Now we turn our attention to pBAFs.

\thPBAFComplexity*

Almost all results follow since verifying exhaustiveness can be done in polynomial time. The only difference is that now, the empty set is not necessarily admissible anymore which renders skeptical reasoning more involved. 
We give the proof for this case. 

\propSkepAdmPBAF*
\begin{proof}
	(membership) 
	To show that $x\in \args$ is \emph{not} skeptically accepted, we simply need to guess a set $E\in\adm(\PF)$ s.t.\ $x\notin E$. 
	Since we can verify $E\in\adm(\PF)$ in $\P$, this procedure (proving the contrary) is in $\NP$. 
	
	(hardness) 
	For hardness, we adjust Construction~\ref{constr:co and unsat forumla} as follows: 
	We use a gadget at the bottom to ensure that each admissible extension corresponds to some assignment to the $X$-variables. To this end arguments $d_i$ are only defended if either $x_i$ or $\bar x_i$ is present. Since $\prem(d_i) = \emptyset$, these arguments occur in each admissible set (exhaustiveness). 
    
    Following the same reasonong, we use auxiliary arguments $t$ and $\bot_t$ to ensure that each admissible extension contains either $\psi$ (satisfying assignment) or $\bar{\psi}$ (no satisfying assignment). 
	We can thus check whether the formula has no satisfying assignment as follows. 
	\begin{construction}
		\label{constr:skep adm pbaf}
		Let $\Psi$ a propositional formula over atoms $X = \{x_1,\ldots,x_n\}$ in 3-CNF which we identify with a set $C$ of clauses. 
		Let $\PF_\Psi=(A,\attacker,\supporter,\prem)$ where 
		\begin{align*}
			A =& \{x_i, \bar{x}_i, \bot_i, d_i \mid 1 \leq i \leq n\} \cup \{c  \mid c \in C\} \cup \{ \psi, \bar \psi, t, \bot_t \} \\
			\attacker =& \{(x_i, \bar{x}_i), (\bar{x}_i, x_i)\mid 1 \leq i \leq n\} \cup\\   
			& \{(x_i, c) \mid x_i \in c \in C\} \cup \{(\bar{x}_i, c) \mid \neg x_i \in c \in C\} \cup \\
			& \{ (x_i,\bot_i), (\bar x_i,\bot_i),  (\bot_i,d_i) \mid 1\leq i \leq n  \}\cup \\
			& \{(c, \psi) \mid c \in C\} \cup \\
			& \{(\psi,\bar \psi), (\bot_t,t), (\psi, \bot_t), (\bar\psi, \bot_t)\} \\
			\supporter =& \emptyset \\
			\prem =& \{ (a, \{a\}) \mid a\in A\setminus \{ d_1,\ldots, d_n,t \} \} \cup \\
			& \{ (d_i, \emptyset) \mid 1\leq i \leq n \} \cup \{ (t,\emptyset) \}
		\end{align*}
		An example of this construction can be found in Figure~\ref{fig:example skep adm pbaf}. 
		
		\begin{figure}
			\begin{center}
				\begin{tikzpicture}[>=stealth,scale=0.75]
					\footnotesize
					\tikzstyle{args}=[circle,draw=black, minimum size=5mm, inner sep=0pt]
					\path 	node[args,label={above:$\{\psi\}$}](t){$\psi$}
					++(-2,0) node[args,label={left:$\{\bar \psi\}$}](top){$\bar \psi$}
                        ++(5,0) node[args,label={above:$\{\bot_t\}$}](complete1){$\bot_t$}
                        ++(2,0) node[args,label={above:$\{ \}$}](complete2){$t$}
					++(-8,-1.3) node[args,label={left:$\{c_1\}$}](c1){$c_1$}
					++(3,0) node[args,label={right:$\{c_2\}$}](c2){$c_2$}
					++(3,0) node[args,label={right:$\{c_3\}$}](c3){$c_3$};
					\path 	(-5,-3.2)  node[args,label={below:$\{x_1\}$}](x1){$x_1$}
					++(2.4,0) node[args,label={left:$\{\bar x_1\}$}](nx1){$\bar x_1$}
					++(1.3,0) node[args,label={below:$\{x_2\}$}](x2){$x_2$}
					++(2.4,0) node[args,label={left:$\{\bar x_2\}$}](nx2){$\bar x_2$}
					++(1.3,0) node[args,label={below:$\{x_3\}$}](x3){$x_3$}
					++(2.4,0) node[args,label={left:$\{\bar x_3\}$}](nx3){$\bar x_3$};
					\path 	(-5,-5)  node[args,label={below:$\{\}$}](d1){$d_1$}
					++(2.4,0) node[args,label={below:$\{\bot_1\}$}](bot1){$\bot_{1}$}
					++(1.2,0) node[args,label={below:$\{\}$}](d2){$d_2$}
					++(2.5,0) node[args,label={below:$\{\bot_2\}$}](bot2){$\bot_{2}$}
					++(1.2,0) node[args,label={below:$\{\}$}](d3){$d_3$}
					++(2.5,0) node[args,label={below:$\{\bot_3\}$}](bot3){$\bot_{3}$};
					\path [left,->, thick]	(c1) edge (t)
					(c2) edge (t)
					(c3) edge (t)
					;
	                \path [left,->, thick]
					(complete1) edge (complete2)
                        (t) edge (complete1)
                        (top) edge[bend left=50] (complete1)
					;            				
                        \path [left,->, thick]
					(x1) edge (c1)
					(x2) edge (c1)
					(nx2) edge (c2)
					(x3) edge (c2)
					(nx3) edge (c3)
					(nx1) edge (c3)
					;            
					\path [left,->, thick, bend left]
					(x1) edge (nx1)
					(nx1) edge (x1)
					(x2) edge (nx2)
					(nx2) edge (x2)
					(x3) edge (nx3)
					(nx3) edge (x3)
					;
					\path [->, thick]
					(x1) edge (bot1)
					(x2) edge (bot2)
					(x3) edge (bot3)
					(nx1) edge (bot1)
					(nx2) edge (bot2)
					(nx3) edge (bot3)
					;
					\path [->, thick]
					(t) edge (top)
					(bot1) edge (d1)
					(bot2) edge (d2)
					(bot3) edge (d3)
					;
				\end{tikzpicture}
			\end{center}
			\caption{Construction~\ref{constr:skep adm pbaf} applied to the formula $ \Psi = \{x_1,x_2\}, \{\neg x_1, x_3\}, \{\neg x_1, \neg x_3\} $}
			\label{fig:example skep adm pbaf}
		\end{figure}
	\end{construction}
	By construction as well as the usual reasoning, 
	$\bar \psi$ is \emph{not} skeptically accepted 
	iff 
	$\Psi$ is satisfiable. 
	Thus the complement is $\NP$-complete which proves the claim. 
\end{proof}


\begin{thebibliography}{42}
		\providecommand{\natexlab}[1]{#1}
		
		\bibitem[{Amgoud et~al.(2008)Amgoud, Cayrol, Lagasquie{-}Schiex, and
			Livet}]{AmgoudCLL08}
		Amgoud, L.; Cayrol, C.; Lagasquie{-}Schiex, M.; and Livet, P. 2008.
		\newblock On bipolarity in argumentation frameworks.
		\newblock \emph{Int. J. Intell. Syst.}, 23(10): 1062--1093.
		
		\bibitem[{Atkinson et~al.(2017)Atkinson, Baroni, Giacomin, Hunter, Prakken,
			Reed, Simari, Thimm, and Villata}]{AtkinsonBGHPRST17}
		Atkinson, K.; Baroni, P.; Giacomin, M.; Hunter, A.; Prakken, H.; Reed, C.;
		Simari, G.~R.; Thimm, M.; and Villata, S. 2017.
		\newblock Towards Artificial Argumentation.
		\newblock \emph{AI Magazine}, 38(3): 25--36.
		
		\bibitem[{Bao, Cyras, and Toni(2017)}]{ABAPlus}
		Bao, Z.; Cyras, K.; and Toni, F. 2017.
		\newblock ABAplus: Attack Reversal in Abstract and Structured Argumentation
		with Preferences.
		\newblock In \emph{Proc. PRIMA}, volume 10621 of \emph{Lecture Notes in
			Computer Science}, 420--437. Springer.
		
		\bibitem[{Baroni et~al.(2018)Baroni, Gabbay, Giacomin, and van~der
			Torre}]{arguHandbook}
		Baroni, P.; Gabbay, D.; Giacomin, M.; and van~der Torre, L., eds. 2018.
		\newblock \emph{Handbook of Formal Argumentation}.
		\newblock College Publications.
		
		\bibitem[{Besnard et~al.(2014)Besnard, Garcia, Hunter, Modgil, Prakken, Simari,
			and Toni}]{struct-arg:2014}
		Besnard, P.; Garcia, A.; Hunter, A.; Modgil, S.; Prakken, H.; Simari, G.; and
		Toni, F. 2014.
		\newblock Introduction to structured argumentation.
		\newblock \emph{Argument Comput.}, 5(1): 1--4.
		
		\bibitem[{Besnard and Hunter(2008)}]{DBLP:books/daglib/0030834}
		Besnard, P.; and Hunter, A. 2008.
		\newblock \emph{Elements of Argumentation}.
		\newblock {MIT} Press.
		\newblock ISBN 978-0-262-02643-7.
		
		\bibitem[{Boella et~al.(2010)Boella, Gabbay, van~der Torre, and
			Villata}]{boella2010support}
		Boella, G.; Gabbay, D.~M.; van~der Torre, L.; and Villata, S. 2010.
		\newblock Support in abstract argumentation.
		\newblock In \emph{Proc. COMMA}, 40--51. Frontiers in Artificial Intelligence
		and Applications, IOS Press.
		
		\bibitem[{Bondarenko et~al.(1997)Bondarenko, Dung, Kowalski, and
			Toni}]{BondarenkoDKT97}
		Bondarenko, A.; Dung, P.~M.; Kowalski, R.~A.; and Toni, F. 1997.
		\newblock An Abstract, Argumentation-Theoretic Approach to Default Reasoning.
		\newblock \emph{Artif. Intell.}, 93: 63--101.
		
		\bibitem[{Cayrol, Cohen, and Lagasquie-Schiex(2021)}]{cayrol2021higher}
		Cayrol, C.; Cohen, A.; and Lagasquie-Schiex, M.-C. 2021.
		\newblock Higher-Order Interactions (Bipolar or not) in Abstract Argumentation:
		A State of the Art.
		\newblock \emph{Handbook of Formal Argumentation}, 2.
		
		\bibitem[{Cayrol and Lagasquie{-}Schiex(2013)}]{DBLP:journals/ijar/CayrolL13}
		Cayrol, C.; and Lagasquie{-}Schiex, M. 2013.
		\newblock Bipolarity in argumentation graphs: Towards a better understanding.
		\newblock \emph{Int. J. Approx. Reason.}, 54(7): 876--899.
		
		\bibitem[{Cayrol and Lagasquie-Schiex(2005)}]{cayrol2005acceptability}
		Cayrol, C.; and Lagasquie-Schiex, M.-C. 2005.
		\newblock On the acceptability of arguments in bipolar argumentation
		frameworks.
		\newblock In \emph{Proc. ECSQARU'05}, volume 3571 of \emph{LNCS}, 378--389.
		Springer.
		
		\bibitem[{Cohen, Garc{\'{\i}}a, and Simari(2012)}]{cohen2012backing}
		Cohen, A.; Garc{\'{\i}}a, A.~J.; and Simari, G.~R. 2012.
		\newblock Backing and Undercutting in Abstract Argumentation Frameworks.
		\newblock In \emph{Proc. FoIKS}, volume 7153 of \emph{Lecture Notes in Computer
			Science}, 107--123. Springer.
		
		\bibitem[{Cohen et~al.(2014)Cohen, Gottifredi, Garc{\'{\i}}a, and
			Simari}]{BAFsurvey}
		Cohen, A.; Gottifredi, S.; Garc{\'{\i}}a, A.~J.; and Simari, G.~R. 2014.
		\newblock A survey of different approaches to support in argumentation systems.
		\newblock \emph{Knowl. Eng. Rev.}, 29(5): 513--550.
		
		\bibitem[{Craven et~al.(2012)Craven, Toni, Cadar, Hadad, and
			Williams}]{CravenTCHW12}
		Craven, R.; Toni, F.; Cadar, C.; Hadad, A.; and Williams, M. 2012.
		\newblock Efficient Argumentation for Medical Decision-Making.
		\newblock In \emph{Proc. KR}. {AAAI} Press.
		
		\bibitem[{Cyras et~al.(2017)Cyras, Fan, Schulz, and Toni}]{CyrasFST17}
		Cyras, K.; Fan, X.; Schulz, C.; and Toni, F. 2017.
		\newblock Assumption-based Argumentation: Disputes, Explanations, Preferences.
		\newblock \emph{{FLAP}}, 4(8).
		
		\bibitem[{{\v C}yras et~al.(2018){\v C}yras, Fan, Schulz, and
			Toni}]{CyrasFST2018}
		{\v C}yras, K.; Fan, X.; Schulz, C.; and Toni, F. 2018.
		\newblock Assumption-Based Argumentation: Disputes, Explanations, Preferences.
		\newblock In \emph{Handbook of Formal Argumentation}, chapter~7, 365--408.
		College Publications.
		
		\bibitem[{Cyras, Heinrich, and Toni(2021)}]{DBLP:journals/ai/CyrasHT21}
		Cyras, K.; Heinrich, Q.; and Toni, F. 2021.
		\newblock Computational complexity of flat and generic Assumption-Based
		Argumentation, with and without probabilities.
		\newblock \emph{Artif. Intell.}, 293: 103449.
		
		\bibitem[{Cyras et~al.(2021{\natexlab{a}})Cyras, Oliveira, Karamlou, and
			Toni}]{CyrasOKT21}
		Cyras, K.; Oliveira, T.; Karamlou, A.; and Toni, F. 2021{\natexlab{a}}.
		\newblock Assumption-based argumentation with preferences and goals for
		patient-centric reasoning with interacting clinical guidelines.
		\newblock \emph{Argument Comput.}, 12(2): 149--189.
		
		\bibitem[{Cyras et~al.(2021{\natexlab{b}})Cyras, Rago, Albini, Baroni, and
			Toni}]{DBLP:conf/ijcai/Cyras0ABT21}
		Cyras, K.; Rago, A.; Albini, E.; Baroni, P.; and Toni, F. 2021{\natexlab{b}}.
		\newblock Argumentative {XAI:} {A} Survey.
		\newblock In \emph{Proc. IJCAI}, 4392--4399. ijcai.org.
		
		\bibitem[{Cyras, Schulz, and Toni(2017)}]{bipolarABA}
		Cyras, K.; Schulz, C.; and Toni, F. 2017.
		\newblock Capturing Bipolar Argumentation in Non-flat Assumption-Based
		Argumentation.
		\newblock In \emph{Proc. PRIMA}, volume 10621 of \emph{Lecture Notes in
			Computer Science}, 386--402. Springer.
		
		\bibitem[{Cyras and Toni(2016)}]{CyrasT16}
		Cyras, K.; and Toni, F. 2016.
		\newblock {ABA+:} Assumption-Based Argumentation with Preferences.
		\newblock In \emph{Proc. KR}, 553--556. {AAAI} Press.
		
		\bibitem[{Dung(1995)}]{Dung95}
		Dung, P.~M. 1995.
		\newblock On the acceptability of arguments and its fundamental role in
		nonmonotonic reasoning, logic programming and n-person games.
		\newblock \emph{Artif. Intell.}, 77(2): 321--357.
		
		\bibitem[{Dung and Thang(2010)}]{DungT10Prob}
		Dung, P.~M.; and Thang, P.~M. 2010.
		\newblock Towards (Probabilistic) Argumentation for Jury-based Dispute
		Resolution.
		\newblock In \emph{Proc. COMMA}, volume 216 of \emph{Frontiers in Artificial
			Intelligence and Applications}, 171--182. {IOS} Press.
		
		\bibitem[{Dung, Thang, and Hung(2010)}]{DungTH10-law}
		Dung, P.~M.; Thang, P.~M.; and Hung, N.~D. 2010.
		\newblock Modular argumentation for modelling legal doctrines of performance
		relief.
		\newblock \emph{Argument Comput.}, 1(1): 47--69.
		
		\bibitem[{Dvor{\'{a}}k, Rapberger, and Woltran(2020)}]{DvorakRW20kr}
		Dvor{\'{a}}k, W.; Rapberger, A.; and Woltran, S. 2020.
		\newblock Argumentation Semantics under a Claim-centric View: Properties,
		Expressiveness and Relation to SETAFs.
		\newblock In \emph{Proc. KR}, 341--350.
		
		\bibitem[{Dvor{\'{a}}k and Woltran(2020)}]{DvorakW20}
		Dvor{\'{a}}k, W.; and Woltran, S. 2020.
		\newblock Complexity of abstract argumentation under a claim-centric view.
		\newblock \emph{Artif. Intell.}, 285: 103290.
		
		\bibitem[{Fan et~al.(2016)Fan, Liu, Zhang, Leung, and Miao}]{FanLZLM16}
		Fan, X.; Liu, S.; Zhang, H.; Leung, C.; and Miao, C. 2016.
		\newblock Explained Activity Recognition with Computational Assumption-Based
		Argumentation.
		\newblock In \emph{Proc. ECAI}, volume 285 of \emph{Frontiers in Artificial
			Intelligence and Applications}, 1590--1591. {IOS} Press.
		
		\bibitem[{Fan and Toni(2014)}]{FanT14}
		Fan, X.; and Toni, F. 2014.
		\newblock A general framework for sound assumption-based argumentation
		dialogues.
		\newblock \emph{Artif. Intell.}, 216: 20--54.
		
		\bibitem[{Garc{\'{\i}}a and Simari(2004)}]{DBLP:journals/tplp/GarciaS04}
		Garc{\'{\i}}a, A.~J.; and Simari, G.~R. 2004.
		\newblock Defeasible Logic Programming: An Argumentative Approach.
		\newblock \emph{Theory Pract. Log. Program.}, 4(1-2): 95--138.
		
		\bibitem[{Gargouri et~al.(2021)Gargouri, Konieczny, Marquis, and
			Vesic}]{gargouri2021notion}
		Gargouri, A.; Konieczny, S.; Marquis, P.; and Vesic, S. 2021.
		\newblock On a notion of monotonic support for bipolar argumentation
		frameworks.
		\newblock In \emph{Proc. AAMAS}.
		
		\bibitem[{Gonzalez et~al.(2021)Gonzalez, Bud{\'a}n, Simari, and
			Simari}]{gonzalez2021labeled}
		Gonzalez, M. G.~E.; Bud{\'a}n, M.~C.; Simari, G.~I.; and Simari, G.~R. 2021.
		\newblock Labeled bipolar argumentation frameworks.
		\newblock \emph{J. Art. Intell. Res.}, 70: 1557--1636.
		
		\bibitem[{Karacapilidis and Papadias(2001)}]{KP01}
		Karacapilidis, N.; and Papadias, D. 2001.
		\newblock Computer supported argumentation and collaborative decision making:
		the {\sc Hermes} system.
		\newblock \emph{Information systems}, 26(4): 259--277.
		
		\bibitem[{Lehtonen et~al.(2023)Lehtonen, Rapberger, Ulbricht, and
			Wallner}]{DBLP:conf/kr/LehtonenR0W23}
		Lehtonen, T.; Rapberger, A.; Ulbricht, M.; and Wallner, J.~P. 2023.
		\newblock Argumentation Frameworks Induced by Assumption-based Argumentation:
		Relating Size and Complexity.
		\newblock In \emph{Proc. KR}, 440--450.
		
		\bibitem[{Lehtonen, Wallner, and J{\"{a}}rvisalo(2022)}]{ABA-DL22}
		Lehtonen, T.; Wallner, J.~P.; and J{\"{a}}rvisalo, M. 2022.
		\newblock Algorithms for Reasoning in a Default Logic Instantiation of
		Assumption-Based Argumentation.
		\newblock In \emph{Proc. COMMA}, volume 353 of \emph{Frontiers in Artificial
			Intelligence and Applications}, 236--247. {IOS} Press.
		
		\bibitem[{Modgil and Prakken(2013)}]{ModgilP13}
		Modgil, S.; and Prakken, H. 2013.
		\newblock A general account of argumentation with preferences.
		\newblock \emph{Artif. Intell.}, 195: 361--397.
		
		\bibitem[{Nouioua and Risch(2010)}]{nouioua2010bipolar}
		Nouioua, F.; and Risch, V. 2010.
		\newblock Bipolar argumentation frameworks with specialized supports.
		\newblock In \emph{Proc. ICTAI}, volume~1, 215--218. IEEE.
		
		\bibitem[{Oren and Norman(2008)}]{oren2008semantics}
		Oren, N.; and Norman, T.~J. 2008.
		\newblock Semantics for Evidence-Based Argumentation.
		\newblock In \emph{Proc. COMMA}, volume 172 of \emph{Frontiers in Artificial
			Intelligence and Applications}, 276--284. {IOS} Press.
		
		\bibitem[{Potyka(2020)}]{potyka2020bipolar}
		Potyka, N. 2020.
		\newblock Bipolar abstract argumentation with dual attacks and supports.
		\newblock In \emph{Proc. KR}, volume~17, 677--686.
		
		\bibitem[{Rapberger(2020)}]{Rapberger20}
		Rapberger, A. 2020.
		\newblock Defining Argumentation Semantics under a Claim-centric View.
		\newblock In \emph{Proc. STAIRS'20}, volume 2655 of \emph{{CEUR} Workshop
			Proceedings}. CEUR-WS.org.
		
		\bibitem[{Rapberger and Ulbricht(2023)}]{DBLP:journals/jair/RapbergerU23}
		Rapberger, A.; and Ulbricht, M. 2023.
		\newblock On Dynamics in Structured Argumentation Formalisms.
		\newblock \emph{J. Artif. Intell. Res.}, 77: 563--643.
		
		\bibitem[{Toni(2013)}]{Toni13}
		Toni, F. 2013.
		\newblock A generalised framework for dispute derivations in assumption-based
		argumentation.
		\newblock \emph{Artif. Intell.}, 195: 1--43.
		
		\bibitem[{Toni(2014)}]{Toni14}
		Toni, F. 2014.
		\newblock A tutorial on assumption-based argumentation.
		\newblock \emph{Argument Comput.}, 5(1): 89--117.
		
	\end{thebibliography}
\end{document}